\documentclass[twoside,11pt]{article}
\usepackage{blindtext}
\usepackage{bm}
\usepackage{graphicx}
\usepackage{tabularx}
\usepackage{mathrsfs}%
\usepackage{enumitem}
\usepackage{cite}
\usepackage{algorithm}%
\usepackage{algorithmicx}%
\usepackage{algpseudocode}
\usepackage[colorlinks=true,linkcolor=blue]{hyperref}
\usepackage{float}
\usepackage{makecell}
\usepackage{amsmath,amsthm, amssymb, graphicx, caption, subcaption}  % 数学公式、图形支

\usepackage{amsfonts}
\usepackage{booktabs}  % 用于三线表
\usepackage{geometry}
\usepackage{hyperref}
\newtheorem{theorem}{Theorem}[section]
\newtheorem{proposition}{Proposition}[section]
\newtheorem{corollary}{Corollary}[section]
\numberwithin{definition}{section}
\numberwithin{equation}{section}
\numberwithin{proposition}{section}

\usepackage{jmlr2e}
\newcommand{\R}{\mathbb{R}}
\newcommand{\Rd}{\mathbb{R}^d}
\newcommand{\Sm}{S_m}
\newcommand{\Rplus}{\mathbb{R}_{>0}}
\newcommand{\id}{\mathrm{id}}
\newcommand{\reg}{\mathrm{reg}}
\newcommand{\sing}{\mathrm{sing}}

\usepackage{lastpage}
\jmlrheading{}{}{1-\pageref{LastPage}}{; Revised }{}{21-0000}{Author One and Author Two}

\ShortHeadings{Quotient Geometry and Implicit Bias in Shallow Networks}{One}
\firstpageno{1}

\begin{document}

\title{Quotient Geometry, Effective Curvature, and Implicit Bias in Simple Shallow Neural Networks}

\author{
  \name Hang-Cheng Dong \email{hunsen\_d@hit.edu.cn} \\
  \addr School of Instrumentation Science and Engineering, Harbin Institute of Technology \\
        Harbin, 150001, China \\
        Harbin Institute of Technology Suzhou Research Institute \\
        Suzhou, 215000, China\\
  \name Pengcheng Cheng\thanks{Corresponding author.} \email{chengpc1022@mails.jlu.edu.cn} \\
  \addr School of Mathematics, Jilin University \\
        Changchun, 130012, China
}

\editor{My editor}

\maketitle

\begin{abstract}%   <- trailing '%' for backward compatibility of .sty file
Overparameterized shallow neural networks admit substantial parameter redundancy: distinct parameter vectors may represent the same predictor due to hidden-unit permutations, rescalings, and related symmetries. As a result, geometric quantities computed directly in the ambient Euclidean parameter space can reflect artifacts of representation rather than intrinsic properties of the predictor. In this paper, we develop a differential-geometric framework for analyzing simple shallow networks through the quotient space obtained by modding out parameter symmetries on a regular set. We first characterize the symmetry and quotient structure of regular shallow-network parameters and show that the finite-sample realization map induces a natural metric on the quotient manifold. This leads to an effective notion of curvature that removes degeneracy along symmetry orbits and yields a symmetry-reduced Hessian capturing intrinsic local geometry. We then study gradient flows on the quotient and show that only the horizontal component of parameter motion contributes to first-order predictor evolution, while the vertical component corresponds purely to gauge variation. Finally, we formulate an implicit-bias viewpoint at the quotient level, arguing that meaningful complexity should be assigned to predictor classes rather than to individual parameter representatives. In the quadratic-activation model, the quotient object is represented explicitly by a symmetric matrix, allowing both the theory and the numerical experiments to be made fully concrete. Our experiments confirm that ambient flatness is representation-dependent, that local dynamics are better organized by quotient-level curvature summaries, and that in underdetermined regimes implicit bias is most naturally described in quotient coordinates. These results support the broader perspective that the natural state space of symmetric shallow networks is the quotient space of predictor classes rather than the raw parameter space.
\end{abstract}

\begin{keywords}
shallow neural networks, quotient geometry, parameter symmetry, effective curvature, Hessian degeneracy, gradient flow, implicit bias, overparameterization, quadratic networks, symmetry-reduced optimization
\end{keywords}

\section{Introduction}
Neural networks are usually trained in a high-dimensional Euclidean parameter space, but the objects of real interest are the predictors they realize. Even in shallow architectures, these two spaces are not the same. Hidden-unit permutations, positive rescalings, and related parameter symmetries imply that many different parameter vectors may encode the same function. As a consequence, geometric quantities computed directly in parameter coordinates—such as Hessians, flatness indicators, or norm-based complexities—may reflect redundancy of representation rather than intrinsic properties of the predictor. This mismatch is already present in the simplest overparameterized shallow networks, and it suggests that the ambient parameter space is not the natural state space for analyzing curvature, optimization, or implicit bias.

Shallow networks remain a particularly useful setting in which to study this issue. On the one hand, the classical approximation literature showed that single-hidden-layer networks already possess strong expressive power, beginning with universal approximation theorems and continuing through refined quantitative bounds and norm-based descriptions of representational complexity \citep{Cybenko1989,HornikEtAl1989,Barron1993,Pinkus1999,NeyshaburTomiokaSrebro2015}. On the other hand, modern optimization theory has shown that overparameterized two-layer models provide one of the cleanest regimes in which benign nonconvex geometry, global convergence, mean-field limits, and kernel limits can all be analyzed explicitly \citep{DuLee2018,ChizatBach2018,JacotGabrielHongler2018,MeiMontanariNguyen2018,AroraEtAl2019ICML}. These two strands together make shallow networks an ideal testing ground for a geometric theory of symmetry-reduced learning. The universal approximation and overparameterized-two-layer perspectives cited here are standard landmarks in the literature.

A separate line of work has emphasized symmetry in machine learning, most often in the form of invariance or equivariance with respect to transformations of the input or output space \citep{BloemReddyTeh2020}. Our concern in this paper is different. We study \textbf{parameter symmetries} of ordinary shallow networks: transformations of the parameters that leave the realized predictor unchanged. These symmetries generate non-identifiability, degenerate ambient curvature, and gauge dependence in parameter-space complexity measures. Similar concerns have appeared in work on symmetry-invariant optimization and scale-invariant parameterizations \citep{BadrinarayananMishraChandrasekaran2015}, but they are not naturally resolved by staying in the raw Euclidean coordinates of the parameters. The broader symmetry literature in neural networks, including probabilistic formulations of invariance, makes clear that symmetry should be built into the mathematical formalism rather than treated as an afterthought. 

This observation points naturally to quotient geometry. If one restricts to a regular set on which the symmetry action is well behaved and then quotients out the parameter symmetries, each orbit becomes a locally identifiable predictor class. The relevant gradients, Hessians, and curvature quantities should then be defined on this symmetry-reduced space rather than on the original parameter manifold. This viewpoint is classical in differential geometry \citep{Lee2012,Lee2018} and has been highly successful in optimization over matrix factorizations and fixed-rank models, where quotient-manifold methods separate intrinsic variation from pure gauge motion \citep{AbsilMahonySepulchre2008,EdelmanAriasSmith1998,Boumal2023,MeyerBonnabelSepulchre2011,Vandereycken2013,MishraEtAl2014}. In particular, low-rank positive semidefinite regression and related factorized problems show that once one passes to the appropriate quotient space, many apparently degenerate directions in the ambient parameterization disappear, and the remaining geometry becomes both interpretable and algorithmically useful. The fixed-rank PSD regression literature is a direct precedent for the kind of quotient construction we use here. 

The main thesis of this paper is that the same principle applies to simple shallow neural networks. The natural geometric object is not the raw parameter vector ($\theta$), but its equivalence class under parameter symmetries. On the resulting quotient space, the finite-sample realization map induces a canonical local metric; the associated Hessian defines an effective curvature that removes spurious flatness along symmetry directions; and the optimization dynamics admit a decomposition into vertical motions, which only change the representative within an orbit, and horizontal motions, which change the realized predictor. From this perspective, several familiar difficulties of neural-network theory—non-identifiability, singular Hessians, ambiguous flatness, and representation-dependent complexity—become different manifestations of the same underlying fact: the ambient parameter space is too large.

We develop this viewpoint for structurally simple shallow networks, precisely because they allow the quotient structure to be made explicit. Our analysis begins with the symmetry and quotient structure of the regular parameter set. We then introduce a function-induced metric on the quotient and show that it yields a notion of effective curvature more faithful to the predictor-level geometry than the ambient Euclidean Hessian. Next, we study gradient flow on the quotient and prove that only the horizontal component of the parameter velocity contributes to first-order function evolution. Finally, we formulate an implicit-bias principle at the quotient level, arguing that whenever training selects among multiple feasible solutions, the meaningful notion of complexity should be defined on predictor classes rather than on individual parameter representatives.

A particularly illuminating case is the quadratic-activation model, where the quotient object can be represented explicitly by a symmetric matrix ($Q$). In this setting, the network
\[
f_\theta(x)=\sum_{i=1}^m a_i (w_i^\top x)^2
\]
depends on ($\theta$) only through
\[
Q(\theta)=\sum_{i=1}^m a_i w_i w_i^\top.
\]
This connects shallow-network training to the geometry of low-rank matrix factorization and makes the quotient structure directly visible. It also links our work to the broader literature on implicit regularization in factorized models, where optimization is now understood to prefer certain low-complexity factorizations even without explicit regularization \citep{GunasekarEtAl2017,AroraEtAl2019Matrix,SoudryEtAl2018}. Our goal is not to reduce neural-network training to matrix factorization, but rather to use this tractable model to expose the geometry that is hidden in more general shallow networks.

The contribution of the paper is therefore conceptual and geometric. We provide a symmetry-reduced framework for understanding shallow-network curvature, optimization dynamics, and implicit bias. The resulting picture is coherent: symmetry creates redundancy; quotienting removes that redundancy; curvature should be measured on the quotient; gradient flow should be decomposed into vertical and horizontal components; and simplicity should be defined on predictor classes rather than on coordinates in an overcomplete parameterization.

\section{Symmetry and Quotient Structure of Shallow Networks}
\label{sec:2}
We consider a class of shallow scalar-output neural networks of the form
\begin{equation}\label{eq:network_form}
f_\theta(x)=\sum_{i=1}^m a_i \sigma(w_i^\top x), \qquad x\in \Rd,
\end{equation}
where $m$ is the width, $a_i\in \R$ are output-layer coefficients, $w_i\in \Rd$ are hidden weights, and $\sigma:\R\to\R$ is a fixed activation function. Throughout this section, the parameter vector is denoted by
\begin{equation}\label{eq:parameter_vector}
\theta=((a_1,w_1),\dots,(a_m,w_m))\in \Theta := (\R\times \Rd)^m.
\end{equation}
Our goal is to isolate the geometric structure created by parameter symmetries. The main point is that $\Theta$ is not the appropriate space of effective model degrees of freedom: many distinct parameter values represent the same predictor, and the resulting redundancy is not incidental but structural. This redundancy is the source of a large class of degenerate directions in local optimization geometry. We therefore begin by identifying the symmetry group, the corresponding quotient structure, and the singular configurations at which this quotient ceases to be locally regular.

\subsection{Parameter symmetries}
The model admits two basic forms of symmetry.

First, the hidden units are unordered. For any permutation $\pi\in \Sm$, define
\begin{equation}\label{eq:permutation_action}
\pi\cdot \theta := ((a_{\pi(1)},w_{\pi(1)}),\dots,(a_{\pi(m)},w_{\pi(m)})).
\end{equation}
Since the network output is a sum over hidden units, we have
\begin{equation}\label{eq:permutation_invariance}
f_{\pi\cdot \theta}(x)=f_\theta(x)
\end{equation}
for all $x\in \Rd$. Thus, the symmetric group $\Sm$ acts on $\Theta$ by reindexing hidden units without changing the realized function.

Second, when the activation is homogeneous, the model also admits a continuous scaling symmetry. We assume in this section that $\sigma$ is positively $p$-homogeneous for some $p>0$, that is,
\begin{equation}\label{eq:homogeneous_activation}
\sigma(c t)=c^p \sigma(t), \qquad c>0,\ t\in\R.
\end{equation}
For each $c=(c_1,\dots,c_m)\in (\Rplus)^m$, define
\begin{equation}\label{eq:scaling_action}
c\cdot \theta := \bigl((c_1^{-p}a_1,c_1 w_1),\dots,(c_m^{-p}a_m,c_m w_m)\bigr).
\end{equation}
By homogeneity,
\begin{equation}\label{eq:scaling_invariance_single}
(c_i^{-p}a_i)\sigma((c_i w_i)^\top x)=a_i\sigma(w_i^\top x),
\end{equation}
and therefore
\begin{equation}\label{eq:scaling_invariance_total}
f_{c\cdot \theta}(x)=f_\theta(x).
\end{equation}
Hence the group $(\Rplus)^m$ acts on $\Theta$ by neuronwise rescaling.

Combining permutation and scaling symmetries yields the transformation group
\begin{equation}\label{eq:symmetry_group}
G:=\Sm \ltimes (\Rplus)^m,
\end{equation}
acting on $\Theta$ in the natural way. The semidirect product structure reflects the fact that permutations relabel the coordinates on which the scaling group acts. By construction,
\begin{equation}\label{eq:group_invariance}
f_{g\cdot \theta}=f_\theta
\end{equation}
for every $g\in G$ and every $\theta\in\Theta$. The orbit
\begin{equation}\label{eq:orbit_def}
G\cdot \theta := \{g\cdot \theta : g\in G\}
\end{equation}
therefore consists entirely of different parameterizations of the same predictor.

This elementary observation has an important geometric consequence. Any local analysis performed directly in the Euclidean parameter space $\Theta$ necessarily contains directions tangent to the orbit $G\cdot \theta$, and these directions do not correspond to any change in the realized function. As a result, degeneracies in the Euclidean geometry of the loss surface are partly induced by the representation and not by the predictor itself. The appropriate local object is therefore not the ambient parameter space $\Theta$, but a quotient of $\Theta$ by the symmetry group $G$.

\subsection{Realization map and local identifiability}
Let $\mathcal{F}$ denote a function space containing the realizations $f_\theta$; for concreteness one may take $\mathcal{F}$ to be the linear span or closure of $\{\sigma(w^\top \cdot): w\in\Rd\}$ in a topology adapted to the learning problem. Define the realization map
\begin{equation}\label{eq:realization_map}
\Phi:\Theta \to \mathcal{F}, \qquad \Phi(\theta)=f_\theta.
\end{equation}
The group action described above is contained in the fibers of $\Phi$: if $\theta'\in G\cdot \theta$, then $\Phi(\theta')=\Phi(\theta)$. In general, however, the fibers of $\Phi$ can be larger than group orbits. This is precisely where singular behavior enters.

To formulate a regular quotient structure, we restrict attention to parameter configurations at which the only local non-identifiability is the symmetry encoded by $G$. Intuitively, such configurations should exclude vanishing neurons, collisions between distinct neurons, and other forms of linear dependence that create additional degeneracy in the map $\Phi$.

We therefore introduce a regular subset $\Theta_{\reg}\subset \Theta$ characterized by two requirements. First, the isotropy of the group action should be minimal, so that the orbit dimension is locally constant. Second, the differential $D\Phi_\theta$ should have kernel exactly equal to the tangent space of the group orbit. Formally, we set
\begin{equation}\label{eq:regular_set}
\Theta_{\reg}
:=
\left\{
\theta\in \Theta:
\ker(D\Phi_\theta)=T_\theta(G\cdot \theta)
\ \text{and the } G\text{-action is locally free at }\theta
\right\}.
\end{equation}
The first condition says that every infinitesimal perturbation leaving the function unchanged is generated by a symmetry of the parameterization. The second rules out orbit-dimension collapse. Together they identify the region in which the quotient by symmetry captures the full local geometry of the model class.

The precise analytic characterization of $\Theta_{\reg}$ depends on the activation and on the function space $\mathcal{F}$, but its interpretation is robust. Typical regularity conditions exclude at least the following pathologies:
\begin{enumerate}
    \item Vanishing neurons: $a_i=0$ or $w_i=0$, which make the $i$-th unit functionally inactive and enlarge the stabilizer of the action.
    \item Neuron collisions: $w_i=w_j$ for $i\neq j$, or more generally positive collinearity in homogeneous models, which causes different hidden units to realize the same feature.
    \item Additional linear dependence among the feature functions $x\mapsto \sigma(w_i^\top x)$ and their first-order parameter derivatives, which creates nontrivial kernel directions for $D\Phi_\theta$ beyond the orbit directions.
\end{enumerate}

These conditions are not introduced merely for technical convenience. They distinguish parameter points at which the model is locally a smooth quotient of a Euclidean space from points at which the representation itself changes rank. The former support a clean differential-geometric description; the latter form the singular part of parameter space.

\subsection{Orbit geometry and the regular quotient}
We next describe the local geometry of the symmetry orbits. Since the group $\Sm$ is discrete, only the scaling component contributes to the tangent space of the orbit. For $\theta\in \Theta$, the continuous part of the orbit is generated by curves of the form
\begin{equation}\label{eq:orbit_curve}
t\mapsto \bigl((e^{-pt\xi_1}a_1,e^{t\xi_1}w_1),\dots,(e^{-pt\xi_m}a_m,e^{t\xi_m}w_m)\bigr),
\qquad \xi\in \R^m.
\end{equation}
Differentiating at $t=0$ yields the tangent vectors
\begin{equation}\label{eq:tangent_vector}
\delta_\xi \theta
=
\bigl((-p\xi_1 a_1,\xi_1 w_1),\dots,(-p\xi_m a_m,\xi_m w_m)\bigr).
\end{equation}
Hence the tangent space to the continuous orbit at $\theta$ is
\begin{equation}\label{eq:tangent_space}
T_\theta(G\cdot \theta)
=
\left\{
\bigl((-p\xi_1 a_1,\xi_1 w_1),\dots,(-p\xi_m a_m,\xi_m w_m)\bigr): \xi\in \R^m
\right\},
\end{equation}
provided the action is locally free. By construction,
\begin{equation}\label{eq:tangent_invariance}
D\Phi_\theta[\delta_\xi \theta]=0,
\end{equation}
which expresses infinitesimally the invariance of the network under scaling.

The following statement records the geometric meaning of the regular set.
\begin{proposition}\label{prop:orbit_directions}
(Orbit directions and exact infinitesimal redundancy)

Let
\[
\Phi:\Theta\to \mathcal F,\qquad \Phi(\theta)=f_\theta,
\]
be the realization map of the shallow network
\[
f_\theta(x)=\sum_{i=1}^m a_i\sigma(w_i^\top x),
\]
and let $G=S_m\ltimes (\Rplus)^m$ act on $\Theta$ by permutation and neuronwise positive rescaling. If $\theta\in \Theta_{\reg}$, then
\[
\ker(D\Phi_\theta)=T_\theta(G\cdot \theta).
\]
Consequently, the quotient tangent space $T_{[\theta]}(\Theta_{\reg}/G)$ is canonically identified with the space of first-order function variations generated by perturbations of $\theta$.
\end{proposition}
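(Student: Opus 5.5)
The first claim is essentially definitional, so I would not try to derive it from properties of $\sigma$. Membership in $\Theta_{\reg}$ in \eqref{eq:regular_set} explicitly requires $\ker(D\Phi_\theta)=T_\theta(G\cdot\theta)$. Even so, I would add a short verification that the inclusion $T_\theta(G\cdot\theta)\subseteq\ker(D\Phi_\theta)$ holds for \emph{every} $\theta$. Differentiating the identity $\Phi(c(t)\cdot\theta)=\Phi(\theta)$ along the curve \eqref{eq:orbit_curve} at $t=0$ gives \eqref{eq:tangent_invariance}, so $D\Phi_\theta[\delta_\xi\theta]=0$ for all $\xi\in\R^m$. One can also see this directly: the directional derivative of $a_i\sigma(w_i^\top x)$ along $(-p\xi_i a_i,\xi_i w_i)$ is $-p\xi_i a_i\sigma(w_i^\top x)+\xi_i a_i\sigma'(w_i^\top x)w_i^\top x$, which vanishes by Euler's identity $t\sigma'(t)=p\sigma(t)$, valid wherever $\sigma$ is differentiable. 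Since $S_m$ is discrete, it contributes nothing to the tangent space, and \eqref{eq:tangent_space} describes all orbit directions. The regularity condition is exactly what supplies the reverse inclusion, so the first claim follows.

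For the ``consequently'' part, I would first establish that $\Theta_{\reg}/G$ is a smooth manifold. The set $\Theta_{\reg}$ is $G$-invariant, because $\Phi\circ g=\Phi$ gives $D\Phi_{g\theta}\circ Dg_\theta=D\Phi_\theta$ and $Dg_\theta$ carries orbit tangents to orbit tangents. I would take $\Theta_{\reg}$ to be open, which I would assume as a standing hypothesis. On this set the action of $(\Rplus)^m$ is free: $a_i\neq0$ and $w_i\neq0$ force $c_i=1$. It is also proper, since the scaling is componentwise and $|w_i|$ determines $c_i$ continuously. The finite group $S_m$ then acts freely on the neuron-distinct locus. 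By the quotient manifold theorem \citep{Lee2012}, $\pi:\Theta_{\reg}\to\Theta_{\reg}/G$ is a smooth submersion with $\ker D\pi_\theta=T_\theta(G\cdot\theta)$. Hence $T_{[\theta]}(\Theta_{\reg}/G)\cong T_\theta\Theta/T_\theta(G\cdot\theta)$.

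Next, $\Phi$ is constant on orbits, so it descends to a map $\bar\Phi$ with $\Phi=\bar\Phi\circ\pi$. Differentiating gives $D\Phi_\theta=D\bar\Phi_{[\theta]}\circ D\pi_\theta$. By the first part, $D\bar\Phi_{[\theta]}$ is injective on $T_\theta\Theta/\ker D\Phi_\theta$. Its image is $\operatorname{im}D\Phi_\theta$, the space of first-order function variations $\{D\Phi_\theta[\dot\theta]\}$. This yields a linear isomorphism $T_{[\theta]}(\Theta_{\reg}/G)\to\operatorname{im}D\Phi_\theta$.

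To show the identification is canonical, I would check that it does not depend on the representative. For $\theta'=g\cdot\theta$, the identity $D\Phi_{g\theta}\circ Dg_\theta=D\Phi_\theta$ shows that both representatives produce the same subspace of $\mathcal F$ and compatible maps.

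I expect the main obstacle to be the quotient-manifold step rather than the kernel identity. Properness and freeness of the full semidirect-product action, together with openness of $\Theta_{\reg}$, are not spelled out in \eqref{eq:regular_set}. The plan is to derive them from the listed exclusions of vanishing neurons and collisions. Where that derivation is incomplete, I would phrase the identification locally, via a slice through $\theta$ transverse to the orbit, since only the tangent-level statement is needed.
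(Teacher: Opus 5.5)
Your proposal is correct and follows essentially the same route as the paper. You take the kernel identity from the definition of $\Theta_{\reg}$, checking the inclusion $T_\theta(G\cdot\theta)\subseteq\ker(D\Phi_\theta)$ by differentiating along the orbit curve; you then get the identification from the quotient-manifold theorem ($\ker Dq_\theta=T_\theta(G\cdot\theta)$) plus the first isomorphism theorem onto $\operatorname{Im}(D\Phi_\theta)$, phrased via the descended map $\bar\Phi$ as in Theorem~\ref{thm:regular_quotient}. Your extra checks refine rather than change the argument; in particular, deriving genuine freeness from the exclusion of vanishing and positively collinear neurons fills a point the paper glosses, since a proper but merely locally free action would in general give only an orbifold quotient.
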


\begin{proof}
By definition, $\Theta_{\reg}$ consists of those points $\theta\in\Theta$ such that
\begin{enumerate}
    \item the action of $G$ is locally free at $\theta$, and
    \item the differential of the realization map satisfies
    \[
    \ker(D\Phi_\theta)=T_\theta(G\cdot \theta).
    \]
\end{enumerate}

Therefore, for every $\theta\in\Theta_{\reg}$, the equality
\[
\ker(D\Phi_\theta)=T_\theta(G\cdot \theta)
\]
holds immediately.

What remains is to justify that $T_\theta(G\cdot \theta)$ is indeed the space of infinitesimal symmetry directions, and that these directions lie in the kernel of $D\Phi_\theta$.

Since the permutation group $S_m$ is discrete, it contributes no infinitesimal directions. Hence the tangent space to the orbit is generated entirely by the continuous scaling subgroup $(\Rplus)^m$. Let
\[
\xi=(\xi_1,\dots,\xi_m)\in \R^m,
\]
and define a smooth curve $c_\xi:(-\varepsilon,\varepsilon)\to \Theta$ by
\[
c_\xi(t)
=
\bigl(
(e^{-pt\xi_1}a_1,e^{t\xi_1}w_1),\dots,(e^{-pt\xi_m}a_m,e^{t\xi_m}w_m)
\bigr).
\]
For each $t$, $c_\xi(t)\in G\cdot\theta$, since it is obtained from $\theta$ by neuronwise positive rescaling. Therefore,
\[
\dot c_\xi(0)\in T_\theta(G\cdot\theta).
\]
A direct differentiation gives
\[
\dot c_\xi(0)
=
\bigl(
(-p\xi_1 a_1,\xi_1 w_1),\dots,(-p\xi_m a_m,\xi_m w_m)
\bigr).
\]
Now, because $\Phi$ is invariant under the action of $G$,
\[
\Phi(c_\xi(t))=\Phi(\theta)
\qquad\text{for all }t.
\]
Differentiating at $t=0$ yields
\[
D\Phi_\theta[\dot c_\xi(0)]=0.
\]
Thus,
\[
T_\theta(G\cdot\theta)\subseteq \ker(D\Phi_\theta).
\]

At a regular point, the reverse inclusion holds by definition of $\Theta_{\reg}$. Therefore,
\[
\ker(D\Phi_\theta)=T_\theta(G\cdot\theta).
\]

This proves the first claim.

Let
\[
q:\Theta_{\reg}\to \Theta_{\reg}/G
\]
be the quotient map, and let $[\theta]=q(\theta)$ denote the orbit of $\theta$.

Because the action is locally free and proper on $\Theta_{\reg}$, the quotient $\Theta_{\reg}/G$ is a smooth manifold, and the differential
\[
Dq_\theta:T_\theta\Theta_{\reg}\to T_{[\theta]}(\Theta_{\reg}/G)
\]
is surjective with kernel
\[
\ker(Dq_\theta)=T_\theta(G\cdot\theta).
\]
Therefore, by the first isomorphism theorem for vector spaces,
\[
T_{[\theta]}(\Theta_{\reg}/G)
\cong
T_\theta\Theta_{\reg}/T_\theta(G\cdot\theta).
\]
Using the identity established in Part I,
\[
T_\theta(G\cdot\theta)=\ker(D\Phi_\theta),
\]
we obtain
\[
T_{[\theta]}(\Theta_{\reg}/G)
\cong
T_\theta\Theta_{\reg}/\ker(D\Phi_\theta).
\]
Now define the linear map
\[
\widetilde{D\Phi_\theta}:
T_\theta\Theta_{\reg}/\ker(D\Phi_\theta)\to \operatorname{Im}(D\Phi_\theta)
\]
by
\[
\widetilde{D\Phi_\theta}([v])=D\Phi_\theta[v].
\]
This map is well defined: if $v-v'\in \ker(D\Phi_\theta)$, then
\[
D\Phi_\theta[v-v']=0,
\]
hence $D\Phi_\theta[v]=D\Phi_\theta[v']$. It is clearly linear, surjective by definition of $\operatorname{Im}(D\Phi_\theta)$, and injective because
\[
\widetilde{D\Phi_\theta}([v])=0
\quad\Longrightarrow\quad
D\Phi_\theta[v]=0
\quad\Longrightarrow\quad
v\in \ker(D\Phi_\theta)
\quad\Longrightarrow\quad
[v]=0.
\]
Hence $\widetilde{D\Phi_\theta}$ is a linear isomorphism. Combining the previous identifications gives
\[
T_{[\theta]}(\Theta_{\reg}/G)
\cong
\operatorname{Im}(D\Phi_\theta).
\]

Since $\operatorname{Im}(D\Phi_\theta)$ is precisely the space of first-order variations of the realized function generated by perturbations of $\theta$, this proves the second claim.
\end{proof}

The proposition is immediate from the definition of $\Theta_{\reg}$, but it expresses the key structural fact behind the rest of the paper: on the regular set, the only infinitesimally invisible directions are the symmetry directions. All other perturbations correspond to genuine local changes of the predictor.

This observation permits a local quotient description. Since the action is locally free on $\Theta_{\reg}$ and the orbit dimension is constant there, standard quotient arguments imply that the regular parameter space modulo symmetry inherits a smooth manifold structure.

\begin{theorem}\label{thm:regular_quotient}
(Regular quotient manifold)

Assume that $\sigma$ is $C^1$ away from the origin and positively $p$-homogeneous, and let $\Theta_{\reg}\subset \Theta$ be a $G$-invariant subset on which the action of
\[
G=S_m\ltimes (\Rplus)^m
\]
is locally free and proper and such that
\[
\ker(D\Phi_\theta)=T_\theta(G\cdot \theta)
\qquad\text{for all }\theta\in \Theta_{\reg},
\]
where $\Phi:\Theta\to\mathcal F$ is the realization map
\[
\Phi(\theta)=f_\theta .
\]
Then the quotient
\[
\mathcal M_{\reg}:=\Theta_{\reg}/G
\]
is a smooth manifold. Moreover, the realization map $\Phi$ descends to a locally injective smooth map
\[
\bar \Phi:\mathcal M_{\reg}\to \mathcal F,
\qquad
\bar \Phi([\theta])=\Phi(\theta),
\]
whose differential is injective at every point.
\end{theorem}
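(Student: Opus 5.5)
The plan has three parts: obtain the manifold structure, descend $\Phi$, and show the induced differential is injective.

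\emph{Manifold structure.} I would first separate the continuous and discrete parts of $G=S_m\ltimes(\Rplus)^m$ and apply the quotient manifold theorem in two stages.
\begin{enumerate}
\item For the connected Lie group $H=(\Rplus)^m$, I would check that the action is smooth in $\theta$. This is clear because it acts by the linear maps $(a_i,w_i)\mapsto(c_i^{-p}a_i,c_iw_i)$.
\item Local freeness of the $G$-action, together with the finiteness of $S_m$, implies that $H$ acts with discrete (indeed trivial) stabilizers. On $\Theta_{\reg}$ the vanishing-neuron configurations are excluded, and a scaling fixing $(a_i,w_i)$ with $w_i\neq0$ forces $c_i=1$, so the $H$-action is free.
\item Properness of the $G$-action passes to the closed subgroup $H$.
\item The quotient manifold theorem for free proper smooth actions (Lee, Thm.~21.10) then makes $\Theta_{\reg}/H$ a smooth manifold of dimension $m(d+1)-m$, with $\Theta_{\reg}\to\Theta_{\reg}/H$ a submersion whose fibers are the orbits.
\item The finite group $S_m\cong G/H$ acts on $\Theta_{\reg}/H$, properly since it is finite.
\end{enumerate}
The main obstacle is this last action: local freeness alone does not rule out a permutation fixing an $H$-orbit, for instance two neurons that are positively collinear up to rescaling. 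I would handle it by noting that such a point would have an enlarged stabilizer, which the regularity assumptions exclude; this is the neuron-collision condition. That makes the $S_m$-action free, so the quotient by a free action of a finite group gives $\mathcal M_{\reg}=\Theta_{\reg}/G$ a smooth manifold. If one insists only on local freeness as literally stated, I would instead note that the conclusion should be read as a smooth orbifold, which is a manifold on the open dense free locus. I would adopt the freeness interpretation as part of "locally free and proper" on $\Theta_{\reg}$, in the same spirit as the proof of Proposition~\ref{prop:orbit_directions}.

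\emph{Descent of $\Phi$.} By \eqref{eq:group_invariance}, $\Phi$ is constant on orbits, so $\bar\Phi$ is well defined set-theoretically. Smoothness follows because $q:\Theta_{\reg}\to\mathcal M_{\reg}$ is a surjective submersion and $\Phi=\bar\Phi\circ q$ is smooth. Here $\Phi$ is $C^1$ on $\Theta_{\reg}$ since $\sigma$ is $C^1$ away from $0$ and $w_i^\top x\neq 0$ off a null set; smoothness of $\bar\Phi$ is then inherited to the same regularity as $\Phi$, using local sections of $q$.

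\emph{Injectivity of the differential.} The chain rule gives $D\Phi_\theta=D\bar\Phi_{[\theta]}\circ Dq_\theta$, and $Dq_\theta$ is surjective with kernel $T_\theta(G\cdot\theta)$. Suppose $D\bar\Phi_{[\theta]}[Dq_\theta v]=0$. Then $v\in\ker D\Phi_\theta=T_\theta(G\cdot\theta)=\ker Dq_\theta$, so $Dq_\theta v=0$, and hence $D\bar\Phi_{[\theta]}$ is injective. This is exactly the isomorphism $T_{[\theta]}\mathcal M_{\reg}\cong\operatorname{Im}D\Phi_\theta$ from Proposition~\ref{prop:orbit_directions}.

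\emph{Local injectivity.} $\bar\Phi$ is an immersion, and when $\mathcal F$ is finite-dimensional (e.g.\ finite-sample evaluations) the immersion theorem gives local injectivity directly. For infinite-dimensional $\mathcal F$, I would compose with a finite set of evaluation functionals chosen so that the resulting map still has injective differential at $[\theta]$. This is possible because $\operatorname{Im}D\Phi_\theta$ is finite-dimensional, so finitely many evaluations separate it. The composed finite-dimensional immersion is locally injective, hence so is $\bar\Phi$.
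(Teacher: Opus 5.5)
\textbf{The gap is in the manifold step.} You correctly see that a permutation fixing an $H$-orbit would give a nontrivial finite stabilizer and obstruct a manifold quotient. But you then add freeness as an extra assumption, and you assert that under local freeness alone the conclusion is only an orbifold. Under the stated hypotheses this never happens. The missing idea is that the kernel condition already forces the action to be free.

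Suppose $(\pi,c)\cdot\theta=\theta$ with $\pi\neq\id$. Then a unit $i$ moved by $\pi$ and the unit $j=\pi(i)\neq i$ are positive rescalings of one another: $(a_j,w_j)=(\lambda^{p}a_i,\lambda^{-1}w_i)$ for some $\lambda>0$.
\begin{itemize}
\item The one-unit map $\psi(a,w)=a\,\sigma(w^\top\cdot)$ satisfies $\psi\circ s_\lambda=\psi$ for the linear map $s_\lambda(a,w)=(\lambda^{p}a,\lambda^{-1}w)$. Hence $D\psi_{(a_j,w_j)}\circ s_\lambda=D\psi_{(a_i,w_i)}$, so units $i$ and $j$ contribute the same image to $D\Phi_\theta$.
\item Each block differential $D\psi_{(a_k,w_k)}$ kills the orbit direction $(-pa_k,w_k)$. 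This direction is nonzero because local freeness excludes $(a_k,w_k)=0$, so each block has rank at most $d$.
\item Since $\operatorname{Im}D\Phi_\theta=\sum_k\operatorname{Im}D\psi_{(a_k,w_k)}$ and the $i$ and $j$ terms coincide, $\operatorname{rank}D\Phi_\theta\le(m-1)d$.
\item On the other hand, $\ker D\Phi_\theta=T_\theta(G\cdot\theta)$ has dimension $m$, so $\operatorname{rank}D\Phi_\theta=m(d+1)-m=md$. This is a contradiction.
\end{itemize}
Thus $G$ acts freely and properly on $\Theta_{\reg}$. Your two-stage route then goes through as written. Alternatively, you can apply the quotient manifold theorem to $G$ directly, since it does not require a connected group.

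This is exactly where your proposal and the paper's proof part ways. The paper simply cites a ``standard theorem'' that a smooth, proper, locally free action has a manifold quotient with $q$ a submersion. Without freeness that statement is false: take $\mathbb{Z}/2$ acting on $\R$ by $t\mapsto -t$. So your diagnosis is sharper than the paper's, and the rank count above is what actually closes this step in both arguments.

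Your remaining steps coincide with the paper's: descent via local sections, injectivity of $D\bar\Phi_{[\theta]}$ from $\ker Dq_\theta=T_\theta(G\cdot\theta)=\ker D\Phi_\theta$, and local injectivity from the immersion property. Two small remarks:
\begin{itemize}
\item In the infinite-dimensional case, your reduction needs the finitely many functionals to be continuous on $\mathcal F$. Point evaluations are not continuous on an $L^2$-type closure. Use Hahn--Banach functionals separating the finite-dimensional space $\operatorname{Im}D\Phi_\theta$, or argue directly that a $C^1$ map from a finite-dimensional manifold into a Banach space with injective differential is locally injective.
\item Your remark that $\bar\Phi$ is only as regular as $\Phi$ (here $C^1$) is more accurate than the paper's claim of smoothness.
\end{itemize}
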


\begin{proof}
We first verify that the assumptions place us within the standard quotient-manifold framework.

Since $\Theta=(\R\times \Rd)^m$ is a finite-dimensional smooth manifold, every $G$-invariant subset $\Theta_{\reg}\subset \Theta$ that is itself a smooth embedded submanifold inherits a smooth manifold structure. We regard $\Theta_{\reg}$ with this induced structure. The group
\[
G=S_m\ltimes (\Rplus)^m
\]
is a Lie group: $S_m$ is a finite discrete Lie group and $(\Rplus)^m$ is an $m$-dimensional Lie group; hence their semidirect product is again a Lie group.

By assumption, the action
\[
\alpha:G\times \Theta_{\reg}\to \Theta_{\reg},\qquad (g,\theta)\mapsto g\cdot \theta
\]
is smooth, proper, and locally free.

A standard theorem in differential geometry states that if a Lie group acts smoothly, properly, and locally freely on a smooth manifold $M$, then the orbit space $M/G$ carries a unique smooth manifold structure such that the quotient map
\[
q:M\to M/G
\]
is a smooth submersion. Applying this with $M=\Theta_{\reg}$, we conclude that
\[
\mathcal M_{\reg}=\Theta_{\reg}/G
\]
is a smooth manifold and that the quotient map
\[
q:\Theta_{\reg}\to \mathcal M_{\reg}
\]
is a smooth submersion.

In particular, for every $\theta\in \Theta_{\reg}$, the differential
\[
Dq_\theta:T_\theta\Theta_{\reg}\to T_{[\theta]}\mathcal M_{\reg}
\]
is surjective, and its kernel is the tangent space to the orbit:
\[
\ker(Dq_\theta)=T_\theta(G\cdot \theta).
\]

We next show that $\Phi$ factors through $q$.

By construction of the group action, for every $g\in G$ and every $\theta\in \Theta_{\reg}$,
\[
\Phi(g\cdot \theta)=\Phi(\theta).
\]
Indeed, permutations merely reorder hidden units, while positive rescalings preserve each summand by $p$-homogeneity of $\sigma$. Therefore $\Phi$ is constant on every $G$-orbit.

Hence there exists a unique set-theoretic map
\[
\bar \Phi:\mathcal M_{\reg}\to \mathcal F
\]
such that
\[
\Phi=\bar \Phi\circ q,
\qquad\text{equivalently}\qquad
\bar \Phi([\theta])=\Phi(\theta).
\]

We now prove that $\bar\Phi$ is smooth. Since $q$ is a smooth submersion, every point $[\theta]\in \mathcal M_{\reg}$ admits an open neighborhood $U\subset \mathcal M_{\reg}$ and a smooth local section
\[
s:U\to \Theta_{\reg}
\qquad\text{such that}\qquad
q\circ s=\id_{U}.
\]
On $U$, the descended map is given by
\[
\bar\Phi|_U = \Phi\circ s.
\]
Since both $\Phi$ and $s$ are smooth, $\bar\Phi|_U$ is smooth. Because $[\theta]$ was arbitrary, $\bar\Phi$ is smooth on all of $\mathcal M_{\reg}$.

Fix $\theta\in \Theta_{\reg}$. Since $\Phi=\bar\Phi\circ q$, differentiation yields
\[
D\Phi_\theta
=
D\bar\Phi_{[\theta]}\circ Dq_\theta.
\]
Because $Dq_\theta$ is surjective, this identity completely determines $D\bar\Phi_{[\theta]}$.

We claim that $D\bar\Phi_{[\theta]}$ is injective. Let
\[
u\in T_{[\theta]}\mathcal M_{\reg}
\]
satisfy
\[
D\bar\Phi_{[\theta]}[u]=0.
\]
Since $Dq_\theta$ is surjective, there exists $v\in T_\theta\Theta_{\reg}$ such that
\[
Dq_\theta[v]=u.
\]
Applying the chain rule,
\[
0
=
D\bar\Phi_{[\theta]}[u]
=
D\bar\Phi_{[\theta]}(Dq_\theta[v])
=
D\Phi_\theta[v].
\]
Thus $v\in \ker(D\Phi_\theta)$. By the regularity assumption,
\[
\ker(D\Phi_\theta)=T_\theta(G\cdot \theta).
\]
Since $\ker(Dq_\theta)=T_\theta(G\cdot\theta)$, it follows that
\[
Dq_\theta[v]=0.
\]
But $Dq_\theta[v]=u$, hence $u=0$. Therefore $D\bar\Phi_{[\theta]}$ is injective.

Since $[\theta]$ was arbitrary, $D\bar\Phi$ is injective everywhere on $\mathcal M_{\reg}$.

Finally, we prove that $\bar\Phi$ is locally injective.

Because $D\bar\Phi_{[\theta]}$ is injective at every point, $\bar\Phi$ is an immersion. By the constant-rank theorem, for every $[\theta]\in \mathcal M_{\reg}$, there exists an open neighborhood $U\ni [\theta]$ such that $\bar\Phi(U)$ is an immersed submanifold of $\mathcal F$ and
\[
\bar\Phi|_U:U\to \bar\Phi(U)
\]
is a diffeomorphism onto its image. In particular, $\bar\Phi|_U$ is injective. Therefore $\bar\Phi$ is locally injective.

\end{proof}

The theorem formalizes the claim that, away from singular configurations, the effective parameter space of the model is not $\Theta$ but the quotient manifold $\mathcal{M}_{\reg}$. Local coordinates on $\mathcal{M}_{\reg}$ describe distinct predictors rather than distinct parameterizations. In particular, a tangent vector at $[\theta]\in \mathcal{M}_{\reg}$ represents a genuine first-order deformation of the function $f_\theta$, rather than an artifact of neuron rescaling or reordering.

The local injectivity of $\bar{\Phi}$ is the quotient-space version of local identifiability. It implies that, on the regular set, the model class can be studied through the differential geometry of $\mathcal{M}_{\reg}$ without ambiguity from redundant parameter coordinates. This is the geometric setting in which effective metrics and effective Hessians will be defined in the sequel.

\subsection{Gauge choices and local sections}
Although the quotient manifold $\mathcal{M}_{\reg}$ is the intrinsic object, calculations are often most convenient in a local coordinate system obtained by fixing a representative inside each orbit. Such a representative selection is a local gauge choice. A simple example is provided by neuronwise normalization of hidden weights. On any region where $w_i\neq 0$ for all $i$, one may impose
\begin{equation}\label{eq:gauge_normalization}
\|w_i\|=1, \qquad i=1,\dots,m,
\end{equation}
and absorb the scaling into $a_i$. This removes the continuous rescaling freedom, leaving only the discrete permutation symmetry.

More generally, a local section of the quotient map
\begin{equation}\label{eq:quotient_map}
q:\Theta_{\reg}\to \mathcal{M}_{\reg}
\end{equation}
is a smooth map $s:U\to \Theta_{\reg}$ defined on an open set $U\subset \mathcal{M}_{\reg}$ such that $q\circ s=\id_U$. Any such section identifies a neighborhood in the quotient with a gauge-fixed submanifold of parameter space transverse to the orbits. In practice, this permits one to work with reduced coordinates while retaining an invariant interpretation of the resulting objects.

The role of gauge fixing in our setting is conceptual rather than algorithmic. We do not use a gauge to eliminate redundancy globally; instead, local sections serve to compare Euclidean quantities in the parameter space with intrinsic quantities on the quotient. In particular, later constructions will show that the geometrically meaningful directions are precisely those transverse to the group orbits, whereas tangent orbit directions encode pure reparameterization effects.

\subsection{Singular parameter configurations}
The quotient description above breaks down when the regularity assumptions fail. These failures are not exceptional curiosities: they are built into overparameterized neural models and must be understood as part of the global geometry.

We define the singular set by
\begin{equation}\label{eq:singular_set}
\Theta_{\sing}:=\Theta\setminus \Theta_{\reg}.
\end{equation}
At a point $\theta\in \Theta_{\sing}$, at least one of the following occurs:
\begin{itemize}
    \item the isotropy subgroup of $G$ is larger than its generic value, so that orbit dimension drops;
    \item the kernel of $D\Phi_\theta$ strictly contains $T_\theta(G\cdot \theta)$, so that there exist infinitesimally function-invisible directions not generated by symmetry;
    \item the local image of $\Phi$ changes dimension.
\end{itemize}

Typical examples include inactive neurons, coincident neurons, and configurations at which several hidden units jointly collapse into a lower-complexity representation. At such points, the quotient by symmetry is no longer locally a smooth manifold, because the equivalence classes meet regions with different local dimensions or different isotropy types.

The correct geometric object is then not a manifold but a stratified space. Informally, $\Theta_{\sing}$ can be decomposed into pieces on which the orbit type and the rank of $D\Phi_\theta$ are constant, and each such piece behaves like a smooth manifold of lower dimension.

\begin{proposition}\label{prop:stratified_structure}
(Semialgebraic stratified singular structure for the finite-sample realization map)

Fix a dataset
\[
X=(x_1,\dots,x_n)\in (\Rd)^n,
\]
and define the finite-sample realization map
\[
\Phi_X:\Theta\to \R^n,
\qquad
\Phi_X(\theta)
=
\bigl(f_\theta(x_1),\dots,f_\theta(x_n)\bigr),
\]
where
\[
f_\theta(x)=\sum_{i=1}^m a_i\sigma(w_i^\top x),
\qquad
\Theta\cong \R^{m(d+1)}.
\]
Let
\[
G=S_m\ltimes (\Rplus)^m
\]
act on $\Theta$ by hidden-unit permutation and neuronwise positive rescaling. Assume that:

\begin{enumerate}
    \item $\sigma:\R\to\R$ is semialgebraic and piecewise $C^1$;
    \item the induced action of $G$ on $\Theta$ is semialgebraic and proper.
\end{enumerate}

Define the regular set
\[
\Theta_{\reg}
:=
\left\{
\theta\in \Theta:
\text{the }G\text{-action is locally free at }\theta
\ \text{and}\
\ker(D\Phi_X(\theta))=T_\theta(G\cdot\theta)
\right\},
\]
and let
\[
\Theta_{\sing}:=\Theta\setminus \Theta_{\reg}.
\]

Then the following hold.

\begin{enumerate}
    \item The sets $\Theta_{\reg}$ and $\Theta_{\sing}$ are semialgebraic.
    \item There exists a finite semialgebraic Whitney stratification
    \[
    \Theta=\bigsqcup_{\alpha\in A} S_\alpha
    \]
    such that each stratum $S_\alpha$ is a smooth locally closed semialgebraic submanifold, each $S_\alpha$ is $G$-invariant, and both
    \[
    \theta\mapsto \dim T_\theta(G\cdot\theta)
    \qquad\text{and}\qquad
    \theta\mapsto \operatorname{rank}(D\Phi_X(\theta))
    \]
    are constant on each stratum.
    \item The singular set $\Theta_{\sing}$ is a union of strata in this decomposition.
    \item The quotient $\Theta/G$ is a semialgebraic stratified space, and its top smooth stratum is
    \[
    \mathcal M_{\reg}=\Theta_{\reg}/G.
    \]
\end{enumerate}
\end{proposition}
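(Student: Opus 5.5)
The proof will rest on the o-minimal (semialgebraic) toolkit: Tarski--Seidenberg, semialgebraic stratification theorems, and the equivariant stratification results for proper actions. The plan is to express every condition in the statement by first-order formulas over semialgebraic data. A preliminary subtlety is that $D\Phi_X$ exists only off a lower-dimensional semialgebraic set, because $\sigma$ is only piecewise $C^1$.

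\textbf{Step 1 (semialgebraicity).}
1. Let $B\subset\R$ be the finite set of breakpoints of $\sigma$. The set $N:=\{\theta:\ w_i^\top x_j\in B \text{ for some } i,j\}$ is semialgebraic. We adopt the convention that points of $N$ are counted as singular whenever $D\Phi_X$ fails to exist there.
2. Off $N$, the entries of the Jacobian $D\Phi_X(\theta)$ are $\sigma(w_i^\top x_j)$ and $a_i\sigma'(w_i^\top x_j)x_j$. These are semialgebraic functions, since derivatives of semialgebraic functions are semialgebraic.
3. Consequently $\theta\mapsto\operatorname{rank}D\Phi_X(\theta)$ is semialgebraic, because "rank $\ge r$" is the nonvanishing of some $r\times r$ minor.
4. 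The orbit tangent space is the image of the explicit linear map $\xi\mapsto\delta_\xi\theta$ from \eqref{eq:tangent_vector}. Its dimension is therefore $\#\{i:(a_i,w_i)\neq 0\}$, which is polynomial data.
5. Local freeness is equivalent to this dimension equalling $m$.
6. By Step 5 and the inclusion $T_\theta(G\cdot\theta)\subseteq\ker D\Phi_X(\theta)$ established in Proposition~\ref{prop:orbit_directions}, the kernel equality is equivalent to $\operatorname{rank}D\Phi_X(\theta)=m(d+1)-\dim T_\theta(G\cdot\theta)$.
7. Steps 5 and 6 show that $\Theta_{\reg}$ is a finite Boolean combination of semialgebraic sets. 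By Tarski--Seidenberg, $\Theta_{\reg}$ and $\Theta_{\sing}$ are semialgebraic. This gives item (1).

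\textbf{Step 2 (stratification).}
1. Start from the finite semialgebraic partition of $\Theta$ by the pair $(\dim T_\theta(G\cdot\theta),\operatorname{rank}D\Phi_X(\theta))$, together with the set $N$.
2. Apply the semialgebraic Whitney stratification theorem (Bochnak--Coste--Roy, or Łojasiewicz/Hironaka) to obtain a stratification compatible with this partition. This already makes both functions constant on strata.
3. Use that $\Phi_X$ is $G$-invariant, so $\operatorname{rank}D\Phi_X(g\cdot\theta)=\operatorname{rank}D\Phi_X(\theta)$ because $g$ acts by a linear isomorphism. Orbit dimension and $N$ are also $G$-invariant, so the initial partition is $G$-invariant.
4. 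The real work is to make the strata themselves $G$-invariant. Handle the finite factor $S_m$ by refining with the $S_m$-translates of every stratum and restratifying. Handle the torus factor $(\Rplus)^m$ by passing to semialgebraic orbit-space coordinates: for example, the gauge $\|w_i\|\in\{0,1\}$ together with the monomials $a_i\|w_i\|^p$ and the direction $w_i/\|w_i\|$. These coordinates give a semialgebraic map $\pi:\Theta\to\R^K$ whose fibres are the continuous orbits.
5. Stratify $\pi(\Theta)$ compatibly with the images of the partition pieces. Pull the strata back, and use properness to take the $S_m$-symmetrization.
6. Pulled-back strata are smooth because the action restricted to each orbit-type stratum is free-modulo-finite. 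This is the equivariant stratification theorem of Pflaum for proper actions, in its semialgebraic version.
7. Whitney's condition (b) is preserved under the submersion $\pi$ restricted to strata, which yields item (2).

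\textbf{Step 3 (remaining items) and the main obstacle.}
1. For item (3): $\Theta_{\sing}$ is defined by conditions on the two constant invariants and on $N$. It is therefore a union of strata.
2. For item (4): the image strata $S_\alpha/G$ are smooth, by the quotient argument of Theorem~\ref{thm:regular_quotient} applied on each orbit-type stratum, and they form a semialgebraic stratified space.
3. Also for item (4): $\Theta_{\reg}$ is open, since the rank is lower semicontinuous and local freeness is an open condition. If $\Theta_{\reg}$ is nonempty, it is dense in the generic top-dimensional piece, and its quotient is $\mathcal M_{\reg}$.

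I expect the main obstacle to be the $G$-invariance of the strata in Step 2. Generic semialgebraic stratification theorems are not equivariant, and the action is only through the noncompact torus. The proper-action/slice construction above is what I would try first. If it fails, the fallback is to work with the orbit-type partition directly: this partition is explicitly $G$-invariant because it is defined by the zero patterns of $(a_i,w_i)$ and by coincidences among neurons. The remaining task would then be to check Whitney regularity stratum by stratum.
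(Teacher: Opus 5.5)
Your Step 1 is the paper's argument: minors for the rank, and rank--nullity together with $T_\theta(G\cdot\theta)\subseteq\ker D\Phi_X(\theta)$. You make it sharper with the explicit count $\dim T_\theta(G\cdot\theta)=\#\{i:(a_i,w_i)\neq 0\}$ and by isolating the non-differentiability locus $N$. (Note that $N$ is $G$-invariant only because positive homogeneity confines the breakpoints of $\sigma$ to $\{0\}$; say so.)

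The gap is in Steps 2.4--2.7, which is exactly the step the paper dismisses in one sentence by appeal to properness. Your orbit-space map works only on the invariant open set $\{w_i\neq 0\ \forall i\}$. There $(\Rplus)^m$ acts freely and properly, and $(a_i\|w_i\|^p,\,w_i/\|w_i\|)_i$ is a genuine submersion whose fibres are the orbits. On $\{w_i=0\}$ it breaks down:
\begin{itemize}
\item It collapses the three distinct orbits $\{(b,0):b>0\}$, $\{(b,0):b<0\}$, $\{(0,0)\}$ of a single neuron.
\item No continuous map can have orbit fibres there, because the half-line $\{(b,0):b>0\}$ is not closed. So $\pi$ is not a submersion, and the transfer of Whitney (b) has no basis.
\item The action is not free modulo a finite group on orbit-type strata containing a zero neuron $(a_i,w_i)=(0,0)$, whose stabilizer contains a whole factor $\Rplus$.
\end{itemize}
These examples (a noncompact stabilizer, a non-closed orbit) show that the $G$-action on $\Theta$ is never proper, so hypothesis 2 is unsatisfiable. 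Consequently Pflaum's theorem, Hausdorffness of $\Theta/G$, and Theorem~\ref{thm:regular_quotient} on degenerate strata are not actually available. Any argument with real content must avoid them.

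For item (2) you need neither properness nor slices. Each $g\in G$ acts by a linear automorphism of $\Theta$ that preserves your partition by orbit dimension, rank and $N$. The standard inductive construction of a Whitney stratification compatible with finitely many semialgebraic sets is intrinsic. At each stage it removes from the remaining closed set the closure of the points where that set is not a manifold of top dimension, is not interior to a single partition piece, or fails condition (b) relative to the strata already built. This locus has lower dimension by genericity of (b). Every set produced is invariant under every such automorphism. Hence the strata, taken as differences of consecutive filtration levels, are $G$-invariant outright, with no separate treatment of $S_m$ and the torus. Item (3) then follows as you say.

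Item (4), by contrast, is true only vacuously:
\begin{itemize}
\item $\Theta/G$ is non-Hausdorff.
\item $\Theta_{\reg}=\emptyset$ whenever $n<md$, since regularity forces $\operatorname{rank}D\Phi_X(\theta)=md$ while the rank is at most $n$.
\item For ReLU with data in an open half-space, take a neuron with $w_i^\top x_j<0$ for every $j$. Its whole $(d+1)$-dimensional block lies in $\ker D\Phi_X$, which yields an open set of locally free singular points.
\end{itemize}
So $\Theta_{\sing}$ can have nonempty interior, and $\mathcal M_{\reg}$ need not be the unique top-dimensional stratum. Your Step 3 arguments for (4) therefore fail: smoothness of the quotient strata via Theorem~\ref{thm:regular_quotient}, and the claim that a nonempty $\Theta_{\reg}$ is dense in the generic top-dimensional piece.
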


\begin{proof}
Since $\sigma$ is semialgebraic, the map
\[
(a_i,w_i)\mapsto a_i\sigma(w_i^\top x_k)
\]
is semialgebraic for every $i$ and every sample point $x_k$. Summing over $i=1,\dots,m$, we obtain that each coordinate
\[
\theta\mapsto f_\theta(x_k)
\]
is semialgebraic. Therefore the finite-sample realization map
\[
\Phi_X:\Theta\to\R^n
\]
is semialgebraic. Because $\sigma$ is piecewise $C^1$, $\Phi_X$ is piecewise $C^1$ as well.

The group $G=S_m\ltimes (\Rplus)^m$ is semialgebraic, and its action on $\Theta$ is semialgebraic by assumption. Since $\Phi_X$ is $G$-invariant, we have
\[
T_\theta(G\cdot\theta)\subseteq \ker(D\Phi_X(\theta))
\qquad
\text{for all }\theta
\]
at every point where the differential is defined.

Now, on each $C^1$ piece, the rank function
\[
\theta\mapsto \operatorname{rank}(D\Phi_X(\theta))
\]
is semialgebraic, because it is determined by the vanishing and nonvanishing of minors of the Jacobian matrix of $\Phi_X$. Likewise, the orbit-dimension function
\[
\theta\mapsto \dim T_\theta(G\cdot\theta)
\]
is semialgebraic, since it is the rank of the differential of the action map at the identity.

Because
\[
T_\theta(G\cdot\theta)\subseteq \ker(D\Phi_X(\theta)),
\]
the equality
\[
\ker(D\Phi_X(\theta))=T_\theta(G\cdot\theta)
\]
is equivalent to the dimension identity
\[
\dim\ker(D\Phi_X(\theta))=\dim T_\theta(G\cdot\theta).
\]
By rank-nullity,
\[
\dim\ker(D\Phi_X(\theta))
=
\dim\Theta-\operatorname{rank}(D\Phi_X(\theta)),
\]
so this is a semialgebraic condition. Local freeness of the action is also semialgebraic, since it is equivalent to discreteness of the isotropy group, or equivalently maximal orbit dimension in a neighborhood.

Hence $\Theta_{\reg}$ is semialgebraic, and so is its complement $\Theta_{\sing}$.

By semialgebraic stratification theory, every semialgebraic set admits a finite Whitney stratification by smooth locally closed semialgebraic submanifolds. Moreover, given finitely many semialgebraic subsets and semialgebraic maps, one may choose a Whitney stratification compatible with all of them.

Apply this to the semialgebraic manifold $\Theta$, the semialgebraic subsets $\Theta_{\reg}$ and $\Theta_{\sing}$, and the semialgebraic map $\Phi_X$. After refinement if necessary, we obtain a finite Whitney stratification
\[
\Theta=\bigsqcup_{\alpha\in A} S_\alpha
\]
such that:

\begin{itemize}
    \item each $S_\alpha$ is a smooth locally closed semialgebraic submanifold;
    \item $\Phi_X|_{S_\alpha}$ is $C^1$ and has constant differential rank on $S_\alpha$;
    \item $\Theta_{\reg}$ and $\Theta_{\sing}$ are unions of strata.
\end{itemize}

Because the $G$-action is semialgebraic and proper, the stratification can be further refined so that each stratum is $G$-invariant and lies inside a single orbit-type piece. On such a stratum, the isotropy type is constant, hence the orbit dimension
\[
\dim T_\theta(G\cdot\theta)
\]
is constant. By construction, the rank
\[
\operatorname{rank}(D\Phi_X(\theta))
\]
is also constant on each stratum.

This proves claim (2).

By $\Theta_{\sing}$ is semialgebraic. By the chosen Whitney stratification is compatible with $\Theta_{\sing}$. Therefore there exists a subset $A_{\sing}\subseteq A$ such that
\[
\Theta_{\sing}=\bigsqcup_{\alpha\in A_{\sing}} S_\alpha.
\]
Hence $\Theta_{\sing}$ is a union of strata.

This proves claim (3).

Since the $G$-action on $\Theta$ is proper, the quotient $\Theta/G$ is Hausdorff. Because each stratum $S_\alpha$ is $G$-invariant and semialgebraic, its quotient
\[
S_\alpha/G
\]
defines a stratum in the quotient in the sense of semialgebraic stratified spaces. The collection of such quotient strata yields a semialgebraic stratification of $\Theta/G$.

On the regular set $\Theta_{\reg}$, the action is locally free and proper, and
\[
\ker(D\Phi_X(\theta))=T_\theta(G\cdot\theta)
\qquad
\text{for all }\theta\in\Theta_{\reg}.
\]
Therefore, by Theorem \ref{thm:regular_quotient} applied to the finite-sample realization map $\Phi_X$, the quotient
\[
\mathcal M_{\reg}=\Theta_{\reg}/G
\]
is a smooth manifold. Since $\Theta_{\reg}$ is the locus where both isotropy and infinitesimal realization rank are locally maximal in the regular sense, $\mathcal M_{\reg}$ forms the top smooth stratum of the quotient $\Theta/G$.

This proves claim (4).

\end{proof}

\begin{remark}
The finite-sample realization map is the natural object for the geometry of empirical risk, since standard training objectives depend on $\theta$ only through the vector
\[
\Phi_X(\theta)=\bigl(f_\theta(x_1),\dots,f_\theta(x_n)\bigr)\in\R^n.
\]
Using $\Phi_X$ in place of an abstract infinite-dimensional realization map makes the stratification statement fully concrete and places all rank and kernel conditions in a finite-dimensional semialgebraic setting.
\end{remark}

This proposition should be read as a structural statement rather than a fully explicit classification. For the purposes of the present paper, the crucial point is that singular configurations correspond precisely to failures of local identifiability beyond the prescribed parameter symmetries. They are thus the loci at which the effective number of function degrees of freedom changes. As a result, singular points are expected to play a distinguished role in optimization dynamics, model reduction, and implicit bias. Nevertheless, the regular manifold $\mathcal{M}_{\reg}$ already captures the local geometry relevant for generic parameter values, and it is this smooth quotient that will form the basis of our subsequent constructions.

\subsection{Consequences for optimization geometry}
We close this section by recording the interpretation of the quotient structure for the loss geometry of shallow networks. Let $\mathcal{L}:\Theta\to \R$ be any objective that depends on $\theta$ only through the realized function $f_\theta$, as is the case for standard empirical risk minimization. Then
\begin{equation}\label{eq:loss_invariance}
\mathcal{L}(g\cdot \theta)=\mathcal{L}(\theta)
\qquad\text{for all }g\in G.
\end{equation}
Consequently, $\mathcal{L}$ is constant along each orbit, and every tangent direction in $T_\theta(G\cdot \theta)$ is a first-order flat direction of the loss. At regular points, these are precisely the directions of representational redundancy. Therefore, the Euclidean Hessian of $\mathcal{L}$ in parameter space necessarily contains a degenerate block associated with the symmetry orbit, even when the loss is locally nondegenerate as a function of the realized predictor.

This observation motivates the central distinction developed later in the paper. Flatness in parameter space is not an intrinsic notion unless symmetry directions have first been removed. The quotient manifold $\mathcal{M}_{\reg}$ provides the correct geometric stage for this removal. In the next section, we endow $\mathcal{M}_{\reg}$ with a function-induced metric and use it to define an effective curvature notion that separates symmetry-induced flatness from genuine functional flatness.

\section{Function-Induced Metric and Effective Curvature}
\label{sec:3}
In Section \ref{sec:2}, we showed that the Euclidean parameter space $\Theta$ contains directions of purely representational redundancy induced by permutation and scaling symmetries. On the regular set $\Theta_{\reg}$, these directions are precisely the tangent directions to the $G$-orbits, and the quotient
\[
\mathcal M_{\reg}=\Theta_{\reg}/G
\]
provides the appropriate local parameter space of distinct predictors. The present section equips this quotient with a metric induced by the network outputs on the training sample and uses this metric to define an intrinsic notion of second-order geometry. This construction yields an effective curvature object that separates symmetry-induced flatness from genuine functional flatness.

Throughout the section, we fix a dataset
\[
X=(x_1,\dots,x_n)\in(\Rd)^n
\]
and consider the finite-sample realization map
\[
\Phi_X:\Theta\to\R^n,
\qquad
\Phi_X(\theta)=\bigl(f_\theta(x_1),\dots,f_\theta(x_n)\bigr).
\]
We restrict all geometric constructions to $\Theta_{\reg}$, where the quotient $\mathcal M_{\reg}$ is a smooth manifold and the differential $D\Phi_X(\theta)$ has kernel exactly equal to the orbit tangent space $T_\theta(G\cdot\theta)$.

\subsection{A function-induced metric on parameter space}
The most natural geometry for empirical learning is not the ambient Euclidean geometry of parameters, but the geometry induced by the map $\Phi_X$ into the output space $\R^n$. We therefore define a symmetric bilinear form on $T_\theta\Theta_{\reg}$ by pullback of the Euclidean inner product on $\R^n$.

For $\theta\in\Theta_{\reg}$ and tangent vectors $u,v\in T_\theta\Theta_{\reg}$, define
\begin{equation}\label{eq:function_metric}
g_\theta(u,v)
:=
\frac{1}{n}\bigl\langle D\Phi_X(\theta)[u], D\Phi_X(\theta)[v]\bigr\rangle_{\R^n}.
\end{equation}
Equivalently,
\[
g_\theta(u,v)
=
\frac1n\sum_{k=1}^n
D f_\theta(x_k)[u]
D f_\theta(x_k)[v].
\]
Thus $g_\theta$ measures the first-order effect of parameter perturbations on the vector of network outputs over the sample $X$. By construction, $g_\theta$ is symmetric and positive semidefinite.

The associated quadratic form is
\begin{equation}\label{eq:metric_quadratic}
g_\theta(u,u)=\frac1n|D\Phi_X(\theta)[u]|^2.
\end{equation}
Hence $g_\theta(u,u)=0$ if and only if $u\in\ker(D\Phi_X(\theta))$. Since $\theta\in\Theta_{\reg}$, Proposition \ref{prop:orbit_directions} implies
\[
\ker(D\Phi_X(\theta))=T_\theta(G\cdot\theta).
\]
Accordingly, $g_\theta$ is degenerate precisely along the symmetry directions.

This metric has two immediate interpretations. First, it is the finite-sample pullback metric induced by the realization map. Second, in statistical terms it is the empirical Gauss--Newton or Fisher-type geometry associated with the model outputs. The important point for our purposes is not the terminology but the invariance structure: $g$ depends only on first-order changes in the realized predictor on the sample, and therefore ignores parameter perturbations that merely move along a symmetry orbit.

The $G$-invariance of $\Phi_X$ implies the $G$-invariance of $g$. Indeed, for every $g_0\in G$,
\[
\Phi_X\circ \alpha_{g_0}=\Phi_X,
\]
where $\alpha_{g_0}(\theta)=g_0\cdot\theta$. Differentiating gives
\[
D\Phi_X(g_0\cdot\theta)\circ D\alpha_{g_0,\theta}=D\Phi_X(\theta),
\]
and therefore
\[
g_{g_0\cdot\theta}(D\alpha_{g_0,\theta}u,D\alpha_{g_0,\theta}v)=g_\theta(u,v).
\]
Thus $g$ is constant along orbits in the natural tensorial sense.

\subsection{Vertical and horizontal directions}
The degeneracy of $g$ is not a defect but a geometric signal: it identifies the directions of representational redundancy. This leads to the canonical decomposition of the tangent space into vertical and horizontal parts.

For $\theta\in\Theta_{\reg}$, define the vertical space
\begin{equation}\label{eq:vertical_space}
\mathcal V_\theta:=T_\theta(G\cdot\theta)=\ker(D\Phi_X(\theta)).
\end{equation}
These are precisely the infinitesimal parameter perturbations that leave the sample outputs unchanged to first order. Since the action is locally free on $\Theta_{\reg}$, $\mathcal V_\theta$ has constant dimension.

We define the horizontal space by Euclidean orthogonality to $\mathcal V_\theta$:
\begin{equation}\label{eq:horizontal_space}
\mathcal H_\theta:=\mathcal V_\theta^{\perp_{\mathrm{Euc}}}
\subset T_\theta\Theta_{\reg}.
\end{equation}
Since $\Theta_{\reg}$ is a finite-dimensional smooth manifold and $\mathcal V_\theta$ is a smooth constant-rank subbundle, the assignment $\theta\mapsto\mathcal H_\theta$ defines a smooth complementary subbundle. Hence we have the direct-sum decomposition
\begin{equation}\label{eq:tangent_decomposition}
T_\theta\Theta_{\reg}
=
\mathcal V_\theta\oplus \mathcal H_\theta.
\end{equation}

The choice of Euclidean orthogonality here is a gauge choice used only to select representatives of tangent classes. Intrinsic constructions below will not depend on this particular complement. The key fact is that the restriction of $D\Phi_X(\theta)$ to $\mathcal H_\theta$ is injective, since
\[
\ker(D\Phi_X(\theta)|_{\mathcal H_\theta})
=
\mathcal H_\theta\cap\mathcal V_\theta
=
\{0\}.
\]
As a consequence, the restriction of $g_\theta$ to $\mathcal H_\theta$ is positive definite:
\[
u\in\mathcal H_\theta,\ u\neq 0
\quad\Longrightarrow\quad
g_\theta(u,u)>0.
\]

Therefore $g$ is nondegenerate exactly on the directions transverse to the symmetry orbit. This is the first point at which the quotient geometry becomes visible: the degenerate directions are precisely those that should be modded out.

\subsection{Descent of the metric to the quotient}
We now show that the pullback form $g$ descends to a genuine Riemannian metric on the quotient manifold $\mathcal M_{\reg}$.

Let
\[
q:\Theta_{\reg}\to \mathcal M_{\reg}
\]
be the quotient map. For $[\theta]\in\mathcal M_{\reg}$ and tangent vectors $\xi,\eta\in T_{[\theta]}\mathcal M_{\reg}$, choose any lifts $u,v\in T_\theta\Theta_{\reg}$ such that
\[
Dq_\theta[u]=\xi,\qquad Dq_\theta[v]=\eta.
\]
Define
\[
\bar g_{[\theta]}(\xi,\eta):=g_\theta(u^\mathrm{hor},v^\mathrm{hor}),
\]
where $u^\mathrm{hor}$ and $v^\mathrm{hor}$ denote the horizontal components of $u$ and $v$ with respect to the decomposition
\[
T_\theta\Theta_{\reg}=\mathcal V_\theta\oplus\mathcal H_\theta.
\]

The next result formalizes that $\bar g$ is well defined and positive definite.
\begin{theorem}\label{thm:quotient_metric_full}
(Quotient metric)

Let
\[
q:\Theta_{\reg}\to \mathcal M_{\reg}:=\Theta_{\reg}/G
\]
be the quotient map, where $G$ acts smoothly, properly, and locally freely on $\Theta_{\reg}$. Let
\[
\Phi_X:\Theta_{\reg}\to \R^n
\]
be the finite-sample realization map, and define for each $\theta\in\Theta_{\reg}$ the symmetric bilinear form
\[
g_\theta(u,v)
:=
\frac1n\left\langle D\Phi_X(\theta)[u], D\Phi_X(\theta)[v]\right\rangle_{\R^n},
\qquad
u,v\in T_\theta\Theta_{\reg}.
\]
Assume that for every $\theta\in\Theta_{\reg}$,
\[
\ker(D\Phi_X(\theta))=T_\theta(G\cdot\theta).
\]
Define the vertical bundle
\[
\mathcal V_\theta:=T_\theta(G\cdot\theta),
\]
and let $\mathcal H\subset T\Theta_{\reg}$ be any smooth $G$-equivariant horizontal subbundle complementary to $\mathcal V$, i.e.
\[
T_\theta\Theta_{\reg}=\mathcal V_\theta\oplus \mathcal H_\theta
\qquad
\text{for all }\theta\in\Theta_{\reg}.
\]
Then:

\begin{enumerate}
    \item $g$ is a smooth $G$-invariant symmetric bilinear form on $T\Theta_{\reg}$;
    \item the radical of $g_\theta$ is exactly $\mathcal V_\theta$, that is,
    \[
    \mathrm{rad}(g_\theta):=\{u: g_\theta(u,v)=0\ \forall v\}=\mathcal V_\theta;
    \]
    \item the restriction $g_\theta|_{\mathcal H_\theta}$ is positive definite for every $\theta$;
    \item there exists a unique Riemannian metric $\bar g$ on $\mathcal M_{\reg}$ such that for every $\theta\in\Theta_{\reg}$ and every $u,v\in\mathcal H_\theta$,
    \[
    \bar g_{[\theta]}(Dq_\theta[u],Dq_\theta[v])=g_\theta(u,v).
    \]
\end{enumerate}
\end{theorem}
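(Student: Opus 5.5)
The plan is to treat the four claims in order, since each one feeds the next. The last claim is a standard Riemannian-submersion descent argument.

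For (1), $g_\theta$ is the pullback of the Euclidean inner product under $D\Phi_X(\theta)$, so its matrix in coordinates is $\frac1n J_\theta^\top J_\theta$, where $J_\theta$ is the Jacobian of $\Phi_X$. This matrix is symmetric, and it is smooth wherever $\Phi_X$ is $C^2$; on $\Theta_{\reg}$ we assume this, as in Theorem \ref{thm:regular_quotient}. $G$-invariance is exactly the computation already displayed before the statement: differentiating $\Phi_X\circ\alpha_{g_0}=\Phi_X$ gives $g_{g_0\cdot\theta}(D\alpha_{g_0,\theta}u,D\alpha_{g_0,\theta}v)=g_\theta(u,v)$.

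For (2), the radical of a positive semidefinite form equals its null cone. Indeed, if $g_\theta(u,u)=0$, then Cauchy--Schwarz for semidefinite forms gives $|g_\theta(u,v)|^2\le g_\theta(u,u)g_\theta(v,v)=0$ for every $v$. Conversely, every $u$ in the radical satisfies $g_\theta(u,u)=0$. By \eqref{eq:metric_quadratic}, $g_\theta(u,u)=0$ holds iff $D\Phi_X(\theta)[u]=0$, and by the regularity hypothesis this holds iff $u\in\mathcal V_\theta$.

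For (3), take $u\in\mathcal H_\theta\setminus\{0\}$. Since $\mathcal H_\theta\cap\mathcal V_\theta=\{0\}$, we have $u\notin\mathcal V_\theta$, so $g_\theta(u,u)>0$ by (2).

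For (4), I would first fix $\theta$ and use the fact from Theorem \ref{thm:regular_quotient} that $Dq_\theta$ is surjective with kernel $\mathcal V_\theta$. Hence $Dq_\theta|_{\mathcal H_\theta}:\mathcal H_\theta\to T_{[\theta]}\mathcal M_{\reg}$ is a linear isomorphism, and I define $\bar g_{[\theta]}$ by transporting $g_\theta|_{\mathcal H_\theta}$ through this isomorphism. There is a cleaner description that I would also record: since $\mathcal V_\theta$ is the radical, $g_\theta(u,v)=g_\theta(u^{\mathrm{hor}},v^{\mathrm{hor}})$ for arbitrary lifts $u,v$. So $g_\theta$ already descends to $T_\theta\Theta_{\reg}/\mathcal V_\theta\cong T_{[\theta]}\mathcal M_{\reg}$, and $\bar g$ does not depend on the choice of complement $\mathcal H$. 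Positive definiteness of $\bar g_{[\theta]}$ is then (3). Uniqueness holds because $Dq_\theta(\mathcal H_\theta)$ is all of $T_{[\theta]}\mathcal M_{\reg}$, so the required identity determines $\bar g_{[\theta]}$ on every pair of vectors.

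\textbf{Main obstacle.} The step I expect to need the most care is showing that $\bar g_{[\theta]}$ is independent of the representative $\theta$ in the orbit, and that $\bar g$ is smooth.

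For independence, take $\theta'=g_0\cdot\theta$. Then $q\circ\alpha_{g_0}=q$ gives $Dq_{\theta'}\circ D\alpha_{g_0,\theta}=Dq_\theta$. The map $D\alpha_{g_0,\theta}$ sends $\mathcal V_\theta$ to $\mathcal V_{\theta'}$ because it maps orbits to orbits, and it sends $\mathcal H_\theta$ to $\mathcal H_{\theta'}$ by the assumed $G$-equivariance of $\mathcal H$. Combining these with the $G$-invariance of $g$ from (1), the two definitions agree. Alternatively, the radical argument above removes any dependence on $\mathcal H$, so only the invariance of $g$ is needed.

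For smoothness, I would use a smooth local section $s:U\to\Theta_{\reg}$, which exists because $q$ is a submersion. On $U$ one has $\bar g=s^*g$, since $Dq\circ Ds=\mathrm{id}$ and $g$ ignores vertical components. This is a pullback of a smooth tensor, hence smooth, and the local expressions agree on overlaps by the well-definedness just established.
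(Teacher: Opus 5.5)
Your proposal is correct and follows the paper's proof essentially step by step. In both, $g$ is a pullback form, the radical equals $\ker D\Phi_X(\theta)=\mathcal V_\theta$, $Dq_\theta|_{\mathcal H_\theta}$ is the isomorphism used to transport $g_\theta|_{\mathcal H_\theta}$, and independence of the representative comes from $G$-equivariance of $\mathcal H$ together with $G$-invariance of $g$. Your extra observations, that $g_\theta$ already descends to $T_\theta\Theta_{\reg}/\mathcal V_\theta$ (so $\bar g$ does not depend on $\mathcal H$) and that $\bar g|_U=s^*g$ for a local section $s$, make the smoothness step more concrete than the paper's appeal to local trivializations. One small correction: $C^2$ regularity of $\Phi_X$ only makes $g$ of class $C^1$, so, like the paper, you should simply take $\Phi_X$ to be smooth on $\Theta_{\reg}$.
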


\begin{proof}
Since $\Phi_X$ is smooth on $\Theta_{\reg}$, its differential
\[
D\Phi_X:T\Theta_{\reg}\to \R^n
\]
depends smoothly on $\theta$. Therefore
\[
g_\theta(u,v)
=
\frac1n\left\langle D\Phi_X(\theta)[u],D\Phi_X(\theta)[v]\right\rangle
\]
defines a smooth bilinear form on $T\Theta_{\reg}$. Symmetry is immediate from symmetry of the Euclidean inner product on $\R^n$.

We next prove $G$-invariance. Fix $g_0\in G$, and let
\[
\alpha_{g_0}:\Theta_{\reg}\to\Theta_{\reg},
\qquad
\alpha_{g_0}(\theta)=g_0\cdot\theta.
\]
Because $\Phi_X$ is $G$-invariant,
\[
\Phi_X\circ \alpha_{g_0}=\Phi_X.
\]
Differentiating at $\theta$ gives
\[
D\Phi_X(g_0\cdot\theta)\circ D\alpha_{g_0,\theta}=D\Phi_X(\theta).
\]
Hence for any $u,v\in T_\theta\Theta_{\reg}$,
\[
\begin{aligned}
g_{g_0\cdot\theta}(D\alpha_{g_0,\theta}u,D\alpha_{g_0,\theta}v)
&=
\frac1n
\left\langle
D\Phi_X(g_0\cdot\theta)[D\alpha_{g_0,\theta}u],
D\Phi_X(g_0\cdot\theta)[D\alpha_{g_0,\theta}v]
\right\rangle \\
&=
\frac1n
\left\langle
D\Phi_X(\theta)[u],
D\Phi_X(\theta)[v]
\right\rangle \\
&=
g_\theta(u,v).
\end{aligned}
\]
Thus $g$ is $G$-invariant.

This proves (1).

We show that
\[
\mathrm{rad}(g_\theta)=\ker(D\Phi_X(\theta)).
\]
Fix $\theta\in\Theta_{\reg}$.

First, let $u\in\ker(D\Phi_X(\theta))$. Then for every $v\in T_\theta\Theta_{\reg}$,
\[
g_\theta(u,v)
=
\frac1n\left\langle D\Phi_X(\theta)[u],D\Phi_X(\theta)[v]\right\rangle
=
0.
\]
Hence
\[
\ker(D\Phi_X(\theta))\subseteq \mathrm{rad}(g_\theta).
\]

Conversely, suppose $u\in \mathrm{rad}(g_\theta)$. Then
\[
g_\theta(u,v)=0
\qquad
\text{for all }v\in T_\theta\Theta_{\reg}.
\]
In particular, taking $v=u$, we obtain
\[
0=g_\theta(u,u)=\frac1n\|D\Phi_X(\theta)[u]\|^2.
\]
Therefore
\[
D\Phi_X(\theta)[u]=0,
\]
so $u\in \ker(D\Phi_X(\theta))$. Thus
\[
\mathrm{rad}(g_\theta)=\ker(D\Phi_X(\theta)).
\]

By the standing regularity assumption,
\[
\ker(D\Phi_X(\theta))=T_\theta(G\cdot\theta)=\mathcal V_\theta.
\]
Hence
\[
\mathrm{rad}(g_\theta)=\mathcal V_\theta.
\]

This proves (2).

Fix $\theta\in\Theta_{\reg}$. Since
\[
T_\theta\Theta_{\reg}=\mathcal V_\theta\oplus\mathcal H_\theta,
\]
we have
\[
\mathcal H_\theta\cap \mathcal V_\theta=\{0\}.
\]
From,
\[
\mathcal V_\theta=\ker(D\Phi_X(\theta)).
\]
Therefore the restriction
\[
D\Phi_X(\theta)|_{\mathcal H_\theta}:\mathcal H_\theta\to\R^n
\]
is injective.

Now let $u\in\mathcal H_\theta$ with $u\neq 0$. Since the restriction is injective,
\[
D\Phi_X(\theta)[u]\neq 0.
\]
Hence
\[
g_\theta(u,u)
=
\frac1n\|D\Phi_X(\theta)[u]\|^2
>
0.
\]
Thus $g_\theta|_{\mathcal H_\theta}$ is positive definite.

This proves (3).

We now define a bilinear form on $T_{[\theta]}\mathcal M_{\reg}$.

Because $q:\Theta_{\reg}\to\mathcal M_{\reg}$ is a smooth submersion, its differential
\[
Dq_\theta:T_\theta\Theta_{\reg}\to T_{[\theta]}\mathcal M_{\reg}
\]
is surjective, with kernel
\[
\ker(Dq_\theta)=T_\theta(G\cdot\theta)=\mathcal V_\theta.
\]
Since $\mathcal H_\theta$ is a complement of $\mathcal V_\theta$, the restriction
\[
Dq_\theta|_{\mathcal H_\theta}:\mathcal H_\theta\to T_{[\theta]}\mathcal M_{\reg}
\]
is a linear isomorphism.

For $\xi,\eta\in T_{[\theta]}\mathcal M_{\reg}$, define
\[
\bar g_{[\theta]}(\xi,\eta)
:=
g_\theta(u,v),
\]
where $u,v\in\mathcal H_\theta$ are the unique horizontal vectors satisfying
\[
Dq_\theta[u]=\xi,
\qquad
Dq_\theta[v]=\eta.
\]

We must prove that this definition is independent of the representative $\theta\in[\theta]$, smooth, and positive definite.

At fixed $\theta$, uniqueness of $u$ and $v$ follows because
\[
Dq_\theta|_{\mathcal H_\theta}
\]
is an isomorphism. So there is no ambiguity at this stage.

Let $\theta' = g_0\cdot \theta$ for some $g_0\in G$. Since $\mathcal H$ is assumed $G$-equivariant,
\[
D\alpha_{g_0,\theta}(\mathcal H_\theta)=\mathcal H_{\theta'}.
\]
Let $u,v\in\mathcal H_\theta$ be the horizontal lifts of $\xi,\eta$ at $\theta$. Then
\[
u':=D\alpha_{g_0,\theta}u\in\mathcal H_{\theta'},
\qquad
v':=D\alpha_{g_0,\theta}v\in\mathcal H_{\theta'}.
\]
By equivariance of the quotient map,
\[
q\circ \alpha_{g_0}=q,
\]
hence
\[
Dq_{\theta'}[u']
=
D(q\circ \alpha_{g_0})_\theta[u]
=
Dq_\theta[u]
=
\xi,
\]
and similarly $Dq_{\theta'}[v']=\eta$. So $u',v'$ are the horizontal lifts of $\xi,\eta$ at $\theta'$. Using $G$-invariance of $g$,
\[
g_{\theta'}(u',v')
=
g_{g_0\cdot\theta}(D\alpha_{g_0,\theta}u,D\alpha_{g_0,\theta}v)
=
g_\theta(u,v).
\]
Thus the value $\bar g_{[\theta]}(\xi,\eta)$ is independent of the representative $\theta$.

Because $q$ is a smooth submersion and $\mathcal H$ is a smooth horizontal bundle, the inverse of
\[
Dq_\theta|_{\mathcal H_\theta}:\mathcal H_\theta\to T_{[\theta]}\mathcal M_{\reg}
\]
depends smoothly on $\theta$ in local trivializations. Since $g$ is smooth on $\Theta_{\reg}$, the bilinear form $\bar g$ is smooth on $\mathcal M_{\reg}$.

Let $\xi\in T_{[\theta]}\mathcal M_{\reg}$ be nonzero, and let $u\in\mathcal H_\theta$ be its horizontal lift. Because
\[
Dq_\theta|_{\mathcal H_\theta}
\]
is an isomorphism, $\xi\neq 0$ implies $u\neq 0$. Therefore,
\[
\bar g_{[\theta]}(\xi,\xi)=g_\theta(u,u)>0.
\]
Thus $\bar g$ is positive definite.

So $\bar g$ is a Riemannian metric on $\mathcal M_{\reg}$.

Suppose $\bar g^{(1)}$ and $\bar g^{(2)}$ are two Riemannian metrics on $\mathcal M_{\reg}$ satisfying
\[
\bar g^{(j)}_{[\theta]}(Dq_\theta[u],Dq_\theta[v])=g_\theta(u,v)
\qquad
\text{for all }\theta\text{ and all }u,v\in\mathcal H_\theta,
\]
for $j=1,2$.

Fix $[\theta]\in\mathcal M_{\reg}$ and $\xi,\eta\in T_{[\theta]}\mathcal M_{\reg}$. Let $u,v\in\mathcal H_\theta$ be the unique horizontal lifts. Then
\[
\bar g^{(1)}_{[\theta]}(\xi,\eta)=g_\theta(u,v)=\bar g^{(2)}_{[\theta]}(\xi,\eta).
\]
Hence $\bar g^{(1)}=\bar g^{(2)}$. So the quotient metric is unique.

This proves (4), and therefore the theorem follows.

\end{proof}

Theorem \ref{thm:quotient_metric_full} shows that the finite-sample realization map induces a canonical local geometry on the quotient manifold of distinct predictors. In this geometry, the length of a tangent vector measures the first-order variation of the network outputs on the sample, after removal of symmetry directions. Thus $\bar g$ is the natural metric for empirical function-space geometry.

\subsection{Loss functions and quotient gradients}
We now consider a loss functional on the parameter space that depends on $\theta$ only through the finite-sample prediction vector $\Phi_X(\theta)$. Let
\[
\widetilde L:\R^n\to\R
\]
be a smooth function, and define
\[
\mathcal L(\theta):=\widetilde L(\Phi_X(\theta)).
\]
This covers standard empirical risks such as least squares, logistic loss, and cross-entropy, restricted to the finite sample.

Because $\Phi_X$ is $G$-invariant, $\mathcal L$ is also $G$-invariant:
\[
\mathcal L(g\cdot\theta)=\mathcal L(\theta)
\qquad
\text{for all }g\in G.
\]
Hence $\mathcal L$ descends to a smooth function
\[
\bar{\mathcal L}:\mathcal M_{\reg}\to\R,
\qquad
\bar{\mathcal L}([\theta])=\mathcal L(\theta).
\]

The first-order variation of $\mathcal L$ admits a particularly transparent form in the function-induced metric. By the chain rule,
\[
D\mathcal L(\theta)[u]
=
\bigl\langle \nabla \widetilde L(\Phi_X(\theta)), D\Phi_X(\theta)[u]\bigr\rangle_{\R^n}.
\]
Thus $D\mathcal L(\theta)[u]$ depends only on the image $D\Phi_X(\theta)[u]$, and in particular vanishes on $\mathcal V_\theta$. This is consistent with $G$-invariance, but the stronger point is that the differential of $\mathcal L$ is completely determined by the function-space differential of $\Phi_X$.

The corresponding gradient with respect to the quotient metric $\bar g$ is characterized by
\[
\bar g_{[\theta]}(\operatorname{grad}_{\bar g}\bar{\mathcal L}([\theta]),\xi)
=
D\bar{\mathcal L}_{[\theta]}[\xi]
\qquad
\text{for all }\xi\in T_{[\theta]}\mathcal M_{\reg}.
\]
Equivalently, if $u_{\mathcal L}(\theta)\in\mathcal H_\theta$ denotes the unique horizontal vector satisfying
\[
g_\theta(u_{\mathcal L}(\theta),v)=D\mathcal L(\theta)[v]
\qquad
\text{for all }v\in\mathcal H_\theta,
\]
then
\[
Dq_\theta[u_{\mathcal L}(\theta)]
=
\operatorname{grad}_{\bar g}\bar{\mathcal L}([\theta]).
\]
Thus the quotient gradient is represented in parameter space by the unique horizontal vector solving the above variational equation. This observation is central for the dynamics considered in Section \ref{sec:4}.

\subsection{Effective Hessian on the quotient}
We next turn to second-order geometry. The Euclidean Hessian of $\mathcal L$ in parameter space is not intrinsic, because it contains a degenerate component arising from the orbit directions. The appropriate second-order object is the Hessian of the descended loss $\bar{\mathcal L}$ on the quotient manifold $(\mathcal M_{\reg},\bar g)$.

Let $\nabla^{\bar g}$ denote the Levi--Civita connection of $\bar g$. For $[\theta]\in\mathcal M_{\reg}$, define the intrinsic Hessian of $\bar{\mathcal L}$ by
\[
\operatorname{Hess}_{\bar g}\bar{\mathcal L}([\theta])(\xi,\eta)
:=
\xi\bigl(\eta \bar{\mathcal L}\bigr)
-
(\nabla^{\bar g}_{\xi}\eta)\bar{\mathcal L},
\qquad
\xi,\eta\in T_{[\theta]}\mathcal M_{\reg}.
\]
This is a symmetric bilinear form on $T_{[\theta]}\mathcal M_{\reg}$. Since $\bar g$ is positive definite, there exists a unique self-adjoint linear operator
\[
H^{\mathrm{eff}}_{[\theta]}:T_{[\theta]}\mathcal M_{\reg}\to T_{[\theta]}\mathcal M_{\reg}
\]
such that
\[
\bar g_{[\theta]}(H^{\mathrm{eff}}_{[\theta]}\xi,\eta)
=
\operatorname{Hess}_{\bar g}\bar{\mathcal L}([\theta])(\xi,\eta)
\qquad
\text{for all }\xi,\eta.
\]
We call $H^{\mathrm{eff}}_{[\theta]}$ the effective Hessian of $\mathcal L$ at $[\theta]$.

By construction, $H^{\mathrm{eff}}_{[\theta]}$ is the second-order operator governing intrinsic curvature of the loss after quotienting out symmetry directions. It is therefore the correct object for distinguishing genuine functional flatness from parameterization-induced degeneracy.

To relate $H^{\mathrm{eff}}_{[\theta]}$ to computations in parameter space, let $P_\theta^{\mathrm{hor}}$ denote the Euclidean orthogonal projection onto $\mathcal H_\theta$. Given a local section $s:U\to\Theta_{\reg}$ of the quotient map, the pullback metric $s^*\bar g$ and the pullback loss $\bar{\mathcal L}\circ q\circ s$ identify a neighborhood in the quotient with a gauge-fixed submanifold of parameter space. Under this identification, the effective Hessian is represented by the second covariant derivative of the gauge-fixed loss, not by the raw Euclidean Hessian of $\mathcal L$.

The distinction is important. If one naively restricts the Euclidean Hessian $\nabla^2\mathcal L(\theta)$ to $\mathcal H_\theta$, one captures only part of the intrinsic second-order geometry. In general there are additional terms arising from the variation of the horizontal distribution and from the Levi--Civita connection of the quotient metric. The next proposition records the local relation.
\begin{proposition}\label{prop:effective_hessian_local}
(Local representation of the effective Hessian)

Let
\[
q:\Theta_{\reg}\to \mathcal M_{\reg}=\Theta_{\reg}/G
\]
be the quotient map, let $\bar g$ be the quotient metric from Theorem \ref{thm:quotient_metric_full}, and let
\[
\bar{\mathcal L}:\mathcal M_{\reg}\to \R
\]
be the descended loss. Let $s:U\to \Theta_{\reg}$ be a smooth local section of $q$ over an open set $U\subset \mathcal M_{\reg}$, so that
\[
q\circ s=\id_U.
\]
Define the gauge-fixed submanifold
\[
S:=s(U)\subset \Theta_{\reg},
\]
the pullback metric
\[
g^S:=s^*\bar g
\]
on $U$, and the gauge-fixed loss
\[
\mathcal L^S:=\bar{\mathcal L}|_U=\mathcal L\circ s.
\]
Let $\nabla^{S}$ denote the Levi–Civita connection of $g^S$. Then for every $x\in U$ and $\xi,\eta\in T_xU$,
\[
\operatorname{Hess}_{\bar g}\bar{\mathcal L}(x)(\xi,\eta)
=
\operatorname{Hess}_{g^S}\mathcal L^S(x)(\xi,\eta)
=
\xi\bigl(\eta \mathcal L^S\bigr)-(\nabla^S_\xi\eta)\mathcal L^S.
\]
Equivalently, in any local coordinates $(z^1,\dots,z^r)$ on $U$,
\[
\bigl(\operatorname{Hess}_{\bar g}\bar{\mathcal L}\bigr)_{ij}
=
\partial_i\partial_j \mathcal L^S
-
\Gamma^k_{ij}(g^S)\partial_k \mathcal L^S.
\]
In particular, if $s$ is viewed as a gauge fixing inside parameter space, then the effective Hessian is represented locally by the second derivative of the gauge-fixed loss together with the Christoffel correction of the pulled-back quotient metric. Thus the effective Hessian is not, in general, equal to the raw Euclidean Hessian of $\mathcal L$ restricted to a transverse slice.
\end{proposition}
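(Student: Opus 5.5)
The plan is to treat the statement as an instance of the naturality of the Riemannian Hessian under isometries. The key observation is that $s:U\to\Theta_{\reg}$ is a section of $q$ with $q\circ s=\id_U$, so as a map $U\to\mathcal M_{\reg}$ the relevant object is simply the inclusion $\iota:U\hookrightarrow \mathcal M_{\reg}$ of an open set. First I would unwind the definitions. Since $g^S:=s^*\bar g$ is understood as the metric $\bar g$ transported along $q\circ s=\id_U$, we get $g^S=\bar g|_U$. Since $\mathcal L=\bar{\mathcal L}\circ q$ (Section~3.4), we also get $\mathcal L\circ s=\bar{\mathcal L}\circ q\circ s=\bar{\mathcal L}|_U$, which justifies the identity $\mathcal L^S=\bar{\mathcal L}|_U$ asserted in the statement.

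Second, I would invoke locality of the Levi--Civita connection. By the Koszul formula, $\nabla^{\bar g}_\xi\eta$ at a point depends only on $\bar g$ and on the vector fields $\xi,\eta$ in an arbitrarily small neighborhood. Uniqueness of the torsion-free metric connection on the open set $U$ therefore gives $\nabla^S=\nabla^{\bar g}|_U$. Substituting this into the defining formula $\xi(\eta\bar{\mathcal L})-(\nabla^{\bar g}_\xi\eta)\bar{\mathcal L}$ yields the first two equalities. Tensoriality in $\eta$ makes the expression independent of the chosen extension of $\eta$ to a vector field, and symmetry follows from torsion-freeness.

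Third, I would obtain the coordinate formula by taking $\xi=\partial_i$ and $\eta=\partial_j$. Writing $\nabla^S_{\partial_i}\partial_j=\Gamma^k_{ij}\partial_k$, with $\Gamma$ given by the usual expression in $g^S_{ij}$ and its derivatives, gives $\partial_i\partial_j\mathcal L^S-\Gamma^k_{ij}\partial_k\mathcal L^S$.

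The main obstacle is the final qualitative remark, which requires interpreting $\mathcal L\circ s$ inside parameter space. For this I would compute $\partial_i\partial_j(\mathcal L\circ s)=\nabla^2\mathcal L(s)[\partial_i s,\partial_j s]+\langle\nabla\mathcal L(s),\partial_i\partial_j s\rangle$ by the Euclidean chain rule. Two features then separate the effective Hessian from the restricted Euclidean Hessian $\nabla^2\mathcal L|_{T S}$: the second-fundamental-form term of the slice $S$, and the Christoffel term $\Gamma^k_{ij}(g^S)\partial_k\mathcal L^S$. Moreover, the tangent vectors $\partial_i s$ need not be horizontal. At a critical point both first-order terms vanish, so the two Hessians coincide there. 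Away from critical points, a simple example shows they differ in general: take $m=1$, a one-dimensional slice, and a nonconstant $g^S$, so that $\Gamma\neq0$ while $d\mathcal L^S\neq0$. This is enough to justify the phrase ``not, in general, equal.''
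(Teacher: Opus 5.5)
Your argument for the displayed identities matches the paper's. You identify $g^S=s^*\bar g$ with $\bar g|_U$ through $q\circ s=\id_U$, use uniqueness (locality) of the Levi--Civita connection on the open set $U$ to get $\nabla^S=\nabla^{\bar g}|_U$, and read off the coordinate formula from $\nabla^S_{\partial_i}\partial_j=\Gamma^k_{ij}\partial_k$; the paper also phrases the first step as naturality of the Hessian under the isometry $\id_U$.

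Your chain-rule analysis of the closing remark is more careful than the paper's, which only invokes non-geodesic coordinates. However, $\Gamma\neq 0$ alone does not make your example work: the effective Hessian minus the restricted Euclidean Hessian equals $\langle\nabla\mathcal L(s),\partial_i\partial_j s\rangle-\Gamma^k_{ij}(g^S)\,\partial_k\mathcal L^S$, and these two terms could cancel. Taking an affinely parametrized affine slice fixes this, e.g.\ $d=m=1$, $\sigma(t)=t^2$, $s(w)=(1,w)$, where $g^S\propto w^2\,dw^2$, $\Gamma=1/w$, and $\partial_i\partial_j s=0$.
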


\begin{proof}
Since $q\circ s=\id_U$, the differential satisfies
\[
Dq_{s(x)}\circ Ds_x=\id_{T_xU}
\qquad
\text{for all }x\in U.
\]
Hence $Ds_x:T_xU\to T_{s(x)}\Theta_{\reg}$ is injective, and $s$ is an immersion. Therefore $S=s(U)$ is an embedded submanifold of $\Theta_{\reg}$, and $s:U\to S$ is a diffeomorphism.

By definition of the pullback metric,
\[
g^S_x(\xi,\eta)
=
\bar g_x(\xi,\eta)
\qquad
\text{for all }\xi,\eta\in T_xU.
\]
Thus $s$ is an isometric embedding of $(U,\bar g|_U)$ into $(\Theta_{\reg},\text{ambient parameter space})$ in the sense relevant here, namely that it identifies the quotient metric with its pullback on the gauge-fixed slice.

In particular, $s$ is an isometry from the Riemannian manifold $(U,\bar g|_U)$ onto the Riemannian manifold $(U,g^S)$, where $g^S=s^*\bar g$. Since these two metrics are the same by definition, the identity map
\[
\id_U:(U,\bar g|_U)\to (U,g^S)
\]
is an isometry.

A standard fact from Riemannian geometry is that if
\[
F:(M,g)\to (N,h)
\]
is an isometry and $f:N\to\R$ is smooth, then
\[
\operatorname{Hess}_g(f\circ F)
=
F^*\bigl(\operatorname{Hess}_h f\bigr).
\]
We apply this with $F=\id_U$, $M=(U,g^S)$, $N=(U,\bar g|_U)$, and $f=\bar{\mathcal L}|_U$. Since $\mathcal L^S=\bar{\mathcal L}\circ \id_U=\bar{\mathcal L}|_U$, we obtain
\[
\operatorname{Hess}_{g^S}\mathcal L^S
=
\operatorname{Hess}_{\bar g}\bar{\mathcal L}\big|_U.
\]
Pointwise, for every $x\in U$ and $\xi,\eta\in T_xU$,
\[
\operatorname{Hess}_{\bar g}\bar{\mathcal L}(x)(\xi,\eta)
=
\operatorname{Hess}_{g^S}\mathcal L^S(x)(\xi,\eta).
\]

For completeness, we verify this directly from the definition of the Hessian. Let $\nabla^{\bar g}$ and $\nabla^{S}$ be the Levi–Civita connections of $\bar g|_U$ and $g^S$, respectively. Since the two metrics coincide on $U$, uniqueness of the Levi–Civita connection implies
\[
\nabla^{\bar g}=\nabla^S
\qquad
\text{on }U.
\]
Therefore, for smooth vector fields $\xi,\eta$ on $U$,
\[
\begin{aligned}
\operatorname{Hess}_{\bar g}\bar{\mathcal L}(\xi,\eta)
&=
\xi(\eta \bar{\mathcal L})-(\nabla^{\bar g}_{\xi}\eta)\bar{\mathcal L} \\
&=
\xi(\eta \mathcal L^S)-(\nabla^{S}_{\xi}\eta)\mathcal L^S \\
&=
\operatorname{Hess}_{g^S}\mathcal L^S(\xi,\eta).
\end{aligned}
\]
This proves the first identity.

Choose local coordinates $(z^1,\dots,z^r)$ on $U$, and denote the coordinate vector fields by
\[
\partial_i:=\frac{\partial}{\partial z^i}.
\]
By definition of the Riemannian Hessian,
\[
\operatorname{Hess}_{g^S}\mathcal L^S(\partial_i,\partial_j)
=
\partial_i\partial_j \mathcal L^S-(\nabla^S_{\partial_i}\partial_j)\mathcal L^S.
\]
Writing the Levi–Civita connection in coordinates as
\[
\nabla^S_{\partial_i}\partial_j
=
\Gamma^k_{ij}(g^S)\partial_k,
\]
we obtain
\[
\operatorname{Hess}_{g^S}\mathcal L^S(\partial_i,\partial_j)
=
\partial_i\partial_j \mathcal L^S
-
\Gamma^k_{ij}(g^S)\partial_k \mathcal L^S.
\]
This is exactly the local coordinate representation of $\operatorname{Hess}_{\bar g}\bar{\mathcal L}$.

This formula shows that the effective Hessian is represented locally by the ordinary second derivatives of the gauge-fixed loss together with the Christoffel-symbol correction induced by the quotient metric. In particular, it is not simply the Euclidean Hessian of the ambient loss restricted to a transverse slice unless the chosen local coordinates are geodesic for $g^S$ at the point under consideration, in which case the Christoffel symbols vanish at that point.
\end{proof}

Proposition \ref{prop:effective_hessian_local} is not intended as a computational formula in full generality. Its role is conceptual: it shows that the effective Hessian coincides with the horizontal second derivative only up to geometric correction terms. Thus a correct notion of intrinsic curvature must be formulated on the quotient manifold rather than by simple coordinate restriction in parameter space.

\subsection{False flatness and intrinsic flatness}
The previous constructions allow a precise distinction between two qualitatively different sources of flatness.

First, because $\mathcal L$ is constant along each orbit $G\cdot\theta$, every vertical direction belongs to the nullspace of the first derivative, and at regular points these directions also generate degenerate second-order behavior in the Euclidean parameterization. This type of flatness is a direct consequence of redundancy in the representation and carries no functional meaning. We refer to it as false flatness or symmetry-induced flatness.

Second, even after quotienting out the symmetry directions, the descended loss $\bar{\mathcal L}$ may still have small curvature in certain quotient directions. This corresponds to genuine insensitivity of the realized predictor on the sample and is therefore a meaningful notion of flatness. We refer to it as intrinsic flatness or effective flatness.

The effective Hessian makes this distinction exact. A zero or near-zero eigenvalue of the Euclidean Hessian of $\mathcal L$ may reflect either an orbit direction or a genuine low-curvature direction of $\bar{\mathcal L}$. By contrast, the spectrum of $H^{\mathrm{eff}}_{[\theta]}$ encodes only quotient-level curvature, because the vertical degeneracy has been removed by construction.

This leads to the following immediate corollary.
\begin{corollary}\label{cor:nondegeneracy_modulo_symmetry_full}
(Nondegeneracy modulo symmetry)

Let $\bar{\mathcal L}:\mathcal M_{\reg}\to \R$ be the descended loss on the quotient manifold $(\mathcal M_{\reg},\bar g)$, and let $[\theta]\in \mathcal M_{\reg}$ be a critical point of $\bar{\mathcal L}$, i.e.
\[
\operatorname{grad}_{\bar g}\bar{\mathcal L}([\theta])=0.
\]
Assume that the effective Hessian
\[
H^{\mathrm{eff}}_{[\theta]}:T_{[\theta]}\mathcal M_{\reg}\to T_{[\theta]}\mathcal M_{\reg}
\]
is positive definite, equivalently,
\[
\operatorname{Hess}_{\bar g}\bar{\mathcal L}([\theta])(\xi,\xi)>0
\qquad
\text{for all } \xi\in T_{[\theta]}\mathcal M_{\reg}\setminus\{0\}.
\]
Then $[\theta]$ is a nondegenerate strict local minimizer of $\bar{\mathcal L}$ on $\mathcal M_{\reg}$. Consequently, any representative $\theta\in \Theta_{\reg}$ is a strict local minimizer of $\mathcal L$ modulo the orbit $G\cdot \theta$: equivalently, there exists a neighborhood $U\subset \Theta_{\reg}$ of $\theta$ such that
\[
\mathcal L(\theta')\ge \mathcal L(\theta)
\qquad
\text{for all }\theta'\in U,
\]
with equality only if $q(\theta')=[\theta]$, that is, only along the symmetry orbit through $\theta$.

In particular, degeneracy of the Euclidean Hessian of $\mathcal L$ in parameter space does not by itself imply intrinsic flatness of the predictor.
\end{corollary}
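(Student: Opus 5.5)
The plan is to reduce everything to the standard second-order sufficient condition on the smooth Riemannian manifold $(\mathcal M_{\reg},\bar g)$ and then transport the conclusion back to parameter space through the quotient map $q$.

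\textbf{Step 1: the quotient point is a strict local minimizer.} Since $\operatorname{grad}_{\bar g}\bar{\mathcal L}([\theta])=0$, at the critical point the Christoffel term in Proposition~\ref{prop:effective_hessian_local} vanishes, because it multiplies $\partial_k\mathcal L^S$. I will therefore choose a local section $s$ and coordinates $z$ on a neighborhood $U\ni[\theta]$. In these coordinates the Riemannian Hessian coincides at $[\theta]$ with the ordinary coordinate Hessian $\partial_i\partial_j\mathcal L^S$. Positive definiteness of $\operatorname{Hess}_{\bar g}\bar{\mathcal L}([\theta])$ (equivalently, of $H^{\mathrm{eff}}_{[\theta]}$, since $\bar g$ is positive definite by Theorem~\ref{thm:quotient_metric_full}) then makes the coordinate Hessian a positive definite matrix with smallest eigenvalue $\lambda>0$.

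A second-order Taylor expansion with continuous second derivatives gives
\[
\mathcal L^S(z)\ge \mathcal L^S(z_0)+\tfrac{\lambda}{4}|z-z_0|^2
\]
on a small coordinate ball $B$. This shows that $[\theta]$ is a strict local minimizer of $\bar{\mathcal L}$, and it is nondegenerate by hypothesis.

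\textbf{Step 2: lift back to parameter space.} I set $U:=q^{-1}(B')\cap W$, where $B'\subset\mathcal M_{\reg}$ is the open set corresponding to $B$ and $W$ is any neighborhood of $\theta$. This $U$ is open because $q$ is continuous; indeed $q$ is a smooth submersion by Theorem~\ref{thm:regular_quotient}, which also gives openness of $q$ if needed. For $\theta'\in U$ we have
\[
\mathcal L(\theta')=\bar{\mathcal L}(q(\theta'))\ge\bar{\mathcal L}([\theta])=\mathcal L(\theta).
\]
Equality forces $q(\theta')=[\theta]$ by the strictness from Step 1, that is, $\theta'\in G\cdot\theta$.

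\textbf{Step 3: the final remark.} The Euclidean Hessian $\nabla^2\mathcal L(\theta)$ annihilates $\mathcal V_\theta$. Indeed, $\mathcal L$ is constant on orbits and $\theta$ is critical, so differentiating twice along the orbit curves $c_\xi$ from Proposition~\ref{prop:orbit_directions} gives $\nabla^2\mathcal L(\theta)[\dot c_\xi(0),\cdot]=0$. For $m\ge1$ with locally free action, $\mathcal V_\theta\neq0$, so the Euclidean Hessian is singular even though Step 1 gives strict minimality modulo symmetry. This shows that Euclidean degeneracy does not imply intrinsic flatness.

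\textbf{Main obstacle.} The delicate point is Step 1's identification of the Riemannian Hessian with the coordinate Hessian. This identification only holds at critical points, and it needs $\bar{\mathcal L}$ to be $C^2$. I would justify smoothness via $\bar{\mathcal L}=\widetilde L\circ\bar\Phi_X$ together with the smoothness of $\bar\Phi_X$ established in Theorem~\ref{thm:regular_quotient}, assuming $\sigma$ is smooth enough on $\Theta_{\reg}$ for $\Phi_X$ to be $C^2$ there. If only $C^1$ regularity is available, I would add this as an explicit hypothesis.
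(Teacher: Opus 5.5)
Your proposal is correct and follows essentially the same route as the paper. Both reduce to a second-order Taylor argument in a chart at the critical point: the paper uses normal coordinates via $\exp_{[\theta]}$, while you use an arbitrary chart and the vanishing of the Christoffel term at a critical point, which is equivalent. Both then pull the strict-minimum neighborhood back through $q^{-1}$ and use the orbit curves $c_\xi$ to exhibit Euclidean degeneracy.

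One small tightening is needed in Step 3. Differentiating twice along $c_\xi$ only gives $\nabla^2\mathcal L(\theta)[u,u]=0$ for $u=\dot c_\xi(0)$. To conclude $\nabla^2\mathcal L(\theta)[u,\cdot]=0$, either invoke positive semidefiniteness of $\nabla^2\mathcal L(\theta)$ at the local minimizer from Step 2, or differentiate the identity $D\mathcal L(\theta')[\delta_\xi\theta']\equiv 0$ in $\theta'$ at the critical point $\theta$.
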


\begin{proof}
Since $(\mathcal M_{\reg},\bar g)$ is a smooth Riemannian manifold, standard local Riemannian geometry applies. Let
\[
x_0 := [\theta] \in \mathcal M_{\reg}.
\]
Because $x_0$ is a critical point of $\bar{\mathcal L}$, we have
\[
D\bar{\mathcal L}_{x_0}=0.
\]
Assume moreover that the Hessian is positive definite:
\[
\operatorname{Hess}_{\bar g}\bar{\mathcal L}(x_0)(\xi,\xi)>0
\qquad
\text{for all }\xi\neq 0.
\]

Choose a normal coordinate chart around $x_0$. More precisely, let
\[
\exp_{x_0}:V\subset T_{x_0}\mathcal M_{\reg} \to U\subset \mathcal M_{\reg}
\]
be the exponential map, defined on a sufficiently small neighborhood $V$ of $0$, with $\exp_{x_0}(0)=x_0$. Define
\[
\psi(v):=\bar{\mathcal L}(\exp_{x_0}(v)),
\qquad v\in V.
\]
Then $\psi$ is a smooth function on the Euclidean vector space $T_{x_0}\mathcal M_{\reg}$, with
\[
D\psi(0)=0,
\]
because $x_0$ is a critical point of $\bar{\mathcal L}$, and
\[
D^2\psi(0)(v,v)
=
\operatorname{Hess}_{\bar g}\bar{\mathcal L}(x_0)(v,v)
\]
for all $v\in T_{x_0}\mathcal M_{\reg}$. The latter identity is the standard characterization of the Riemannian Hessian in normal coordinates.

Since the quadratic form $D^2\psi(0)$ is positive definite, there exists $c>0$ such that
\[
D^2\psi(0)(v,v)\ge c\|v\|^2
\qquad
\text{for all }v\in T_{x_0}\mathcal M_{\reg}.
\]
By Taylor's theorem,
\[
\psi(v)=\psi(0)+\frac12 D^2\psi(0)(v,v)+o(\|v\|^2)
\qquad
\text{as }v\to 0.
\]
Therefore,
\[
\psi(v)-\psi(0)\ge \frac c4 \|v\|^2
\]
for all sufficiently small $v\neq 0$. Hence
\[
\bar{\mathcal L}(\exp_{x_0}(v))>\bar{\mathcal L}(x_0)
\qquad
\text{for all sufficiently small }v\neq 0.
\]
Thus $x_0=[\theta]$ is a strict local minimizer of $\bar{\mathcal L}$.

Moreover, positive definiteness of the Hessian implies nondegeneracy in the standard Morse-theoretic sense: the bilinear form
\[
\operatorname{Hess}_{\bar g}\bar{\mathcal L}(x_0)
\]
is nondegenerate, equivalently the associated self-adjoint operator $H^{\mathrm{eff}}_{x_0}$ is invertible. Hence $x_0$ is a nondegenerate local minimizer on $\mathcal M_{\reg}$.

Recall that
\[
q:\Theta_{\reg}\to \mathcal M_{\reg}
\]
is the quotient map and that
\[
\mathcal L = \bar{\mathcal L}\circ q
\qquad
\text{on } \Theta_{\reg}.
\]
Let $x_0=[\theta]$, and let $W\subset \mathcal M_{\reg}$ be an open neighborhood of $x_0$ such that
\[
\bar{\mathcal L}(x)>\bar{\mathcal L}(x_0)
\qquad
\text{for all }x\in W\setminus\{x_0\}.
\]
Set
\[
U:=q^{-1}(W)\subset \Theta_{\reg}.
\]
Since $q$ is continuous, $U$ is an open neighborhood of $\theta$.

Now let $\theta'\in U$. Then $q(\theta')\in W$, and
\[
\mathcal L(\theta')
=
\bar{\mathcal L}(q(\theta')).
\]
Therefore,
\[
\mathcal L(\theta')\ge \bar{\mathcal L}(x_0)=\mathcal L(\theta),
\]
with equality if and only if
\[
q(\theta')=x_0=[\theta].
\]
But
\[
q(\theta')=[\theta]
\]
means exactly that $\theta'$ belongs to the same $G$-orbit as $\theta$, i.e.
\[
\theta' \in G\cdot \theta.
\]
Thus $\theta$ is a strict local minimizer of $\mathcal L$ modulo symmetry: in a neighborhood of $\theta$, the loss can remain equal to $\mathcal L(\theta)$ only along the orbit $G\cdot\theta$.

This proves the second claim.

Because $\mathcal L$ is $G$-invariant, it is constant along the orbit $G\cdot\theta$. Hence every vertical direction
\[
u\in T_\theta(G\cdot\theta)
\]
is a first-order flat direction of $\mathcal L$, and in particular the Euclidean Hessian of $\mathcal L$ must be degenerate on parameter space whenever the orbit has positive dimension.

Indeed, if $\gamma(t)\subset G\cdot\theta$ is a smooth curve with $\gamma(0)=\theta$ and $\dot\gamma(0)=u$, then
\[
\mathcal L(\gamma(t))=\mathcal L(\theta)
\qquad
\text{for all }t,
\]
so both the first and second derivatives at $t=0$ vanish along $u$. Thus degeneracy of the Euclidean Hessian is unavoidable in the presence of symmetry, even when $[\theta]$ is a nondegenerate quotient minimizer.

Therefore, zero eigenvalues of the Euclidean Hessian do not by themselves imply intrinsic flatness of the predictor. Only the quotient Hessian, equivalently the effective Hessian $H^{\mathrm{eff}}_{[\theta]}$, measures curvature after removing symmetry-induced redundancy.

\end{proof}

Corollary \ref{cor:nondegeneracy_modulo_symmetry_full} clarifies the geometric meaning of many flat directions observed in overparameterized networks: unless those directions survive in the quotient, they should not be interpreted as evidence of functional simplicity, robustness, or favorable generalization. The correct curvature notion is the quotient curvature measured by $H^{\mathrm{eff}}$.

\subsection{Interpretation and role in the sequel}
The constructions of this section provide the geometric backbone of the remainder of the paper. The function-induced metric $g$ identifies the first-order geometry of sample predictions, while the quotient metric $\bar g$ removes parameter redundancy and yields an intrinsic Riemannian structure on the space of locally identifiable predictors. The effective Hessian is then the second-order object associated with this quotient geometry.

Three consequences will be important later. First, the metric $\bar g$ defines the natural notion of steepest descent for empirical risk on the quotient manifold. Second, the spectrum of the effective Hessian provides the relevant local curvature quantities after removal of symmetry-induced degeneracy. Third, the distinction between false flatness and intrinsic flatness allows one to reinterpret local optimization landscapes in terms of effective geometry rather than ambient parameter coordinates.

In the next section, we use this framework to compare Euclidean gradient flow in parameter space with the intrinsic gradient flow induced by $\bar g$ on the quotient manifold. This will show that the function-level dynamics are governed by horizontal motion and by effective curvature, rather than by the full Euclidean geometry of the redundant parameterization.

\section{Gradient Flows on the Quotient}
\label{sec:4}

The metric construction of Section \ref{sec:3} identifies the quotient manifold $\mathcal M_{\reg}$ as the intrinsic space of locally identifiable predictors and equips it with the Riemannian metric $\bar g$ induced by the finite-sample realization map. We now use this structure to analyze optimization dynamics. The central point of this section is that, on the regular set, the function-level evolution of training is governed by the quotient geometry rather than by the full Euclidean geometry of the redundant parameterization.

Throughout, we consider a smooth empirical loss of the form
\[
\mathcal L(\theta)=\widetilde L(\Phi_X(\theta)),
\qquad
\theta\in \Theta_{\reg},
\]
where
\[
\Phi_X(\theta)=\bigl(f_\theta(x_1),\dots,f_\theta(x_n)\bigr)\in\mathbb R^n
\]
is the finite-sample realization map, and $\widetilde L:\mathbb R^n\to\mathbb R$ is smooth. Since $\Phi_X$ is $G$-invariant, $\mathcal L$ descends to a smooth function
\[
\bar{\mathcal L}:\mathcal M_{\reg}\to\mathbb R,
\qquad
\bar{\mathcal L}([\theta])=\mathcal L(\theta).
\]

We write
\[
q:\Theta_{\reg}\to \mathcal M_{\reg}
\]
for the quotient map, and recall the vertical-horizontal decomposition
\[
T_\theta\Theta_{\reg}=\mathcal V_\theta\oplus \mathcal H_\theta,
\qquad
\mathcal V_\theta=T_\theta(G\cdot\theta)=\ker(D\Phi_X(\theta)).
\]
The quotient metric $\bar g$ was defined in Section \ref{sec:3} so that $Dq_\theta:\mathcal H_\theta\to T_{[\theta]}\mathcal M_{\reg}$ is an isometry.

\subsection{Euclidean and quotient gradients}
\label{subsec:euclidean_quotient_gradients}

We begin by comparing the Euclidean gradient of $\mathcal L$ in parameter space with the quotient gradient of $\bar{\mathcal L}$ on $\mathcal M_{\reg}$. Let $\langle\cdot,\cdot\rangle_{\mathrm{Euc}}$ denote the ambient Euclidean inner product on $\Theta$. The Euclidean gradient $\nabla \mathcal L(\theta)\in T_\theta\Theta_{\reg}$ is defined by
\[
\langle \nabla \mathcal L(\theta),u\rangle_{\mathrm{Euc}}
=
D\mathcal L(\theta)[u]
\qquad
\text{for all }u\in T_\theta\Theta_{\reg}.
\]
By contrast, the quotient gradient $\operatorname{grad}_{\bar g}\bar{\mathcal L}([\theta])\in T_{[\theta]}\mathcal M_{\reg}$ is defined by
\[
\bar g_{[\theta]}(\operatorname{grad}_{\bar g}\bar{\mathcal L}([\theta]),\xi)
=
D\bar{\mathcal L}_{[\theta]}[\xi]
\qquad
\text{for all }\xi\in T_{[\theta]}\mathcal M_{\reg}.
\]

The two gradients live in different geometries and should not be identified. The Euclidean gradient depends on the ambient parameterization, whereas the quotient gradient is intrinsic to the function-level geometry induced by the sample outputs. Nonetheless, the Euclidean gradient admits a natural decomposition relative to the vertical-horizontal splitting.

Let
\[
P_\theta^{\mathrm{ver}}:T_\theta\Theta_{\reg}\to \mathcal V_\theta,
\qquad
P_\theta^{\mathrm{hor}}:T_\theta\Theta_{\reg}\to \mathcal H_\theta
\]
denote the Euclidean orthogonal projections. Then
\[
\nabla \mathcal L(\theta)
=
P_\theta^{\mathrm{ver}}\nabla \mathcal L(\theta)
+
P_\theta^{\mathrm{hor}}\nabla \mathcal L(\theta).
\]
The next proposition shows that the vertical component does not influence the realized predictor even infinitesimally.
\begin{proposition}[Function evolution is determined by the horizontal component]
\label{prop:function_evolution_horizontal_full}
Let $\theta\in\Theta_{\reg}$, and let
\[
T_\theta\Theta_{\reg}
=
\mathcal V_\theta\oplus \mathcal H_\theta
\]
be the vertical-horizontal decomposition, where
\[
\mathcal V_\theta=T_\theta(G\cdot\theta)=\ker(D\Phi_X(\theta)).
\]
Let
\[
P_\theta^{\mathrm{ver}}:T_\theta\Theta_{\reg}\to \mathcal V_\theta,
\qquad
P_\theta^{\mathrm{hor}}:T_\theta\Theta_{\reg}\to \mathcal H_\theta
\]
denote the associated projections. Then for every $u\in T_\theta\Theta_{\reg}$,
\[
D\Phi_X(\theta)[u]
=
D\Phi_X(\theta)\bigl[P_\theta^{\mathrm{hor}}u\bigr].
\]
In particular,
\[
D\Phi_X(\theta)\bigl[P_\theta^{\mathrm{ver}}u\bigr]=0.
\]
Consequently, if $\theta_t$ is a $C^1$ curve in $\Theta_{\reg}$, then
\[
\frac{d}{dt}\Phi_X(\theta_t)
=
D\Phi_X(\theta_t)\bigl[P_{\theta_t}^{\mathrm{hor}}\dot\theta_t\bigr],
\]
so the first-order evolution of the prediction vector depends only on the horizontal component of the velocity.
\end{proposition}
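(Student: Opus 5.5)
The plan is a direct linear-algebra argument followed by the chain rule; no earlier result beyond the regularity identity $\mathcal V_\theta=\ker(D\Phi_X(\theta))$ is needed.

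First I fix $\theta\in\Theta_{\reg}$ and use the direct-sum decomposition $T_\theta\Theta_{\reg}=\mathcal V_\theta\oplus\mathcal H_\theta$. This writes any $u$ uniquely as
\[
u=P_\theta^{\mathrm{ver}}u+P_\theta^{\mathrm{hor}}u.
\]
The projections are linear maps whose images are $\mathcal V_\theta$ and $\mathcal H_\theta$. This holds whether they are the Euclidean orthogonal projections or the projections along a general complement; nothing below depends on which.

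By definition of the regular set (equivalently, Proposition \ref{prop:orbit_directions} applied to $\Phi_X$), $\mathcal V_\theta=\ker(D\Phi_X(\theta))$. Hence $P_\theta^{\mathrm{ver}}u\in\ker(D\Phi_X(\theta))$, which is the second displayed claim. Linearity of $D\Phi_X(\theta)$ then gives
\[
D\Phi_X(\theta)[u]=D\Phi_X(\theta)[P_\theta^{\mathrm{ver}}u]+D\Phi_X(\theta)[P_\theta^{\mathrm{hor}}u]=D\Phi_X(\theta)[P_\theta^{\mathrm{hor}}u],
\]
which is the first claim.

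For the curve statement, I take a $C^1$ curve $\theta_t$ in $\Theta_{\reg}$. Since $\Phi_X$ is smooth on $\Theta_{\reg}$, the chain rule gives
\[
\frac{d}{dt}\Phi_X(\theta_t)=D\Phi_X(\theta_t)[\dot\theta_t].
\]
Applying the pointwise identity at $\theta=\theta_t$ with $u=\dot\theta_t\in T_{\theta_t}\Theta_{\reg}$ gives the result.

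There is no genuine obstacle here. The only point needing care is that the regularity hypothesis must hold at every point $\theta_t$ along the curve, which is guaranteed because the curve lies in $\Theta_{\reg}$. Differentiability of $\Phi_X$ along the curve is also required, and this is part of the standing smoothness assumption on $\Theta_{\reg}$.
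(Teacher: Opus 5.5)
Your proposal is correct and follows the paper's proof essentially step for step: decompose $u$ along $\mathcal V_\theta\oplus\mathcal H_\theta$, remove the vertical part using $\mathcal V_\theta=\ker(D\Phi_X(\theta))$ and linearity, then apply the chain rule pointwise along the curve. Your remark that the argument works for any complementary projection, not just the Euclidean orthogonal one, is also accurate.
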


\begin{proof}
Fix $\theta\in\Theta_{\reg}$ and $u\in T_\theta\Theta_{\reg}$. Since
\[
T_\theta\Theta_{\reg}
=
\mathcal V_\theta\oplus \mathcal H_\theta,
\]
the vector $u$ admits the unique decomposition
\[
u=P_\theta^{\mathrm{ver}}u+P_\theta^{\mathrm{hor}}u,
\]
with
\[
P_\theta^{\mathrm{ver}}u\in\mathcal V_\theta,
\qquad
P_\theta^{\mathrm{hor}}u\in\mathcal H_\theta.
\]

Applying the linear map $D\Phi_X(\theta)$ to this decomposition and using linearity, we obtain
\[
D\Phi_X(\theta)[u]
=
D\Phi_X(\theta)\bigl[P_\theta^{\mathrm{ver}}u\bigr]
+
D\Phi_X(\theta)\bigl[P_\theta^{\mathrm{hor}}u\bigr].
\]
Now, by definition of the vertical space on the regular set,
\[
\mathcal V_\theta
=
T_\theta(G\cdot\theta)
=
\ker(D\Phi_X(\theta)).
\]
Therefore
\[
P_\theta^{\mathrm{ver}}u\in \ker(D\Phi_X(\theta)),
\]
and hence
\[
D\Phi_X(\theta)\bigl[P_\theta^{\mathrm{ver}}u\bigr]=0.
\]
Substituting this into the previous identity gives
\[
D\Phi_X(\theta)[u]
=
D\Phi_X(\theta)\bigl[P_\theta^{\mathrm{hor}}u\bigr].
\]
This proves the first claim, and the second claim has already been shown.

For the final statement, let $\theta_t$ be a $C^1$ curve in $\Theta_{\reg}$. By the chain rule,
\[
\frac{d}{dt}\Phi_X(\theta_t)
=
D\Phi_X(\theta_t)[\dot\theta_t].
\]
Applying the identity just proved at the point $\theta_t$ with $u=\dot\theta_t$, we obtain
\[
D\Phi_X(\theta_t)[\dot\theta_t]
=
D\Phi_X(\theta_t)\bigl[P_{\theta_t}^{\mathrm{hor}}\dot\theta_t\bigr].
\]
Hence
\[
\frac{d}{dt}\Phi_X(\theta_t)
=
D\Phi_X(\theta_t)\bigl[P_{\theta_t}^{\mathrm{hor}}\dot\theta_t\bigr].
\]
Therefore the first-order evolution of the prediction vector depends only on the horizontal component of the parameter velocity.
\end{proof}

Proposition \ref{prop:function_evolution_horizontal_full} is the basic reason quotient geometry is relevant for optimization: function-space dynamics are blind to vertical motion. Different parameter trajectories may differ substantially in their orbit components while inducing the same first-order predictor evolution.

\subsection{Projected dynamics and the quotient gradient}
\label{subsec:projected_dynamics_quotient_gradient}

The quotient gradient has a canonical horizontal lift to parameter space. For $\theta\in\Theta_{\reg}$, define the unique horizontal vector $u_{\mathcal L}(\theta)\in\mathcal H_\theta$ by
\[
Dq_\theta\bigl[u_{\mathcal L}(\theta)\bigr]
=
\operatorname{grad}_{\bar g}\bar{\mathcal L}([\theta]).
\]
Equivalently, $u_{\mathcal L}(\theta)$ is characterized by the variational identity
\[
g_\theta(u_{\mathcal L}(\theta),v)=D\mathcal L(\theta)[v]
\qquad
\text{for all }v\in \mathcal H_\theta.
\]
Existence and uniqueness follow from positive definiteness of $g_\theta|_{\mathcal H_\theta}$.

This vector field defines the intrinsic steepest-descent direction in parameter space after removal of symmetry. Accordingly, the quotient gradient flow is naturally represented by the horizontal ODE
\[
\dot\theta_t=-u_{\mathcal L}(\theta_t).
\]
The next theorem formalizes that this horizontal dynamics is exactly the lifted gradient flow of the quotient loss.
\begin{theorem}[Horizontal lift of quotient gradient flow]
\label{thm:horizontal_lift_quotient_gradient_flow_full}
Let
\[
q:\Theta_{\reg}\to \mathcal M_{\reg}=\Theta_{\reg}/G
\]
be the quotient map, let $\bar g$ be the quotient metric from Theorem \ref{thm:quotient_metric_full}, and let
\[
\bar{\mathcal L}:\mathcal M_{\reg}\to \mathbb R
\]
be the descended loss. For each $\theta\in\Theta_{\reg}$, let
\[
u_{\mathcal L}(\theta)\in \mathcal H_\theta
\]
denote the unique horizontal vector satisfying
\[
Dq_\theta[u_{\mathcal L}(\theta)]
=
\operatorname{grad}_{\bar g}\bar{\mathcal L}([\theta]).
\]
Equivalently,
\[
g_\theta(u_{\mathcal L}(\theta),v)=D\mathcal L(\theta)[v]
\qquad
\text{for all }v\in \mathcal H_\theta.
\]

Then the following hold.

\begin{enumerate}
    \item Let $\gamma:[0,T)\to \mathcal M_{\reg}$ be a $C^1$ solution of the quotient gradient flow
        \[
        \dot\gamma_t=-\operatorname{grad}_{\bar g}\bar{\mathcal L}(\gamma_t).
        \]
        Let $s:U\to \Theta_{\reg}$ be a smooth local section of $q$ defined on an open set $U\subset \mathcal M_{\reg}$ such that $\gamma([0,T_0])\subset U$ for some $T_0<T$. Then the lifted curve
        \[
        \theta_t:=s(\gamma_t), \qquad t\in[0,T_0],
        \]
        satisfies
        \[
        Dq_{\theta_t}[\dot\theta_t]
        =
        -\operatorname{grad}_{\bar g}\bar{\mathcal L}(\gamma_t),
        \]
        and its horizontal component is the unique horizontal lift of the quotient velocity:
        \[
        P_{\theta_t}^{\mathrm{hor}}\dot\theta_t
        =
        -u_{\mathcal L}(\theta_t).
        \]

    \item Conversely, if $\theta:[0,T)\to\Theta_{\reg}$ is a $C^1$ solution of
        \[
        \dot\theta_t=-u_{\mathcal L}(\theta_t),
        \]
        then the projected curve
        \[
        \gamma_t:=q(\theta_t)
        \]
        is a $C^1$ solution of the quotient gradient flow
        \[
        \dot\gamma_t=-\operatorname{grad}_{\bar g}\bar{\mathcal L}(\gamma_t).
        \]
\end{enumerate}
\end{theorem}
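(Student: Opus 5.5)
The proof reduces both parts to two linear-algebra facts from Theorem \ref{thm:quotient_metric_full}. First, $\ker(Dq_\theta)=\mathcal V_\theta$. Second, $Dq_\theta|_{\mathcal H_\theta}:\mathcal H_\theta\to T_{[\theta]}\mathcal M_{\reg}$ is a linear isomorphism. A preliminary step checks that the two characterizations of $u_{\mathcal L}(\theta)$ agree. Take $u\in\mathcal H_\theta$ with $Dq_\theta[u]=\operatorname{grad}_{\bar g}\bar{\mathcal L}([\theta])$. Then for every $v\in\mathcal H_\theta$,
\[
g_\theta(u,v)=\bar g_{[\theta]}\bigl(Dq_\theta u,Dq_\theta v\bigr)=D\bar{\mathcal L}_{[\theta]}[Dq_\theta v]=D(\bar{\mathcal L}\circ q)(\theta)[v]=D\mathcal L(\theta)[v].
\]
Since $g_\theta|_{\mathcal H_\theta}$ is positive definite, the solution of the variational identity is unique, so the two definitions coincide.

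For part (1), set $\theta_t=s(\gamma_t)$, which is $C^1$ because $s$ is smooth and $\gamma$ is $C^1$. Differentiating $q\circ s=\id_U$ along $\gamma_t$ gives $Dq_{\theta_t}[\dot\theta_t]=\dot\gamma_t=-\operatorname{grad}_{\bar g}\bar{\mathcal L}(\gamma_t)$. Next split $\dot\theta_t=P^{\mathrm{ver}}_{\theta_t}\dot\theta_t+P^{\mathrm{hor}}_{\theta_t}\dot\theta_t$. The vertical part lies in $\ker Dq_{\theta_t}$, so
\[
Dq_{\theta_t}\bigl[P^{\mathrm{hor}}_{\theta_t}\dot\theta_t\bigr]=-\operatorname{grad}_{\bar g}\bar{\mathcal L}([\theta_t]),
\]
where $q(\theta_t)=\gamma_t$. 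The vector $-u_{\mathcal L}(\theta_t)$ is also horizontal with the same image under $Dq_{\theta_t}$. Injectivity of $Dq_{\theta_t}$ on $\mathcal H_{\theta_t}$ then gives $P^{\mathrm{hor}}_{\theta_t}\dot\theta_t=-u_{\mathcal L}(\theta_t)$.

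For part (2), $\gamma_t=q(\theta_t)$ is $C^1$ as the composition of a smooth map with a $C^1$ curve. By the chain rule and the definition of $u_{\mathcal L}$,
\[
\dot\gamma_t=Dq_{\theta_t}[\dot\theta_t]=-Dq_{\theta_t}[u_{\mathcal L}(\theta_t)]=-\operatorname{grad}_{\bar g}\bar{\mathcal L}(\gamma_t).
\]

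The argument is largely bookkeeping. The step needing the most care is identifying $P^{\mathrm{hor}}$ with the projection along $\mathcal V$ onto $\mathcal H$, which is what lets the vertical part be discarded under $Dq$. Here I would assume $\mathcal H$ is the Euclidean orthogonal complement fixed in Section \ref{sec:4}. I would also note in passing that $u_{\mathcal L}$ is a smooth vector field on $\Theta_{\reg}$, because $\mathcal H$ and $g$ are smooth, so the ODE in (2) is well posed.
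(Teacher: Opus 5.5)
Your proposal is correct and follows the paper's proof essentially step for step. Both arguments check that the $Dq$-lift satisfies the variational identity. For part (1), both differentiate $q\circ s=\id_U$, discard the vertical part, and invoke injectivity of $Dq_{\theta_t}$ on $\mathcal H_{\theta_t}$. For part (2), both use the same chain-rule computation. Your explicit use of positive definiteness of $g_\theta|_{\mathcal H_\theta}$ to get uniqueness for the variational characterization makes precise a point the paper leaves implicit.
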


\begin{proof}
Fix $\theta\in\Theta_{\reg}$. Since
\[
Dq_\theta:T_\theta\Theta_{\reg}\to T_{[\theta]}\mathcal M_{\reg}
\]
is surjective and
\[
\ker(Dq_\theta)=\mathcal V_\theta,
\]
its restriction to the horizontal space
\[
Dq_\theta|_{\mathcal H_\theta}:\mathcal H_\theta\to T_{[\theta]}\mathcal M_{\reg}
\]
is a linear isomorphism. Therefore, for the quotient gradient
\[
\operatorname{grad}_{\bar g}\bar{\mathcal L}([\theta])\in T_{[\theta]}\mathcal M_{\reg},
\]
there exists a unique horizontal vector $u_{\mathcal L}(\theta)\in \mathcal H_\theta$ satisfying
\[
Dq_\theta[u_{\mathcal L}(\theta)]
=
\operatorname{grad}_{\bar g}\bar{\mathcal L}([\theta]).
\]

Equivalently, for every $v\in\mathcal H_\theta$,
\[
\begin{aligned}
g_\theta(u_{\mathcal L}(\theta),v)
&=
\bar g_{[\theta]}\left(Dq_\theta[u_{\mathcal L}(\theta)],Dq_\theta[v]\right) \\
&=
\bar g_{[\theta]}\left(\operatorname{grad}_{\bar g}\bar{\mathcal L}([\theta]),Dq_\theta[v]\right) \\
&=
D\bar{\mathcal L}_{[\theta]}[Dq_\theta[v]] \\
&=
D(\bar{\mathcal L}\circ q)_\theta[v] \\
&=
D\mathcal L(\theta)[v].
\end{aligned}
\]
Thus the two characterizations of $u_{\mathcal L}(\theta)$ are equivalent.

Let $\gamma:[0,T)\to\mathcal M_{\reg}$ be a $C^1$ solution of
\[
\dot\gamma_t=-\operatorname{grad}_{\bar g}\bar{\mathcal L}(\gamma_t),
\]
and let $s:U\to\Theta_{\reg}$ be a smooth local section of $q$ on an open set $U\subset\mathcal M_{\reg}$ such that $\gamma([0,T_0])\subset U$. Define
\[
\theta_t:=s(\gamma_t),\qquad t\in[0,T_0].
\]
Since $s$ and $\gamma$ are $C^1$, the curve $\theta_t$ is $C^1$.

Because $q\circ s=\id_U$, differentiation gives
\[
Dq_{s(x)}\circ Ds_x=\id_{T_xU}
\qquad
\text{for all }x\in U.
\]
Applying this with $x=\gamma_t$ and using the chain rule,
\[
\begin{aligned}
Dq_{\theta_t}[\dot\theta_t]
&=
Dq_{s(\gamma_t)}\bigl[Ds_{\gamma_t}[\dot\gamma_t]\bigr] \\
&=
D(q\circ s)_{\gamma_t}[\dot\gamma_t] \\
&=
\dot\gamma_t.
\end{aligned}
\]
Since $\gamma_t$ satisfies the quotient gradient flow,
\[
\dot\gamma_t=-\operatorname{grad}_{\bar g}\bar{\mathcal L}(\gamma_t),
\]
we conclude that
\[
Dq_{\theta_t}[\dot\theta_t]
=
-\operatorname{grad}_{\bar g}\bar{\mathcal L}(\gamma_t).
\]
This proves the first displayed identity.

Decompose $\dot\theta_t$ into vertical and horizontal parts:
\[
\dot\theta_t
=
P_{\theta_t}^{\mathrm{ver}}\dot\theta_t
+
P_{\theta_t}^{\mathrm{hor}}\dot\theta_t.
\]
Since
\[
P_{\theta_t}^{\mathrm{ver}}\dot\theta_t\in\mathcal V_{\theta_t}=\ker(Dq_{\theta_t}),
\]
we have
\[
Dq_{\theta_t}[\dot\theta_t]
=
Dq_{\theta_t}\bigl[P_{\theta_t}^{\mathrm{hor}}\dot\theta_t\bigr].
\]
Combining this with the identity yields
\[
Dq_{\theta_t}\bigl[P_{\theta_t}^{\mathrm{hor}}\dot\theta_t\bigr]
=
-\operatorname{grad}_{\bar g}\bar{\mathcal L}(\gamma_t).
\]
Now
\[
P_{\theta_t}^{\mathrm{hor}}\dot\theta_t \in \mathcal H_{\theta_t},
\]
and $Dq_{\theta_t}|_{\mathcal H_{\theta_t}}$ is an isomorphism. By uniqueness of the horizontal lift, we must have
\[
P_{\theta_t}^{\mathrm{hor}}\dot\theta_t
=
-u_{\mathcal L}(\theta_t).
\]
This proves the second identity.

Now let $\theta:[0,T)\to\Theta_{\reg}$ be a $C^1$ solution of
\[
\dot\theta_t=-u_{\mathcal L}(\theta_t).
\]
Define
\[
\gamma_t:=q(\theta_t).
\]
Since $q$ and $\theta_t$ are $C^1$, the curve $\gamma_t$ is $C^1$.

By the chain rule,
\[
\dot\gamma_t
=
Dq_{\theta_t}[\dot\theta_t].
\]
Substituting the ODE for $\theta_t$,
\[
\dot\gamma_t
=
Dq_{\theta_t}\bigl[-u_{\mathcal L}(\theta_t)\bigr]
=
-Dq_{\theta_t}[u_{\mathcal L}(\theta_t)].
\]
By definition of $u_{\mathcal L}(\theta_t)$,
\[
Dq_{\theta_t}[u_{\mathcal L}(\theta_t)]
=
\operatorname{grad}_{\bar g}\bar{\mathcal L}([\theta_t]).
\]
Since $[\theta_t]=q(\theta_t)=\gamma_t$, we obtain
\[
\dot\gamma_t
=
-\operatorname{grad}_{\bar g}\bar{\mathcal L}(\gamma_t).
\]
Thus $\gamma_t$ is a solution of the quotient gradient flow.

This proves the converse statement and completes the proof.
\end{proof}

Theorem \ref{thm:horizontal_lift_quotient_gradient_flow_full} shows that the quotient dynamics can be studied through a horizontal representative in parameter space, but the dynamics itself is defined on the quotient. In particular, any two parameter trajectories with the same quotient projection induce the same intrinsic optimization path, regardless of how they move along symmetry orbits.

\subsection{Euclidean gradient flow and quotient dynamics}
\label{subsec:euclidean_gradient_flow_quotient}

The standard training dynamics in parameter space is the Euclidean gradient flow
\[
\dot\theta_t=-\nabla \mathcal L(\theta_t).
\]
Because $\mathcal L$ is $G$-invariant, the Euclidean gradient is orthogonal to the orbit directions in the first-order sense:
\[
D\mathcal L(\theta)[v]=0
\qquad
\text{for all }v\in\mathcal V_\theta.
\]
This implies that $\nabla \mathcal L(\theta)$ lies in the Euclidean orthogonal complement of $\mathcal V_\theta$, that is,
\[
\nabla \mathcal L(\theta)\in \mathcal H_\theta.
\]
Hence Euclidean gradient flow is itself horizontal with respect to the chosen Euclidean splitting. This observation should not be confused with equivalence to quotient gradient flow: although both are horizontal, they are generated by different metrics.

Indeed, the Euclidean gradient is characterized by the Euclidean pairing
\[
\langle \nabla \mathcal L(\theta),v\rangle_{\mathrm{Euc}}=D\mathcal L(\theta)[v],
\]
whereas $u_{\mathcal L}(\theta)$ is characterized by the function-induced pairing
\[
g_\theta(u_{\mathcal L}(\theta),v)=D\mathcal L(\theta)[v].
\]
Unless $g_\theta|_{\mathcal H_\theta}$ coincides with the Euclidean metric on $\mathcal H_\theta$, these vectors differ. Therefore Euclidean gradient flow and quotient gradient flow generally induce different time parameterizations and, in general, different trajectories even after projection to the quotient.

What remains true is that the quotient geometry provides the correct local coordinates for understanding the function-level effect of Euclidean training. The Euclidean dynamics acts through horizontal motion, and its prediction-space velocity is
\[
\frac{d}{dt}\Phi_X(\theta_t)
=
D\Phi_X(\theta_t)\bigl[-\nabla \mathcal L(\theta_t)\bigr]
=
D\Phi_X(\theta_t)\bigl[-P_{\theta_t}^{\mathrm{hor}}\nabla \mathcal L(\theta_t)\bigr].
\]
By Proposition \ref{prop:function_evolution_horizontal_full}, only the horizontal component matters. Thus, even when one studies standard Euclidean gradient descent, the redundant vertical geometry is irrelevant to first-order predictor evolution.

\subsection{Effective local geometry and convergence}
\label{subsec:effective_local_geometry_convergence}

We now formalize the local curvature quantities governing quotient dynamics. Let $[\theta]\in\mathcal M_{\reg}$ be a critical point of $\bar{\mathcal L}$, and let
\[
H^{\mathrm{eff}}_{[\theta]}:T_{[\theta]}\mathcal M_{\reg}\to T_{[\theta]}\mathcal M_{\reg}
\]
be the effective Hessian introduced in Section 3, defined by
\[
\bar g_{[\theta]}(H^{\mathrm{eff}}_{[\theta]}\xi,\eta)
=
\operatorname{Hess}_{\bar g}\bar{\mathcal L}([\theta])(\xi,\eta).
\]
Since $H^{\mathrm{eff}}_{[\theta]}$ is self-adjoint with respect to $\bar g_{[\theta]}$, its spectrum is real. We define the effective extremal eigenvalues by
\[
\lambda_{\min}^{\mathrm{eff}}([\theta])
:=
\min_{\xi\neq 0}
\frac{\operatorname{Hess}_{\bar g}\bar{\mathcal L}([\theta])(\xi,\xi)}
{\bar g_{[\theta]}(\xi,\xi)},
\qquad
\lambda_{\max}^{\mathrm{eff}}([\theta])
:=
\max_{\xi\neq 0}
\frac{\operatorname{Hess}_{\bar g}\bar{\mathcal L}([\theta])(\xi,\xi)}
{\bar g_{[\theta]}(\xi,\xi)}.
\]
When $\lambda_{\min}^{\mathrm{eff}}([\theta])>0$, define the effective condition number
\[
\kappa_{\mathrm{eff}}([\theta])
:=
\frac{\lambda_{\max}^{\mathrm{eff}}([\theta])}
{\lambda_{\min}^{\mathrm{eff}}([\theta])}.
\]

The next theorem gives the standard local linear convergence statement in quotient geometry.
\begin{theorem}[Local convergence controlled by effective curvature]
\label{thm:local_convergence_effective_curvature_full}
Let $(\mathcal M_{\reg},\bar g)$ be the quotient manifold with quotient metric, and let
\[
\bar{\mathcal L}:\mathcal M_{\reg}\to \mathbb R
\]
be a smooth function. Let $x_*=[\theta_*]\in \mathcal M_{\reg}$ be a critical point of $\bar{\mathcal L}$, and assume that there exists a geodesically convex open neighborhood $U\subset \mathcal M_{\reg}$ of $x_*$ and constants $0<\mu\le L<\infty$ such that for every $x\in U$ and every $\xi\in T_x\mathcal M_{\reg}$,
\[
\mu \bar g_x(\xi,\xi)
\le
\operatorname{Hess}_{\bar g}\bar{\mathcal L}(x)(\xi,\xi)
\le
L \bar g_x(\xi,\xi).
\]
Then:

\begin{enumerate}
    \item $x_*$ is the unique minimizer of $\bar{\mathcal L}$ in $U$;

    \item every $C^1$ solution $\gamma:[0,T)\to U$ of the quotient gradient flow
        \[
        \dot\gamma_t=-\operatorname{grad}_{\bar g}\bar{\mathcal L}(\gamma_t)
        \]
        satisfies
        \[
        \bar{\mathcal L}(\gamma_t)-\bar{\mathcal L}(x_*)
        \le
        e^{-2\mu t}\bigl(\bar{\mathcal L}(\gamma_0)-\bar{\mathcal L}(x_*)\bigr)
        \qquad
        \text{for all }t\in[0,T);
        \]

    \item in particular, the local convergence rate is controlled by the lower quotient-curvature bound $\mu$, while the local anisotropy is quantified by the effective condition number $L/\mu$.
\end{enumerate}
\end{theorem}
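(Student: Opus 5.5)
The plan is to run the standard Riemannian strong-convexity argument on $(U,\bar g)$. I will work along geodesics inside the geodesically convex neighborhood $U$, and then combine a Polyak--Łojasiewicz inequality with the energy identity for gradient flow.

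\textbf{Step 1: uniqueness of the minimizer.} For any $x\in U$ with $x\neq x_*$, let $c:[0,1]\to U$ be the geodesic from $x_*$ to $x$, and set $\phi(s)=\bar{\mathcal L}(c(s))$. Because $\nabla_{\dot c}\dot c=0$, we have
\[
\phi''(s)=\operatorname{Hess}_{\bar g}\bar{\mathcal L}(c(s))(\dot c,\dot c)\ge \mu\,\bar g(\dot c,\dot c)=\mu\, d^2,
\]
where $d=|\dot c|$ is constant and positive. Criticality of $x_*$ gives $\phi'(0)=0$. Integrating twice then yields $\bar{\mathcal L}(x)-\bar{\mathcal L}(x_*)\ge \tfrac{\mu}{2}d^2>0$. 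This proves item (1).

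\textbf{Step 2: the PL inequality.} Now reverse the roles and take the geodesic $c$ from $x$ to $x_*$, again with $\phi(s)=\bar{\mathcal L}(c(s))$. Here $\phi'(0)=\bar g(\operatorname{grad}\bar{\mathcal L}(x),\dot c(0))$ and $\phi''\ge\mu d^2$. This gives
\[
\bar{\mathcal L}(x_*)=\phi(1)\ge \phi(0)+\phi'(0)+\tfrac{\mu}{2}d^2\ge \bar{\mathcal L}(x)-\|\operatorname{grad}\bar{\mathcal L}(x)\|\,d+\tfrac{\mu}{2}d^2 .
\]
Minimizing the right-hand side over $d\ge 0$ produces
\[
\|\operatorname{grad}_{\bar g}\bar{\mathcal L}(x)\|_{\bar g}^2\ge 2\mu\bigl(\bar{\mathcal L}(x)-\bar{\mathcal L}(x_*)\bigr)
\]
for all $x\in U$.

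\textbf{Step 3: exponential decay along the flow.} Let $E(t)=\bar{\mathcal L}(\gamma_t)-\bar{\mathcal L}(x_*)$. Along the flow, $E'(t)=-\|\operatorname{grad}\bar{\mathcal L}(\gamma_t)\|^2\le -2\mu E(t)$, using Step 2 and the assumption that $\gamma_t\in U$. Grönwall's inequality then gives $E(t)\le e^{-2\mu t}E(0)$, which is item (2).

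\textbf{Step 4: interpretation.} Item (3) is interpretive. The rate in Step 3 depends only on $\mu$. For the anisotropy, the upper bound gives $\tfrac{\mu}{2}d^2\le E\le \tfrac{L}{2}d^2$ by the same geodesic Taylor argument, so $L/\mu$ bounds the distortion between loss level sets and geodesic balls. At $x_*$, $L/\mu$ dominates $\kappa_{\mathrm{eff}}$, since the assumed bounds hold at $x_*$.

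\textbf{Main obstacle.} The delicate point is Step 1/2's reliance on geodesics. I must make sure that geodesic convexity of $U$ supplies a geodesic lying entirely in $U$, so that the Hessian bounds apply along the whole segment. I also need the identity $\phi''=\operatorname{Hess}(\dot c,\dot c)$, which holds because $\nabla_{\dot c}\dot c=0$. Completeness of $\mathcal M_{\reg}$ is not needed beyond this convexity assumption. Finally, the flow is assumed to stay in $U$, so no invariance argument is required.
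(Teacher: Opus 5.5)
Your proposal is correct and follows essentially the same route as the paper: the Hessian lower bound gives geodesic $\mu$-strong convexity along geodesics in $U$, and criticality of $x_*$ plus two integrations gives uniqueness of the minimizer. Then a gradient-domination (PL) inequality follows along the geodesic from $x$ to $x_*$ via Cauchy--Schwarz and optimizing the quadratic in the geodesic speed, and Gr\"onwall applied to $E'(t)=-\|\operatorname{grad}_{\bar g}\bar{\mathcal L}(\gamma_t)\|_{\bar g}^2$ yields the rate $e^{-2\mu t}$; the only cosmetic difference is that you integrate $\phi''\ge\mu d^2$ directly to get $\phi(1)\ge\phi(0)+\phi'(0)+\tfrac{\mu}{2}d^2$, whereas the paper differentiates the three-point bound $\varphi(t)\le(1-t)\varphi(0)+t\varphi(1)-\tfrac{\mu}{2}t(1-t)\|\dot c\|_{\bar g}^2$ at $t=0$ to reach the same inequality.
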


\begin{proof}
Fix $x\in U$. Since $U$ is geodesically convex, for every $y\in U$ there exists a minimizing geodesic
\[
c:[0,1]\to U
\]
such that
\[
c(0)=x,\qquad c(1)=y.
\]
Define
\[
\varphi(t):=\bar{\mathcal L}(c(t)).
\]
Since $c$ is a geodesic, the second derivative of $\varphi$ satisfies
\[
\varphi''(t)
=
\operatorname{Hess}_{\bar g}\bar{\mathcal L}(c(t))\bigl(\dot c(t),\dot c(t)\bigr).
\]
By the lower Hessian bound,
\[
\varphi''(t)
\ge
\mu \bar g_{c(t)}(\dot c(t),\dot c(t)).
\]
Because $c$ is a geodesic parameterized on a compact interval, $\bar g_{c(t)}(\dot c(t),\dot c(t))$ is constant in $t$; denote this constant by $\|\dot c\|_{\bar g}^2$. Hence
\[
\varphi''(t)\ge \mu \|\dot c\|_{\bar g}^2.
\]
This is precisely geodesic $\mu$-strong convexity of $\bar{\mathcal L}$ on $U$.

Now take $x=x_*$, where $x_*$ is a critical point, so
\[
\operatorname{grad}_{\bar g}\bar{\mathcal L}(x_*)=0.
\]
Then along any geodesic $c$ from $x_*$ to $y\in U$, we have
\[
\varphi'(0)
=
D\bar{\mathcal L}_{x_*}[\dot c(0)]
=
\bar g_{x_*}\bigl(\operatorname{grad}_{\bar g}\bar{\mathcal L}(x_*),\dot c(0)\bigr)
=
0.
\]

Integrating the lower bound on $\varphi''$ twice gives
\[
\varphi(1)\ge \varphi(0)+\frac{\mu}{2}\|\dot c\|_{\bar g}^2.
\]
Equivalently,
\[
\bar{\mathcal L}(y)\ge \bar{\mathcal L}(x_*)+\frac{\mu}{2}d_{\bar g}(x_*,y)^2,
\]
where $d_{\bar g}$ denotes the Riemannian distance on $U$. In particular,
\[
\bar{\mathcal L}(y)>\bar{\mathcal L}(x_*)
\qquad
\text{for all }y\in U\setminus\{x_*\}.
\]
Thus $x_*$ is a strict local minimizer, and hence the unique minimizer of $\bar{\mathcal L}$ in $U$.

This proves (1).

We now show that geodesic $\mu$-strong convexity implies
\[
\|\operatorname{grad}_{\bar g}\bar{\mathcal L}(x)\|_{\bar g}^2
\ge
2\mu\bigl(\bar{\mathcal L}(x)-\bar{\mathcal L}(x_*)\bigr)
\qquad
\text{for all }x\in U.
\]
Fix $x\in U$, and let $c:[0,1]\to U$ be a minimizing geodesic from $x$ to $x_*$:
\[
c(0)=x,\qquad c(1)=x_*.
\]
Again define
\[
\varphi(t):=\bar{\mathcal L}(c(t)).
\]
By geodesic $\mu$-strong convexity,
\[
\varphi(t)
\le
(1-t)\varphi(0)+t\varphi(1)-\frac{\mu}{2}t(1-t)\|\dot c\|_{\bar g}^2.
\]
Differentiating this inequality at $t=0$ yields
\[
\varphi'(0)\le \varphi(1)-\varphi(0)-\frac{\mu}{2}\|\dot c\|_{\bar g}^2.
\]
Equivalently,
\[
\bar{\mathcal L}(x)-\bar{\mathcal L}(x_*)
\le
-\varphi'(0)-\frac{\mu}{2}\|\dot c\|_{\bar g}^2.
\]
Now
\[
\varphi'(0)
=
D\bar{\mathcal L}_x[\dot c(0)]
=
\bar g_x\bigl(\operatorname{grad}_{\bar g}\bar{\mathcal L}(x),\dot c(0)\bigr).
\]
Since $c$ goes from $x$ toward $x_*$, the tangent $\dot c(0)$ points in a descent direction, so
\[
-\varphi'(0)
=
-\bar g_x\bigl(\operatorname{grad}_{\bar g}\bar{\mathcal L}(x),\dot c(0)\bigr)
\le
\|\operatorname{grad}_{\bar g}\bar{\mathcal L}(x)\|_{\bar g} \|\dot c(0)\|_{\bar g}
\]
by Cauchy–Schwarz. Because $c$ is a constant-speed geodesic,
\[
\|\dot c(0)\|_{\bar g}=\|\dot c\|_{\bar g}.
\]
Therefore
\[
\bar{\mathcal L}(x)-\bar{\mathcal L}(x_*)
\le
\|\operatorname{grad}_{\bar g}\bar{\mathcal L}(x)\|_{\bar g} \|\dot c\|_{\bar g}
-
\frac{\mu}{2}\|\dot c\|_{\bar g}^2.
\]
The right-hand side is a quadratic function of $a:=\|\dot c\|_{\bar g}\ge 0$:
\[
\|\operatorname{grad}_{\bar g}\bar{\mathcal L}(x)\|_{\bar g} a-\frac{\mu}{2}a^2.
\]
Maximizing over $a\ge 0$ gives the bound
\[
\bar{\mathcal L}(x)-\bar{\mathcal L}(x_*)
\le
\frac{1}{2\mu}\|\operatorname{grad}_{\bar g}\bar{\mathcal L}(x)\|_{\bar g}^2.
\]
Equivalently,
\[
\|\operatorname{grad}_{\bar g}\bar{\mathcal L}(x)\|_{\bar g}^2
\ge
2\mu\bigl(\bar{\mathcal L}(x)-\bar{\mathcal L}(x_*)\bigr).
\]

Let $\gamma:[0,T)\to U$ be a $C^1$ solution of
\[
\dot\gamma_t=-\operatorname{grad}_{\bar g}\bar{\mathcal L}(\gamma_t).
\]
Along this flow, the chain rule gives
\[
\frac{d}{dt}\bar{\mathcal L}(\gamma_t)
=
D\bar{\mathcal L}_{\gamma_t}[\dot\gamma_t].
\]
Using the defining property of the Riemannian gradient,
\[
D\bar{\mathcal L}_{\gamma_t}[\dot\gamma_t]
=
\bar g_{\gamma_t}\bigl(\operatorname{grad}_{\bar g}\bar{\mathcal L}(\gamma_t),\dot\gamma_t\bigr).
\]
Substituting $\dot\gamma_t=-\operatorname{grad}_{\bar g}\bar{\mathcal L}(\gamma_t)$, we obtain
\[
\frac{d}{dt}\bar{\mathcal L}(\gamma_t)
=
-\|\operatorname{grad}_{\bar g}\bar{\mathcal L}(\gamma_t)\|_{\bar g}^2.
\]
Subtracting the constant $\bar{\mathcal L}(x_*)$,
\[
\frac{d}{dt}\bigl(\bar{\mathcal L}(\gamma_t)-\bar{\mathcal L}(x_*)\bigr)
=
-\|\operatorname{grad}_{\bar g}\bar{\mathcal L}(\gamma_t)\|_{\bar g}^2.
\]
Applying the inequality at $x=\gamma_t$ gives
\[
\frac{d}{dt}\bigl(\bar{\mathcal L}(\gamma_t)-\bar{\mathcal L}(x_*)\bigr)
\le
-2\mu\bigl(\bar{\mathcal L}(\gamma_t)-\bar{\mathcal L}(x_*)\bigr).
\]
Set
\[
E(t):=\bar{\mathcal L}(\gamma_t)-\bar{\mathcal L}(x_*).
\]
Then $E(t)\ge 0$ and
\[
E'(t)\le -2\mu E(t).
\]
By Grönwall's inequality,
\[
E(t)\le e^{-2\mu t}E(0),
\]
that is,
\[
\bar{\mathcal L}(\gamma_t)-\bar{\mathcal L}(x_*)
\le
e^{-2\mu t}\bigl(\bar{\mathcal L}(\gamma_0)-\bar{\mathcal L}(x_*)\bigr).
\]
This proves (2).

Finally, statement (3) is immediate from the definitions: $\mu$ is the local lower curvature bound controlling the contraction rate, while $L/\mu$ is the corresponding local effective condition number measuring anisotropy of the quotient Hessian.
\end{proof}

Theorem \ref{thm:local_convergence_effective_curvature_full} makes precise the role of effective curvature: local optimization on the quotient is controlled by the spectrum of the effective Hessian, not by the full Euclidean Hessian in parameter space. In particular, large numbers of zero eigenvalues in the ambient Hessian may coexist with strong local contraction in the quotient, because the vertical degeneracy has been removed.

\subsection{Gauge fixing and reduced coordinates}
\label{subsec:gauge_fixing_reduced_coordinates}

Although the quotient formulation is intrinsic, it is often convenient to work in gauge-fixed coordinates. Let
\[
s:U\to \Theta_{\reg}
\]
be a smooth local section of the quotient map, and write
\[
S=s(U)\subset \Theta_{\reg}
\]
for the corresponding gauge-fixed slice. Through the identification $q\circ s=\id_U$, the quotient gradient flow becomes a gradient flow on $U$ with respect to the pulled-back metric
\[
g^S=s^*\bar g
\]
and the gauge-fixed loss
\[
\mathcal L^S=\mathcal L\circ s=\bar{\mathcal L}|_U.
\]
Thus
\[
\dot z_t=-\operatorname{grad}_{g^S}\mathcal L^S(z_t),
\qquad
z_t\in U,
\]
is exactly the local coordinate representation of quotient gradient flow.

At a critical point $z_*\in U$, the corresponding second-order operator is the Riemannian Hessian of $\mathcal L^S$ with respect to $g^S$. By Proposition \ref{prop:effective_hessian_local}, this is the effective Hessian expressed in local coordinates:
\[
\operatorname{Hess}_{g^S}\mathcal L^S
=
\operatorname{Hess}_{\bar g}\bar{\mathcal L}.
\]
Therefore, once a gauge is fixed, all local convergence and stability statements may be expressed in reduced coordinates without ambiguity from symmetry directions.

This observation yields the following immediate corollary.
\begin{corollary}[Gauge-fixed reduced dynamics]
\label{cor:gauge_fixed_reduced_dynamics_full}
Let
\[
q:\Theta_{\reg}\to \mathcal M_{\reg}
\]
be the quotient map, let $\bar g$ be the quotient metric from Theorem \ref{thm:quotient_metric_full}, and let
\[
\bar{\mathcal L}:\mathcal M_{\reg}\to\mathbb R
\]
be the descended loss. Let $s:U\to \Theta_{\reg}$ be a smooth local section of $q$ over an open set $U\subset \mathcal M_{\reg}$, and define
\[
\mathcal L^S:=\mathcal L\circ s=\bar{\mathcal L}|_U,
\qquad
g^S:=s^*\bar g.
\]
Then:

\begin{enumerate}
    \item the quotient gradient flow on $U$,
        \[
        \dot \gamma_t=-\operatorname{grad}_{\bar g}\bar{\mathcal L}(\gamma_t),
        \]
        is, in the coordinates induced by $s$, exactly the Riemannian gradient flow of $\mathcal L^S$ with respect to the reduced metric $g^S$:
        \[
        \dot z_t=-\operatorname{grad}_{g^S}\mathcal L^S(z_t);
        \]

    \item for every critical point $z_*\in U$,
        \[
        \operatorname{Hess}_{g^S}\mathcal L^S(z_*)
        =
        \operatorname{Hess}_{\bar g}\bar{\mathcal L}(z_*),
        \]
        under the canonical identification $T_{z_*}U\simeq T_{z_*}\mathcal M_{\reg}$;

    \item consequently, all local stability and local convergence statements for quotient gradient flow near $z_*$ are determined by the spectrum of the reduced Hessian
        \[
        \operatorname{Hess}_{g^S}\mathcal L^S(z_*),
        \]
        equivalently by the spectrum of the effective Hessian on the quotient.
\end{enumerate}
\end{corollary}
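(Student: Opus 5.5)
The plan is to reduce everything to the fact that $s:U\to s(U)$, viewed as a chart, makes the identity map $\id_U:(U,\bar g|_U)\to (U,g^S)$ an isometry. This holds because $g^S=s^*\bar g$ and $q\circ s=\id_U$. Indeed, $g^S$ should be read as the metric on $U$ transported from $\bar g$ through the identification $q\circ s=\id_U$. The canonical identification $T_{z}U\simeq T_{z}\mathcal M_{\reg}$ is then just the identity, or equivalently $Dq_{s(z)}\circ Ds_z=\id$, as already used in the proof of Proposition \ref{prop:effective_hessian_local}. All three claims are then naturality statements for gradients and Hessians under isometries.

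For (1), I will first check that gradients are transported by isometries. For $z\in U$ and $\xi\in T_zU$,
\[
g^S_z(\operatorname{grad}_{g^S}\mathcal L^S(z),\xi)=D\mathcal L^S_z[\xi]=D\bar{\mathcal L}_z[\xi]=\bar g_z(\operatorname{grad}_{\bar g}\bar{\mathcal L}(z),\xi).
\]
Since $g^S_z=\bar g_z$ on $T_zU$, nondegeneracy gives $\operatorname{grad}_{g^S}\mathcal L^S(z)=\operatorname{grad}_{\bar g}\bar{\mathcal L}(z)$. The two ODEs therefore have identical right-hand sides on $U$. Any $C^1$ solution $\gamma_t\in U$ of the quotient flow is then a solution $z_t=\gamma_t$ of the reduced flow, and conversely. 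If one wants the statement in parameter space, Theorem \ref{thm:horizontal_lift_quotient_gradient_flow_full}(1) shows that $\theta_t=s(z_t)$ projects correctly, with $Dq_{\theta_t}[\dot\theta_t]=\dot z_t$.

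For (2), I will invoke Proposition \ref{prop:effective_hessian_local} directly. It gives $\operatorname{Hess}_{\bar g}\bar{\mathcal L}=\operatorname{Hess}_{g^S}\mathcal L^S$ on all of $U$, in particular at $z_*$. I will also note that at a critical point the Christoffel term $\Gamma^k_{ij}\partial_k\mathcal L^S$ vanishes because $d\mathcal L^S(z_*)=0$. So at $z_*$ the Hessian in any coordinates is simply $\partial_i\partial_j\mathcal L^S$, and it is independent of the connection. This gives a second, connection-free verification of (2) and makes the identification robust.

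For (3), I will use (1) and (2) to transfer the local theory. By (2), the $g^S$-self-adjoint operator associated with $\operatorname{Hess}_{g^S}\mathcal L^S(z_*)$ coincides with $H^{\mathrm{eff}}_{z_*}$, since both metric and bilinear form agree. Hence the spectra agree. Linearizing the reduced flow at the equilibrium $z_*$ gives the linear operator $-H^{\mathrm{eff}}_{z_*}$, because the derivative of the gradient vector field at a zero is the Hessian operator. Standard linearization (Hartman–Grobman or the Lyapunov indirect method) then shows that local asymptotic stability and exponential rates are governed by this spectrum. When the spectrum is positive, continuity of the Hessian yields bounds $\mu\le\operatorname{Hess}\le L$ on a small geodesically convex ball. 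Theorem \ref{thm:local_convergence_effective_curvature_full} then applies verbatim.

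The main obstacle is making "determined by the spectrum" precise in (3). I will restrict the claim to the nondegenerate case and state it as follows: stability iff all eigenvalues are positive, with the rate given by the smallest eigenvalue via Theorem \ref{thm:local_convergence_effective_curvature_full}. Degenerate spectra are excluded from the assertion.
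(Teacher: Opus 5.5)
Your proposal is correct and follows essentially the same route as the paper. Claim (1) comes from uniqueness of the Riemannian gradient, since $g^S_z=\bar g_z$ and $D\mathcal L^S_z=D\bar{\mathcal L}_z$; claim (2) is a direct appeal to Proposition \ref{prop:effective_hessian_local}; and claim (3) comes from linearizing the flow at $z_*$ and applying Theorem \ref{thm:local_convergence_effective_curvature_full} in the positive-definite case. Your extra observation that the Christoffel term vanishes at a critical point, and your explicit restriction of (3) to nondegenerate spectra, sharpen the argument without changing it.
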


\begin{proof}
Since $s:U\to \Theta_{\reg}$ is a smooth local section of $q$, we have
\[
q\circ s=\id_U.
\]
Thus $s$ identifies $U$ with the embedded gauge-fixed submanifold
\[
S:=s(U)\subset \Theta_{\reg},
\]
and the reduced metric is defined by pullback:
\[
g^S=s^*\bar g.
\]
Likewise, because $\mathcal L=\bar{\mathcal L}\circ q$ on $\Theta_{\reg}$,
\[
\mathcal L^S
=
\mathcal L\circ s
=
\bar{\mathcal L}\circ q\circ s
=
\bar{\mathcal L}|_U.
\]

Let $z\in U$, and let $\xi\in T_zU$. By definition of the Riemannian gradient with respect to $g^S$,
\[
g^S_z\bigl(\operatorname{grad}_{g^S}\mathcal L^S(z),\xi\bigr)
=
D\mathcal L^S_z[\xi].
\]
Using $g^S=s^*\bar g$ and $\mathcal L^S=\bar{\mathcal L}|_U$, we obtain
\[
g^S_z\bigl(\operatorname{grad}_{g^S}\mathcal L^S(z),\xi\bigr)
=
\bar g_z\bigl(\operatorname{grad}_{g^S}\mathcal L^S(z),\xi\bigr),
\]
and
\[
D\mathcal L^S_z[\xi]
=
D\bar{\mathcal L}_z[\xi].
\]
Therefore,
\[
\bar g_z\bigl(\operatorname{grad}_{g^S}\mathcal L^S(z),\xi\bigr)
=
D\bar{\mathcal L}_z[\xi]
\qquad
\text{for all }\xi\in T_zU.
\]
But $\operatorname{grad}_{\bar g}\bar{\mathcal L}(z)$ is characterized by
\[
\bar g_z\bigl(\operatorname{grad}_{\bar g}\bar{\mathcal L}(z),\xi\bigr)
=
D\bar{\mathcal L}_z[\xi]
\qquad
\text{for all }\xi\in T_zU.
\]
By uniqueness of the Riemannian gradient,
\[
\operatorname{grad}_{g^S}\mathcal L^S(z)
=
\operatorname{grad}_{\bar g}\bar{\mathcal L}(z).
\]
Hence the ODE
\[
\dot z_t=-\operatorname{grad}_{g^S}\mathcal L^S(z_t)
\]
coincides pointwise with
\[
\dot z_t=-\operatorname{grad}_{\bar g}\bar{\mathcal L}(z_t)
\]
on $U$. This proves the first claim.

Let $z_*\in U$ be a critical point of $\mathcal L^S$. Since $\mathcal L^S=\bar{\mathcal L}|_U$, the point $z_*$ is also a critical point of $\bar{\mathcal L}$ restricted to $U$. By Proposition \ref{prop:effective_hessian_local}, for every $z\in U$ and every $\xi,\eta\in T_zU$,
\[
\operatorname{Hess}_{\bar g}\bar{\mathcal L}(z)(\xi,\eta)
=
\operatorname{Hess}_{g^S}\mathcal L^S(z)(\xi,\eta).
\]
Applying this at $z=z_*$ gives
\[
\operatorname{Hess}_{g^S}\mathcal L^S(z_*)
=
\operatorname{Hess}_{\bar g}\bar{\mathcal L}(z_*),
\]
under the canonical identification $T_{z_*}U=T_{z_*}\mathcal M_{\reg}$, since $U$ is an open subset of $\mathcal M_{\reg}$.

This proves the second claim.

Because the flow is exactly the quotient gradient flow written in the reduced coordinates induced by $s$, all local dynamical properties near $z_*$ are properties of the same Riemannian gradient system, merely expressed in different coordinates.

In particular, the linearization of the gradient vector field at a critical point is determined by the corresponding Hessian. Since the Hessians agree, the local second-order geometry governing the flow near $z_*$ is the same whether described on the quotient manifold $(U,\bar g)$ or on the reduced coordinate chart $(U,g^S)$.

More concretely, if the reduced Hessian $\operatorname{Hess}_{g^S}\mathcal L^S(z_*)$ is positive definite, then by Corollary \ref{cor:nondegeneracy_modulo_symmetry_full} the critical point is a nondegenerate strict local minimizer modulo symmetry, and by Theorem \ref{thm:local_convergence_effective_curvature_full} the local gradient-flow convergence rate is controlled by the extremal eigenvalues of this Hessian relative to the metric $g^S$. Since this Hessian equals the quotient Hessian, its spectrum coincides with the spectrum of the effective Hessian on the quotient.

Therefore all local stability and local convergence statements for quotient gradient flow near $z_*$ are determined by the spectrum of
\[
\operatorname{Hess}_{g^S}\mathcal L^S(z_*),
\]
equivalently by the effective Hessian on the quotient.
\end{proof}

Corollary \ref{cor:gauge_fixed_reduced_dynamics_full} clarifies the role of gauge fixing. A good gauge does not change the intrinsic geometry; it merely provides coordinates on a local slice transverse to the symmetry orbits. In those coordinates, the quotient geometry appears as a reduced Riemannian optimization problem.

\subsection{Interpretation}
\label{subsec:interpretation_gradient_flows}

The results of this section establish a clear separation between three levels of description.

First, parameter-space Euclidean gradient flow is the standard optimization dynamics used in practice, but its ambient geometry is contaminated by redundancy. Second, the realization map removes the functionally invisible directions and induces the quotient metric $\bar g$, which defines an intrinsic notion of steepest descent on the space of distinct predictors. Third, local convergence near critical points is governed by the effective Hessian, whose spectrum measures curvature only after symmetry directions have been removed.

This viewpoint has two consequences that will matter later. On the one hand, apparently severe degeneracy in the Euclidean Hessian may be entirely attributable to symmetry and thus irrelevant for function-level training. On the other hand, once one passes to the quotient, the local dynamics is controlled by a nondegenerate geometric object whose curvature can be meaningfully related to optimization stability and, in the next section, to implicit bias.

In summary, the quotient manifold $(\mathcal M_{\reg},\bar g)$ is not merely a reformulation of parameter redundancy. It is the natural dynamical state space for training trajectories once the objective is viewed through the finite-sample realization map.

\section{Implicit Bias Through Quotient Geometry}
\label{sec:5}

Sections \ref{sec:3} and \ref{sec:4} established that the natural local geometry of shallow networks is not the ambient Euclidean geometry of the parameter space, but the quotient geometry induced by the finite-sample realization map after removal of symmetry directions. We now turn to a complementary question: when the empirical objective admits many minimizing parameter configurations, which predictors are selected by gradient-based training? The purpose of this section is to formulate an implicit-bias principle at the quotient level and to show that, on the regular set, the relevant notion of simplicity is a property of equivalence classes in the quotient manifold rather than of individual parameter representatives.

Our emphasis is structural rather than fully global. We do not attempt here to classify all asymptotic limits of training dynamics for general shallow networks. Instead, we isolate a quotient-geometric mechanism that becomes visible whenever the loss admits a nontrivial zero-loss set and the gradient flow converges to that set within a regular region. The main message is that any meaningful bias must be defined modulo the symmetry group. Quotient geometry provides the correct framework for making this statement precise.

Throughout the section, we retain the notation of the previous sections. In particular,
\[
\Phi_X:\Theta_{\reg}\to \mathbb R^n
\]
denotes the finite-sample realization map,
\[
q:\Theta_{\reg}\to \mathcal M_{\reg}=\Theta_{\reg}/G
\]
is the quotient map,
\[
\bar g
\]
is the quotient metric induced by $\Phi_X$, and
\[
\bar{\mathcal L}:\mathcal M_{\reg}\to\mathbb R
\]
is the descended empirical loss.

\subsection{Quotient-level complexity}
\label{subsec:quotient_level_complexity}

A basic obstacle in discussing implicit bias for overparameterized networks is that a single predictor corresponds to many parameter values. Any complexity measure defined directly on $\Theta$ is therefore representation-dependent unless it is invariant under the group action. This motivates a quotient-level notion of simplicity.

Let
\[
R:\Theta_{\reg}\to\mathbb R_{\ge 0}
\]
be a continuous function that may be interpreted as a parameter-space complexity, such as a norm, a path-like quantity, or a gauge-fixing energy. We assume that $R$ is bounded from below and proper on each orbit, so that minimizing sequences along a fixed orbit admit convergent subsequences inside $\Theta_{\reg}$. The quotient complexity induced by $R$ is defined by
\[
\bar R([\theta])
:=
\inf_{\theta'\in q^{-1}([\theta])} R(\theta').
\]
Equivalently, $\bar R$ is the minimal value of $R$ among all parameter representatives of the same predictor class. By construction, $\bar R$ is attached to the orbit $[\theta]$, not to a particular parameterization.

This definition formalizes a distinction that is often blurred in parameter-space discussions. A large Euclidean norm of a specific representative may reflect nothing more than an unfavorable gauge choice along the orbit. By contrast, $\bar R([\theta])$ records the smallest complexity compatible with the predictor class itself. It is therefore the natural notion of simplicity once parameter redundancy has been removed.

The quotient point of view also clarifies the relation between complexity and identifiability. On the regular set, each $[\theta]\in\mathcal M_{\reg}$ is a locally identifiable predictor class, and the quantity $\bar R([\theta])$ measures simplicity within that class. Accordingly, any asymptotic preference exhibited by gradient flow should, if it is intrinsic, be expressible as a variational statement involving $\bar R$ on a subset of $\mathcal M_{\reg}$.

\subsection{Zero-loss sets and quotient feasibility}
\label{subsec:zero_loss_sets_quotient_feasibility}

The natural feasible set for implicit bias is the set of predictor classes satisfying the empirical interpolation constraint. For a prescribed loss level $c\in\mathbb R$, define
\[
\mathcal S_{\le c}
:=
\{x\in\mathcal M_{\reg}:\bar{\mathcal L}(x)\le c\},
\qquad
\mathcal S_{=c}
:=
\{x\in\mathcal M_{\reg}:\bar{\mathcal L}(x)=c\}.
\]
Of particular interest is the interpolating set
\[
\mathcal Z
:=
\{x\in\mathcal M_{\reg}:\bar{\mathcal L}(x)=0\},
\]
whenever the training problem is realizable on the regular set.

Because $\bar{\mathcal L}$ is defined on the quotient, $\mathcal Z$ is the set of predictor classes fitting the data, not a subset of parameters. If interpolation is possible, then every element of $\mathcal Z$ corresponds to an entire orbit of parameter realizations. The implicit-bias question is therefore not which parameter vector is selected, but rather which orbit in $\mathcal Z$ is approached by training.

This suggests the following general principle.
Quotient implicit-bias principle:
When gradient flow converges to a zero-loss set, the selected limit should be characterized by a complexity functional defined on $\mathcal Z\subset\mathcal M_{\reg}$, rather than by a representative-dependent parameter norm.

The remainder of the section develops a local version of this principle and then illustrates it in a fully tractable model class.

\subsection{A local quotient variational principle}
\label{subsec:local_quotient_variational_principle}

We now formulate a local theorem stating that, under a compatibility condition between the gradient dynamics and a quotient complexity functional, the limiting orbit of the flow is characterized variationally inside the zero-loss component reached by the dynamics. The statement is intentionally local. Its role is to isolate the geometric mechanism of implicit bias without requiring a global classification of all zero-loss solutions.

Let
\[
\gamma:[0,\infty)\to \mathcal M_{\reg}
\]
be a $C^1$ solution of the quotient gradient flow
\[
\dot\gamma_t=-\operatorname{grad}_{\bar g}\bar{\mathcal L}(\gamma_t),
\]
and assume that $\bar{\mathcal L}(\gamma_t)\to 0$ as $t\to\infty$. Let $\mathcal A\subset \mathcal M_{\reg}$ be a closed forward-invariant set containing the trajectory and all of its accumulation points. We regard $\mathcal A$ as a region of the quotient in which the dynamics remains regular and the geometry is well defined.

Let
\[
\bar R:\mathcal A\to\mathbb R
\]
be a $C^1$ quotient complexity functional. We assume that there exists a continuous scalar function
\[
\lambda:\mathcal A\to\mathbb R
\]
such that
\[
\operatorname{grad}_{\bar g}\bar R(x)=\lambda(x)\operatorname{grad}_{\bar g}\bar{\mathcal L}(x)
\qquad
\text{for all }x\in\mathcal A.
\]
In words, the quotient complexity and the empirical loss have collinear gradients on the region explored by the dynamics. This is exactly the situation in which decrease of the empirical loss induces monotonicity of the complexity variable.

The next theorem gives a rigorous local variational statement.
\begin{theorem}[Local quotient variational principle]
\label{thm:local_quotient_variational_principle}
Let $\gamma:[0,\infty)\to \mathcal M_{\reg}$ be a $C^1$ solution of the quotient gradient flow
\[
\dot\gamma_t=-\operatorname{grad}_{\bar g}\bar{\mathcal L}(\gamma_t),
\]
and assume that $\gamma_t\subset \mathcal A$, where $\mathcal A\subset \mathcal M_{\reg}$ is closed and forward invariant. Assume moreover that
\[
\bar{\mathcal L}(\gamma_t)\to 0
\qquad\text{and}\qquad
\gamma_t\to x_\infty\in \mathcal A\cap \mathcal Z
\quad\text{as }t\to\infty,
\]
where
\[
\mathcal Z:=\{x\in \mathcal M_{\reg}:\bar{\mathcal L}(x)=0\}.
\]

Let $\bar R\in C^1(\mathcal A)$ satisfy
\[
\operatorname{grad}_{\bar g}\bar R(x)=\lambda(x)\operatorname{grad}_{\bar g}\bar{\mathcal L}(x)
\qquad
\text{for all }x\in\mathcal A,
\]
for some continuous function $\lambda:\mathcal A\to\mathbb R$ with $\lambda\ge 0$. Assume further that

\begin{enumerate}
    \item $\bar R$ is constant on each connected component of $\mathcal A\cap\mathcal Z$;
    \item the connected component $C$ of $\mathcal A\cap\mathcal Z$ containing $x_\infty$ contains a unique minimizer $x^\star$ of $\bar R$.
\end{enumerate}

Then
\[
x_\infty=x^\star.
\]

In particular, the quotient gradient flow selects the unique minimizer of $\bar R$ in the zero-loss component reached by the dynamics.
\end{theorem}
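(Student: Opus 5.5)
The plan is to derive the conclusion from the two enumerated hypotheses alone, by a set-theoretic argument on the component $C$. The collinearity condition $\operatorname{grad}_{\bar g}\bar R=\lambda\operatorname{grad}_{\bar g}\bar{\mathcal L}$ with $\lambda\ge 0$ is not needed for the identification of the limit. It will only be used for an interpretive remark about monotonicity.

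The key steps, in order, are as follows.

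\textbf{Step 1: $x_\infty$ lies in $C$.} By hypothesis $\gamma_t\to x_\infty\in\mathcal A\cap\mathcal Z$, and $C$ is by definition the connected component of $\mathcal A\cap\mathcal Z$ containing $x_\infty$. So $x_\infty\in C$.

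\textbf{Step 2: $C$ is a single point.} By assumption (1), $\bar R$ is constant on $C$, say $\bar R\equiv r_C$ there. Then every point $y\in C$ satisfies $\bar R(y)=r_C=\inf_C\bar R$, so every point of $C$ is a minimizer of $\bar R$ on $C$. Assumption (2) says the minimizer is unique and equals $x^\star$. Hence $C=\{x^\star\}$.

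\textbf{Step 3: conclusion.} Combining Steps 1 and 2 gives $x_\infty\in C=\{x^\star\}$, that is, $x_\infty=x^\star$. This is the claimed selection statement.

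For the interpretive part, I would add the observation that along the flow the chain rule gives
\[
\frac{d}{dt}\bar R(\gamma_t)=\bar g\bigl(\operatorname{grad}_{\bar g}\bar R,\dot\gamma_t\bigr)=-\lambda(\gamma_t)\,\|\operatorname{grad}_{\bar g}\bar{\mathcal L}(\gamma_t)\|_{\bar g}^2\le 0 .
\]
So $\bar R$ is nonincreasing along the trajectory, and by continuity $\bar R(x_\infty)\le\bar R(\gamma_0)$. This explains why the collinearity hypothesis expresses a genuine bias mechanism, although it is logically redundant for the stated conclusion.

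The main obstacle is not technical but one of honest bookkeeping. One has to notice, and state explicitly, that hypotheses (1) and (2) together force the component $C$ to be degenerate. The theorem is therefore essentially immediate, and its content lies in the modeling assumptions rather than in the argument. A more substantive version would replace (1) with a non-constant $\bar R$ and use the monotonicity above together with a Łojasiewicz-type argument to locate $x_\infty$. That strengthening is beyond the statement as given, so I would confine the proof to Steps 1--3.
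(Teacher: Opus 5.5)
Your argument is correct and is essentially the paper's proof. The paper also computes $\frac{d}{dt}\bar R(\gamma_t)=-\lambda(\gamma_t)\|\operatorname{grad}_{\bar g}\bar{\mathcal L}(\gamma_t)\|_{\bar g}^2\le 0$, but its conclusion rests only on $x_\infty\in C$, constancy of $\bar R$ on $C$, and uniqueness of the minimizer on $C$, exactly as in your Steps 1--3; your remark that (1) and (2) force $C=\{x^\star\}$, so the collinearity hypothesis plays no logical role, makes explicit what the paper leaves implicit.
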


\begin{proof}
Since $\gamma$ is a $C^1$ solution of
\[
\dot\gamma_t=-\operatorname{grad}_{\bar g}\bar{\mathcal L}(\gamma_t),
\]
the chain rule gives
\[
\frac{d}{dt}\bar R(\gamma_t)
=
D\bar R_{\gamma_t}[\dot\gamma_t].
\]
Using the defining property of the Riemannian gradient, we obtain
\[
D\bar R_{\gamma_t}[\dot\gamma_t]
=
\bar g_{\gamma_t}\bigl(\operatorname{grad}_{\bar g}\bar R(\gamma_t),\dot\gamma_t\bigr).
\]
Substituting the gradient-flow equation yields
\[
\frac{d}{dt}\bar R(\gamma_t)
=
-\bar g_{\gamma_t}\bigl(\operatorname{grad}_{\bar g}\bar R(\gamma_t),
\operatorname{grad}_{\bar g}\bar{\mathcal L}(\gamma_t)\bigr).
\]
Now apply the collinearity assumption
\[
\operatorname{grad}_{\bar g}\bar R(\gamma_t)
=
\lambda(\gamma_t)\operatorname{grad}_{\bar g}\bar{\mathcal L}(\gamma_t).
\]
Then
\[
\frac{d}{dt}\bar R(\gamma_t)
=
-\lambda(\gamma_t)
\bigl\|\operatorname{grad}_{\bar g}\bar{\mathcal L}(\gamma_t)\bigr\|_{\bar g}^{2}.
\]
Since $\lambda(\gamma_t)\ge 0$, it follows that
\[
\frac{d}{dt}\bar R(\gamma_t)\le 0
\qquad
\text{for all }t\ge 0.
\]
Hence $t\mapsto \bar R(\gamma_t)$ is nonincreasing on $[0,\infty)$.

Because $\gamma_t\to x_\infty$ and $\bar R$ is continuous on $\mathcal A$, we also have
\[
\lim_{t\to\infty}\bar R(\gamma_t)=\bar R(x_\infty).
\]

Let $C$ denote the connected component of $\mathcal A\cap\mathcal Z$ containing $x_\infty$. By assumption, $\bar R$ is constant on each connected component of $\mathcal A\cap\mathcal Z$. Therefore, for every $y\in C$,
\[
\bar R(y)=\bar R(x_\infty).
\]
In particular, every point of $C$ has the same $\bar R$-value.

Now let $x^\star\in C$ be the unique minimizer of $\bar R$ on $C$. Since $x^\star\in C$, the constancy of $\bar R$ on $C$ implies
\[
\bar R(x^\star)=\bar R(x_\infty).
\]

Because $x^\star$ is the unique minimizer of $\bar R$ on $C$, any point in $C$ with the same $\bar R$-value as $x^\star$ must coincide with $x^\star$. But the  above shows that $x_\infty\in C$ and
\[
\bar R(x_\infty)=\bar R(x^\star).
\]
Therefore,
\[
x_\infty=x^\star.
\]

\end{proof}

Theorem \ref{thm:local_quotient_variational_principle} should be interpreted as a structural template rather than as a global theorem for arbitrary losses and architectures. Its importance is that the selected predictor is characterized by a quotient-level variational problem. The relevant optimization principle is not formulated on the parameter space $\Theta$, where orbits introduce artificial multiplicity, but on the quotient $\mathcal M_{\reg}$, where each point represents a distinct predictor class.

The assumption that $\bar R$ is constant on each connected zero-loss component is natural in the present setting. Indeed, whenever $\mathcal Z$ is a smooth submanifold and $\operatorname{grad}_{\bar g}\bar{\mathcal L}=0$ on $\mathcal Z$, the collinearity assumption implies
\[
\operatorname{grad}_{\bar g}\bar R=0
\qquad
\text{on }\mathcal Z,
\]
and thus $\bar R$ is locally constant on each connected component of $\mathcal Z$. The theorem isolates exactly this mechanism.

\subsection{Interpretation in homogeneous models}
\label{subsec:interpretation_homogeneous_models}

The quotient formulation is especially natural for positively homogeneous networks. In such models, the scaling symmetry identifies entire families of parameter configurations representing the same local predictor geometry. Any parameter norm that is not invariant under this symmetry is therefore an unreliable indicator of predictor complexity. By contrast, a quotient complexity $\bar R$ removes this ambiguity automatically.

To see the conceptual advantage, suppose $R(\theta)$ is a smooth gauge-dependent functional, such as the squared Euclidean norm of a chosen representative. Along an orbit,
\[
\theta\sim g\cdot \theta,
\]
the values $R(\theta)$ and $R(g\cdot\theta)$ may differ substantially even though the two parameterizations correspond to the same function. Hence a statement of the form "gradient flow prefers small $R(\theta)$" is not invariantly meaningful unless one first fixes a gauge. The quotient complexity
\[
\bar R([\theta])=\inf_{\theta'\in q^{-1}([\theta])}R(\theta')
\]
remedies this by encoding the smallest value of $R$ compatible with the predictor class itself.

This perspective also clarifies the role of balancing phenomena often observed in homogeneous models. A balanced representative is not intrinsically special because its Euclidean norm is small, but because it realizes an orbit in a canonical low-complexity gauge. What gradient flow selects, when viewed quotient-geometrically, is therefore not a specific balanced parameter vector but a predictor class whose orbit admits a distinguished low-complexity representative.

\subsection{A solvable case: quadratic activation networks}
\label{subsec:solvable_case_quadratic_activation}

We now specialize to the model
\[
f_\theta(x)=\sum_{i=1}^m a_i (w_i^\top x)^2,
\]
which provides a fully tractable instance of the quotient-bias perspective. As observed earlier, this model can be rewritten as
\[
f_\theta(x)=x^\top Q(\theta)x,
\qquad
Q(\theta):=\sum_{i=1}^m a_i w_i w_i^\top.
\]
Thus the predictor depends on $\theta$ only through the symmetric matrix $Q(\theta)$. In particular, the quotient manifold of regular predictor classes can be locally identified with an appropriate manifold of symmetric matrices of fixed rank and signature.

Let
\[
\Psi:\mathcal M_{\reg}\to \mathcal Q
\]
denote the induced local identification with the matrix space $\mathcal Q$, and write the empirical loss as
\[
\bar{\mathcal L}([\theta])
=
\widetilde L\left(
x_1^\top Q x_1,\dots,x_n^\top Q x_n
\right),
\qquad
Q=\Psi([\theta]).
\]
In this representation, the redundancy of the factorization disappears: distinct parameterizations with the same matrix $Q$ define the same point of the quotient. Consequently, complexity should be measured at the matrix level.

A natural quotient complexity in this model is any spectral functional of $Q$, for instance
\[
\bar R([\theta])=\|Q\|_*,
\]
or, on a fixed-rank or fixed-signature stratum, a Frobenius-type energy. Since $Q$ is itself a quotient coordinate, these quantities are intrinsically defined on predictor classes and do not depend on a particular factorization.

The next proposition records the resulting variational interpretation.
\begin{proposition}[Matrix-level quotient bias for quadratic activation networks]
\label{prop:matrix_level_quotient_bias_quadratic}
Consider the quadratic activation model
\[
f_\theta(x)=\sum_{i=1}^m a_i (w_i^\top x)^2,
\qquad
\theta=((a_1,w_1),\dots,(a_m,w_m))\in \Theta_{\reg},
\]
and define
\[
Q(\theta):=\sum_{i=1}^m a_i w_i w_i^\top \in \mathrm{Sym}(d).
\]
Assume that there exists an open set $U\subset \mathcal M_{\reg}$, an embedded smooth matrix manifold $\mathcal Q\subset \mathrm{Sym}(d)$, and a smooth bijection
\[
\Psi:U\to \mathcal Q
\]
with smooth inverse, such that for every $[\theta]\in U$,
\[
f_\theta(x)=x^\top \Psi([\theta]) x
\qquad
\text{for all }x\in\mathbb R^d.
\]
Assume moreover that the quotient gradient-flow trajectory
\[
\gamma:[0,\infty)\to U
\]
of $\bar{\mathcal L}$ is contained in a closed forward-invariant set $\mathcal A\subset U$, converges to some $x_\infty\in \mathcal A\cap \mathcal Z$, and that the hypotheses of Theorem \ref{thm:local_quotient_variational_principle} hold on $\mathcal A$ for a quotient complexity functional $\bar R:U\to\mathbb R$ depending only on $Q=\Psi([\theta])$. Let $C$ be the connected component of $\mathcal A\cap \mathcal Z$ containing $x_\infty$.

If the minimizer of $\bar R$ over $C$ is unique, then the limiting predictor is characterized by the matrix-level variational problem
\[
Q_\infty
=
\arg\min\left\{\bar R(Q): Q\in \Psi(C)\right\},
\]
where
\[
Q_\infty:=\Psi(x_\infty).
\]
\end{proposition}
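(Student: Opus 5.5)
The plan is to reduce the proposition directly to Theorem \ref{thm:local_quotient_variational_principle} and then transport the conclusion through the diffeomorphism $\Psi$. First I will apply that theorem on $\mathcal A\subset U$. All of its hypotheses are assumed: the quotient gradient flow $\gamma$ stays in $\mathcal A$ and converges to $x_\infty\in\mathcal A\cap\mathcal Z$, the collinearity $\operatorname{grad}_{\bar g}\bar R=\lambda\operatorname{grad}_{\bar g}\bar{\mathcal L}$ holds with $\lambda\ge0$, $\bar R$ is constant on components of $\mathcal A\cap\mathcal Z$, and the minimizer of $\bar R$ on $C$ is unique. The theorem then yields $x_\infty=x^\star$, where $x^\star$ is the unique minimizer of $\bar R$ on $C$.

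Second, I will make precise the meaning of ``$\bar R$ depends only on $Q$''. There is a function $\widehat R$ on $\mathcal Q$ with $\bar R=\widehat R\circ\Psi$ on $U$; it is well defined because $\Psi$ is a bijection, so one may simply set $\widehat R:=\bar R\circ\Psi^{-1}$. Abusing notation as the statement does, I will write $\bar R(Q)$ for $\widehat R(Q)$. Since $\Psi$ is a bijection $U\to\mathcal Q$, it restricts to a bijection $C\to\Psi(C)$. Minimizing $\bar R$ over $C$ is therefore the same problem as minimizing $\widehat R$ over $\Psi(C)$: for $y\in C$ we have $\bar R(y)=\widehat R(\Psi(y))$, so $y$ minimizes on $C$ exactly when $\Psi(y)$ minimizes on $\Psi(C)$. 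Uniqueness also transfers, because $\Psi$ is injective. Consequently $\Psi(x^\star)$ is the unique element of $\arg\min\{\widehat R(Q):Q\in\Psi(C)\}$, and
\[
Q_\infty=\Psi(x_\infty)=\Psi(x^\star)
\]
gives the claimed identity.

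Third, I will add the interpretive remark that $Q_\infty$ really determines the limiting predictor. By hypothesis $f_\theta(x)=x^\top\Psi([\theta])x$ for every $[\theta]\in U$. Continuity of $\Psi$ and convergence $\gamma_t\to x_\infty$ then give $Q(\theta_t)\to Q_\infty$ along any lift $\theta_t$ of $\gamma_t$. Hence the realized functions converge to $x\mapsto x^\top Q_\infty x$, pointwise and uniformly on compact sets.

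The main obstacle is not analytic. All of the substance is packed into the hypotheses, so the real difficulty is bookkeeping: keeping the set $C$ in the quotient distinct from its image $\Psi(C)$ in matrix space, and checking that $\Psi(C)$ is exactly the connected component of $\Psi(\mathcal A)\cap\Psi(\mathcal Z)$ containing $Q_\infty$. That last identification follows because a homeomorphism preserves connected components. I do not expect to need smoothness of $\Psi^{-1}$ except to note that $\widehat R$ inherits the $C^1$ regularity of $\bar R$, which is what makes the matrix-level functional a legitimate complexity on $\mathcal Q$.
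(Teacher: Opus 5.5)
Your proposal is correct and follows essentially the same route as the paper's proof. You invoke Theorem~\ref{thm:local_quotient_variational_principle} to get $x_\infty=x^\star$, define the matrix-level functional as $\bar R\circ\Psi^{-1}$, and transport the unique minimizer through the bijection $\Psi|_C:C\to\Psi(C)$. Your two extra remarks are not in the paper's proof but are valid. The first is that $\Psi$ preserves connected components. The second is that $Q(\theta_t)=\Psi(\gamma_t)\to Q_\infty$, which holds because a symmetric matrix is determined by its quadratic form, so the realized predictors converge.
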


\begin{proof}
By assumption, $\Psi:U\to \mathcal Q$ is a diffeomorphism onto the smooth matrix manifold $\mathcal Q$, and for every $[\theta]\in U$,
\[
f_\theta(x)=x^\top \Psi([\theta]) x.
\]
Thus the value
\[
Q=\Psi([\theta])
\]
is an intrinsic coordinate of the quotient point $[\theta]$: it depends only on the predictor class and not on the particular parameter representative $\theta$.

In particular, the image under $\Psi$ of any subset of $U$ is a subset of the matrix manifold $\mathcal Q$, and the connected component
\[
C\subset \mathcal A\cap \mathcal Z
\]
is mapped diffeomorphically onto the connected subset
\[
\Psi(C)\subset \mathcal Q.
\]

By assumption, $\bar R$ depends only on $Q=\Psi([\theta])$. Equivalently, there exists a function
\[
\widetilde R:\mathcal Q\to \mathbb R
\]
such that
\[
\bar R=\widetilde R\circ \Psi
\qquad
\text{on }U.
\]
To see this, define
\[
\widetilde R(Q):=\bar R(\Psi^{-1}(Q)),
\qquad
Q\in \mathcal Q.
\]
This is well defined because $\Psi$ is bijective on $U$, and smooth because both $\bar R$ and $\Psi^{-1}$ are smooth.

Therefore minimizing $\bar R$ over a subset of $U$ is equivalent to minimizing $\widetilde R$ over the corresponding subset of $\mathcal Q$. In particular,
\[
\arg\min_{x\in C}\bar R(x)
=
\Psi^{-1}\left(\arg\min_{Q\in \Psi(C)}\widetilde R(Q)\right).
\]

Since the proposition writes $\bar R(Q)$ directly at matrix level, we identify $\widetilde R$ with $\bar R$ by abuse of notation. Under this identification,
\[
\bar R(\Psi(x))=\bar R(x)
\qquad
\text{for all }x\in U.
\]

All hypotheses of Theorem \ref{thm:local_quotient_variational_principle} are assumed to hold on $\mathcal A$. Therefore, since $\gamma_t\to x_\infty\in C$, Theorem \ref{thm:local_quotient_variational_principle} implies that
\[
x_\infty
=
\arg\min_{x\in C}\bar R(x),
\]
and this minimizer is unique.

Applying the diffeomorphism $\Psi$ to both sides yields
\[
\Psi(x_\infty)
=
\arg\min_{Q\in \Psi(C)} \bar R(Q).
\]
Define
\[
Q_\infty:=\Psi(x_\infty).
\]
Then
\[
Q_\infty
=
\arg\min\left\{\bar R(Q):Q\in \Psi(C)\right\}.
\]

The set $\Psi(C)$ is precisely the matrix representation of the zero-loss quotient component reached by the dynamics. Since $\Psi$ is a quotient coordinate map, different parameter factorizations corresponding to the same matrix $Q$ have already been identified. Consequently, the variational characterization above is intrinsic: it is a selection principle on matrices, not on parameter representatives.

\end{proof}

Proposition \ref{prop:matrix_level_quotient_bias_quadratic} illustrates why quotient geometry is particularly effective in simple homogeneous models. The apparent parameter redundancy of the neural network becomes an ordinary matrix-factorization redundancy, and the implicit bias becomes a variational selection principle on the corresponding matrix manifold. From this viewpoint, quotient geometry does not merely remove nuisance directions; it exposes the lower-dimensional object on which the bias is actually acting.

\subsection{False simplicity versus intrinsic simplicity}
\label{subsec:false_vs_intrinsic_simplicity}

The distinction between parameter-space and quotient-space complexity parallels the distinction between false flatness and intrinsic flatness introduced in Section \ref{sec:3}. The analogous dichotomy for simplicity is as follows.

A parameter configuration may appear simple because it has small Euclidean norm, balanced layers, or sparse coordinates in a particular gauge. Yet such properties are not intrinsic unless they are stable under the symmetry action. They may vary substantially along the same orbit and therefore reflect representational choice rather than predictor structure. We refer to this phenomenon as false simplicity.

By contrast, intrinsic simplicity is a property of the orbit itself. It is measured by a quotient-level functional such as $\bar R$, which assigns the same value to all parameterizations of the same predictor class. The implicit-bias statements of this section concern intrinsic simplicity, not false simplicity.

This distinction is important in overparameterized networks. A training trajectory may drift substantially in parameter space while remaining close to a fixed predictor class in the quotient. If one tracks only a gauge-dependent complexity of the parameters, one may misinterpret such drift as evidence for or against simplicity. Quotient geometry resolves this ambiguity: only motion in $\mathcal M_{\reg}$ changes the realized predictor class, and only quotient complexity can capture a genuine preference among functionally distinct solutions.

\subsection{Consequences for the remainder of the theory}
\label{subsec:consequences_remainder_theory}

The results of this section do not claim a complete characterization of implicit bias for all shallow networks. Rather, they establish the correct geometric form such a characterization must take. Any meaningful asymptotic bias theorem for a symmetric overparameterized model should answer a quotient-level variational question:
\[
\text{which point of } \mathcal Z\subset \mathcal M_{\reg} \text{ is selected by the training dynamics?}
\]
The quotient metric $\bar g$ governs the local descent geometry, the effective Hessian governs local curvature, and the quotient complexity $\bar R$ supplies the correct notion of simplicity on the feasible set.

This viewpoint suggests two general lessons. First, implicit bias should be formulated on the space of predictor classes rather than on the raw parameter space. Second, in tractable models, quotient coordinates may convert an apparently neural-network-specific bias question into an ordinary variational selection problem on a lower-dimensional manifold. The quadratic activation model is the clearest example, but the same structural principle extends to broader homogeneous architectures whenever the symmetry-reduced representation can be made explicit.

In the next section, we complement the theory with numerical illustrations showing that quotient-level curvature and quotient-level simplicity are more stable indicators of optimization behavior than their raw parameter-space analogues.

\section{Numerical Illustrations of Quotient Geometry}
We now complement the preceding theory with numerical experiments in deliberately simple shallow-network models. The goal of this section is not to demonstrate competitive predictive performance, but to make the geometric claims of Sections \ref{sec:2}--\ref{sec:5} directly observable in a controlled setting. All experiments are conducted with the quadratic activation model
\[
f_\theta(x)=\sum_{i=1}^m a_i (w_i^\top x)^2
= x^\top Q(\theta)x,
\qquad
Q(\theta):=\sum_{i=1}^m a_i w_i w_i^\top,
\]
for which the quotient object is explicit: distinct parameterizations with the same matrix $Q(\theta)$ represent the same predictor. This makes the model especially suitable for testing the distinction between ambient parameter-space geometry and quotient-level geometry.

Our experiments address three questions. First, does ambient flatness depend on the chosen parameter representative even when the realized predictor is unchanged? Second, do local optimization dynamics correlate more naturally with quotient-level curvature than with raw parameter-space curvature? Third, in an underdetermined regime where many quotient-level solutions fit the data, is implicit bias more naturally described in terms of matrix-level complexity than in terms of raw parameter norms?

\subsection{ Experimental setup}

Unless otherwise stated, we use synthetic data generated from a teacher quadratic model of the same form. All computations are carried out in double precision. Optimization uses PyTorch with GPU acceleration, and all Hessians are computed exactly by automatic differentiation for the low-dimensional models considered here. Since our interest is geometric rather than statistical, we work in small dimensions where spectra and matrix-level quantities can be inspected directly.

For the quadratic model, two types of quantities are distinguished throughout:

\begin{enumerate}
    \item Parameter-space quantities, such as the Euclidean Hessian with respect to $\theta$, the Euclidean parameter norm $\|\theta\|$, and path-like gauge-dependent complexities.
    \item Quotient-level quantities, computed from the matrix $Q(\theta)$, such as the Hessian of the loss in $Q$-coordinates, the Frobenius norm $\|Q\|_F$, the nuclear norm $\|Q\|_*$, the stable-rank surrogate $\|Q\|_F^2 / \|Q\|_{\mathrm{op}}^2$, and the singular spectrum of $Q$.
\end{enumerate}

Because the quadratic model admits the neuronwise scaling symmetry
\[
(a_i,w_i)\sim (c_i^{-2}a_i,c_iw_i),
\]
as well as hidden-unit permutation symmetry, it provides a particularly transparent setting in which the quotient construction can be visualized directly.

\subsection{Symmetry-induced false flatness}

Our first experiment tests the prediction of Section \ref{sec:3} that ambient flatness is representation-dependent, whereas quotient-coordinate curvature is intrinsic. We first train a quadratic network to essentially zero training error and then construct several orbit-equivalent representatives by combining hidden-unit permutations with neuronwise rescalings. Numerically, these representatives realize the same predictor on the sample set up to machine precision and induce the same matrix $Q(\theta)$. We then compare the Euclidean Hessian spectrum in parameter space with the Hessian spectrum in $Q$-space. 

The outcome is unambiguous. The Euclidean Hessian changes under symmetry-preserving rescaling even though the realized predictor does not change. In the plotted spectra, pure permutations leave the ambient Hessian almost unchanged, while rescalings visibly alter the small-eigenvalue region and hence the apparent local flatness. By contrast, after passing to quotient coordinates, the $Q$-space Hessian is numerically identical across all orbit-equivalent representatives. The comparison between the full spectra and the zoomed-in small-eigenvalue region makes clear that much of what appears as Euclidean flatness is not an intrinsic property of the predictor at all, but rather an artifact of redundant parameterization.

This is precisely the numerical counterpart of the distinction developed in Section \ref{sec:3}. Near-zero eigenvalues of the ambient Euclidean Hessian need not indicate predictor-level flatness; they may simply encode directions tangent to the symmetry orbit, or more generally directions distorted by the choice of representative. In the quadratic model, the quotient coordinate $Q$ removes this ambiguity completely. The experiment therefore provides a direct illustration of symmetry-induced false flatness: the same predictor class can appear to have different local curvature in ambient coordinates, while its quotient-coordinate curvature remains fixed.

\begin{figure}[H]
    \centering
    \includegraphics[width=\linewidth]{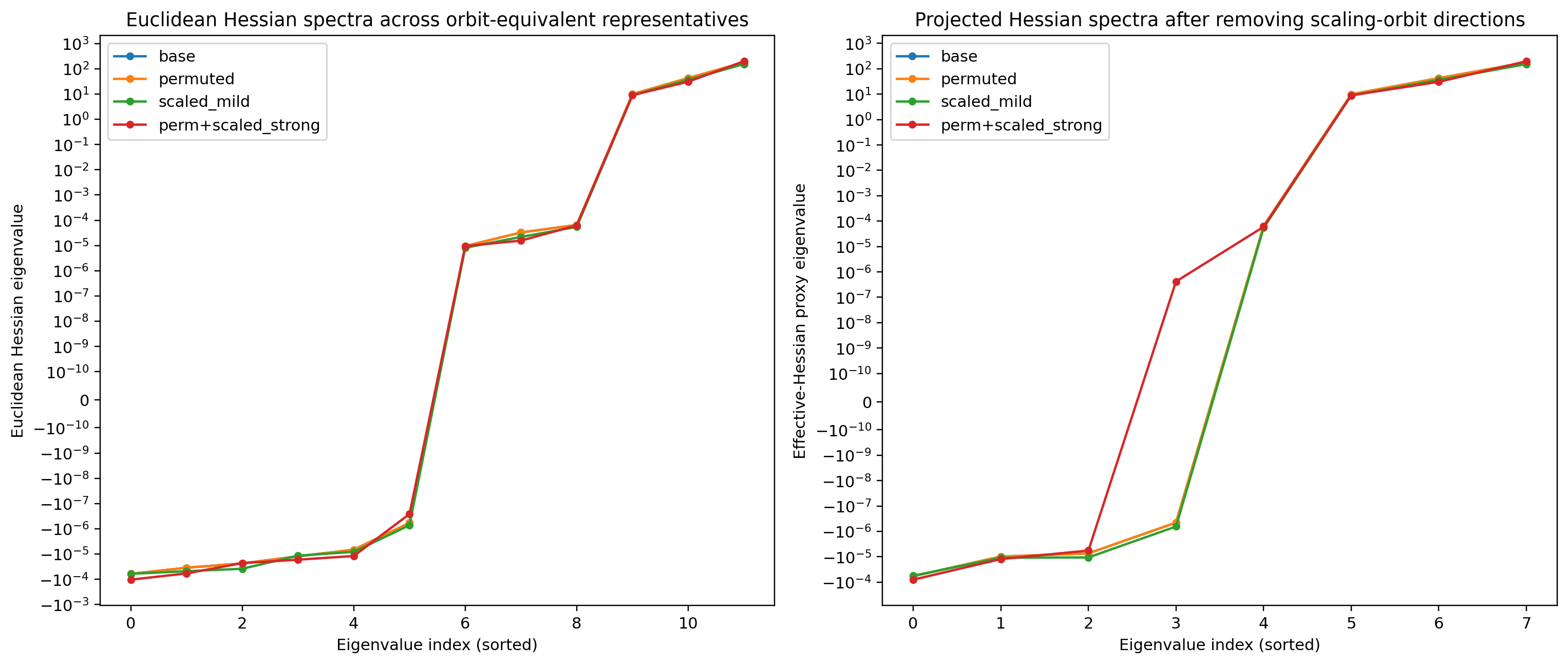}
    \caption{Euclidean Hessian spectra across orbit-equivalent representatives (left) and projected Hessian spectra after removing scaling-orbit directions (right). The variation in the Euclidean spectrum reflects representation dependence, while the projected spectrum is invariant.}
    \label{fig:false-flatness-projection}
\end{figure}

\begin{figure}[H]
    \centering
    \includegraphics[width=\linewidth]{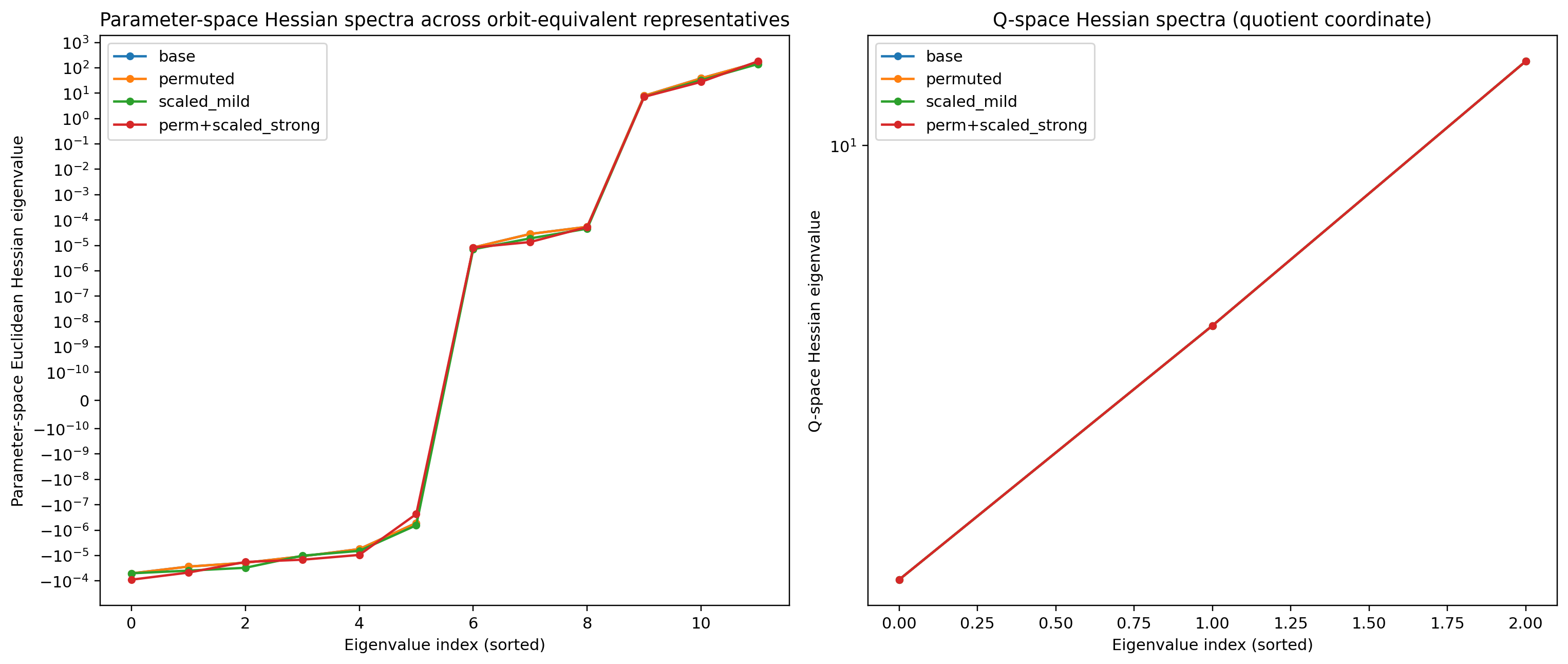}
    \caption{Parameter-space Hessian spectra vary across orbit-equivalent representatives (left), whereas $Q$-space Hessian spectra are numerically identical (right), confirming the intrinsic nature of quotient-level curvature.}
    \label{fig:hessian-invariance}
\end{figure}

\begin{figure}[H]
    \centering
    \includegraphics[width=\linewidth]{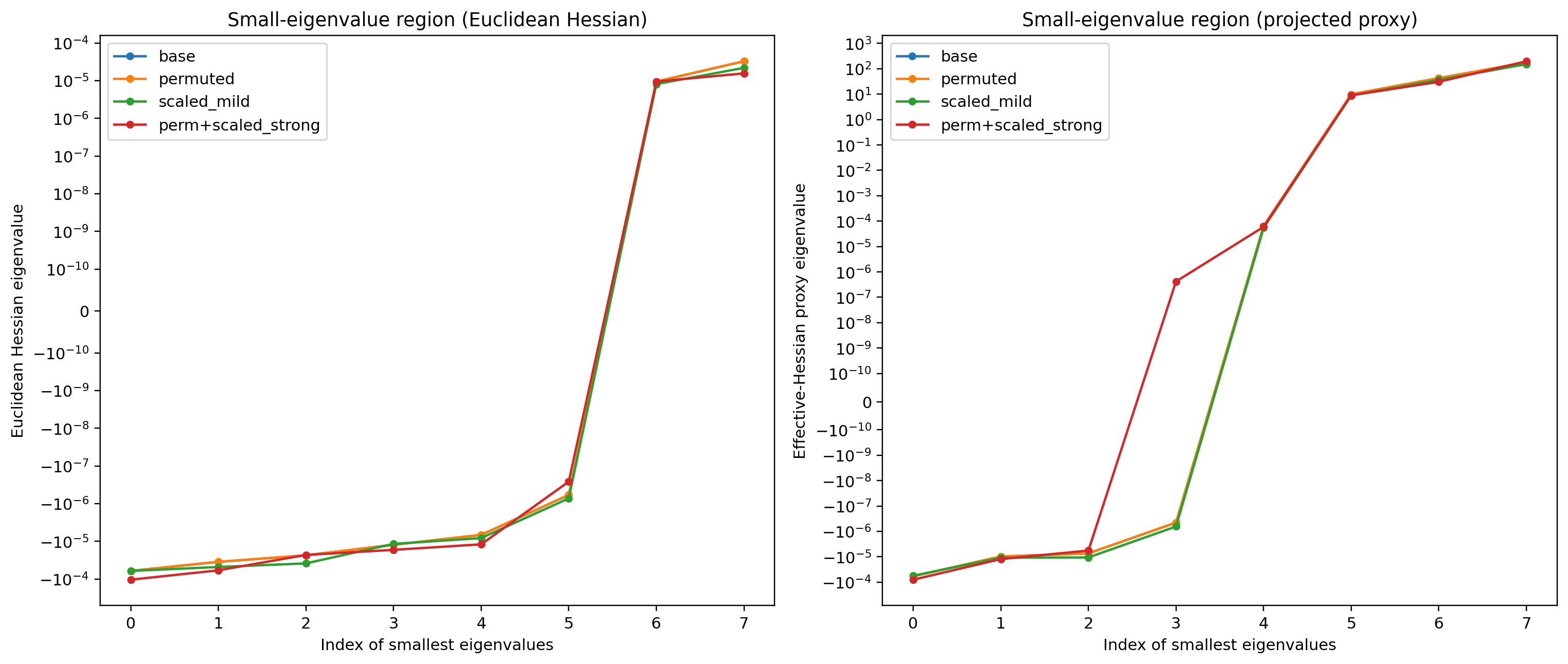}
    \caption{Zoomed view of the small-eigenvalue region: Euclidean Hessian (left) changes under rescaling, while the projected proxy (right) remains stable, demonstrating that apparent flatness stems from symmetry-induced redundancy.}
    \label{fig:small-eigenvalues-zoom}
\end{figure}

\subsection{Quotient-level curvature and local dynamics}

Our second experiment addresses the relation between local optimization behavior and curvature. The theoretical claim of Section \ref{sec:4} is not that every aspect of Euclidean training is determined by quotient geometry, but rather that the predictor-relevant local geometry should be more naturally organized by quotient-level curvature than by raw ambient curvature. To make this effect numerically visible, we work with quadratic classification under logistic loss rather than squared loss: with logistic loss, the Hessian in $Q$-coordinates genuinely varies with the point $Q$, so quotient-level curvature can meaningfully be compared across nearby states. We evaluate local behavior at an intermediate checkpoint before the loss enters the saturated near-zero regime, where second-order quantities become numerically degenerate and local decay is too weak to be informative. Around this checkpoint we construct two classes of nearby points: orbit-equivalent controls, obtained by permutation and mild rescaling and therefore preserving $Q$, and nearby non-equivalent perturbations, which change $Q$ while remaining in a comparable loss band. For each point we compute the Euclidean Hessian in parameter space, the Hessian in $Q$-space, several spectral summaries, and a short-run empirical log-loss decay rate under gradient descent.

The stable conclusion is that the most informative descriptors of short-run local decay are not ambient condition numbers, but quotient-level curvature magnitudes. Across multiple random seeds, the trace of the $Q$-space Hessian, its Frobenius norm, and its smallest positive eigenvalue show the strongest and most consistent relation to the empirical decay rate, whereas ambient parameter-space condition numbers are less stable and less informative. This is exactly the pattern visible in the pooled scatter plots: parameter-space condition numbers do not organize the local decay rates cleanly, while quotient-level summaries separate the observed regimes much more coherently. The correct interpretation is also conceptually natural. A condition number measures anisotropy, whereas the short-run descent speed in this local experiment is governed more directly by the overall magnitude of local curvature. The evidence therefore supports a refined version of the theoretical message of Section \ref{sec:4}: local dynamics are better organized by quotient-level curvature summaries than by ambient curvature descriptors tied to a redundant coordinate system.

The orbit-equivalent controls provide an additional internal consistency check. For such controls, the quotient coordinate $Q$ is unchanged, so quotient-level curvature is essentially unchanged as well, and the observed short-run dynamics remain nearly identical. The ambient parameter-space curvature can nevertheless vary slightly across these representatives. This is exactly what one expects if predictor evolution is governed by quotient-level geometry, while the ambient parameterization contributes only secondary numerical variation that should not be interpreted as a change in the intrinsic local landscape.

% --- Insert Figure for Section 6.3 ---
\begin{figure}[H]
    \centering
    \includegraphics[width=\linewidth]{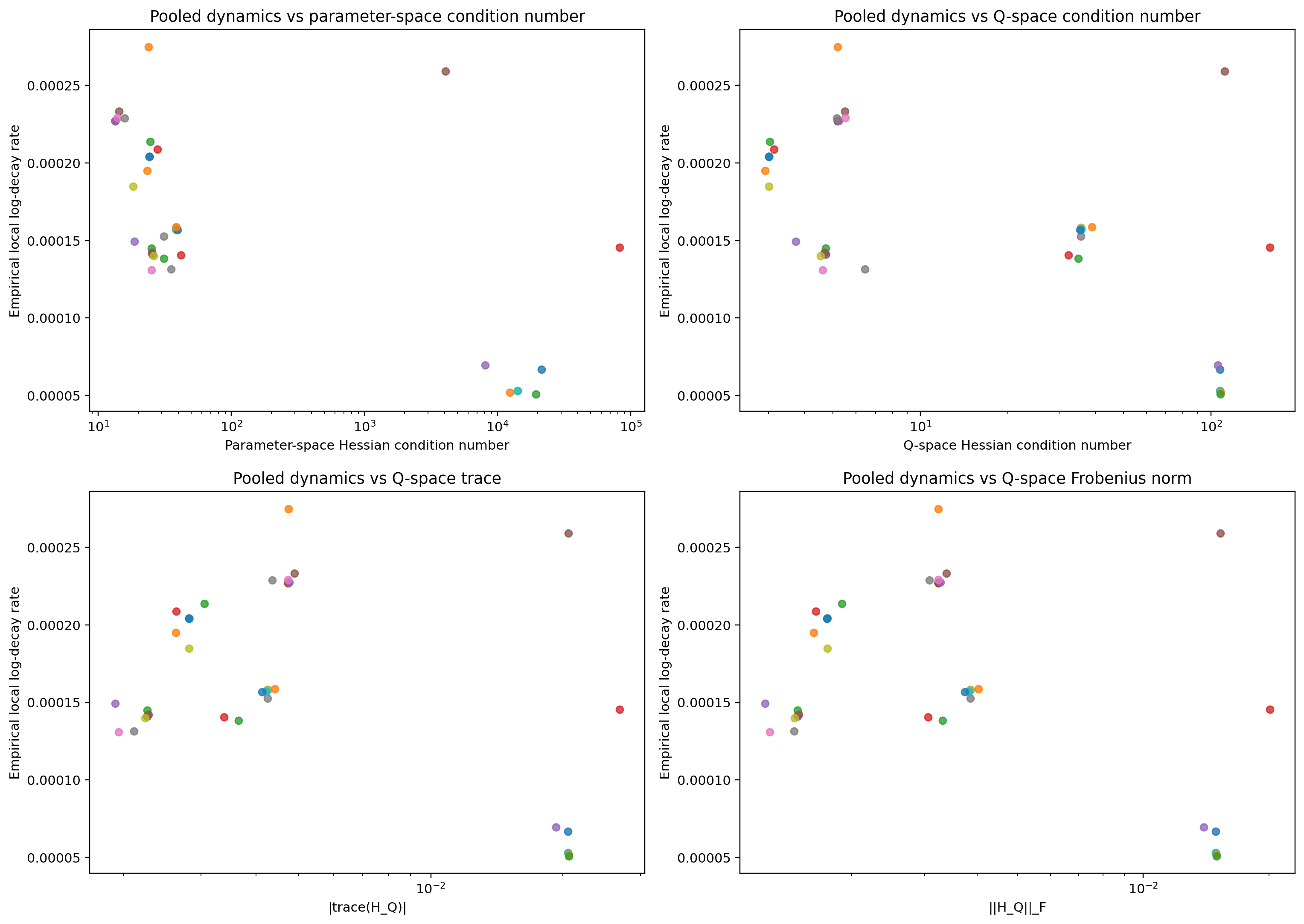}
    \caption{Pooled scatter plots of short-run empirical log-loss decay rates versus various curvature descriptors. Quotient-level quantities (e.g., $Q$-space trace and Frobenius norm) exhibit stronger correlation with local dynamics than ambient parameter-space condition numbers.}
    \label{fig:local-dynamics-curvature}
\end{figure}

\subsection{Quotient-level implicit bias in an underdetermined quadratic model}

Our third experiment addresses the implicit-bias perspective of Section \ref{sec:5}. To make quotient-level selection visible, it is necessary to work in a regime where the data do not determine a unique quotient solution. We therefore move to an underdetermined quadratic regression setting with
\[
n<\dim(\mathrm{Sym}(d))=\frac{d(d+1)}{2},
\]
so that multiple distinct matrices $Q$ can interpolate the training data. The experiment has two complementary parts. First, starting from a fixed trained solution, we construct several orbit-equivalent representatives using permutations and rescalings. Second, we repeat training from multiple random initializations in the underdetermined regime and compare the learned quotient objects across seeds.

The first part confirms the invariance claim that underlies the quotient-level formulation of complexity. Along a fixed orbit, gauge-dependent parameter complexities such as the Euclidean norm and path-like parameter complexity vary substantially across representatives, whereas quotient-level matrix quantities derived from $Q$, including $\|Q\|_F$ and $\|Q\|_*$, remain invariant up to numerical precision. The corresponding scatter plot makes this distinction explicit: ambient parameter complexity changes within the same orbit, while quotient complexity is stable at numerical precision. Thus complexity is not an intrinsic property of a particular representative in parameter space, but of the quotient object itself.

The second part reveals the genuinely quotient-level nature of implicit bias. In contrast to the fully determined setting of the earlier experiment, different random initializations now converge to different near-interpolating matrices $Q$. This is visible most clearly in the learned singular spectra, which vary substantially across seeds, with some runs displaying faster spectral decay, lower effective rank, or greater concentration in leading singular directions than others. At the same time, raw parameter norms do not provide a clean coordinate system for organizing these solutions: similar parameter norms can correspond to substantially different matrices $Q$, and path-like parameter complexities do not map transparently to the structure of the learned predictor. By contrast, matrix-level quantities such as the Frobenius norm, nuclear norm, stable-rank surrogate, and especially the singular spectrum directly describe how the learned quotient solutions differ. In this sense, the quotient-level selection problem becomes visible only after passing from $\theta$-space to $Q$-space.

This evidence supports the main qualitative claim of Section \ref{sec:5}. We do not claim that gradient descent in this setting exactly minimizes one specific convex matrix complexity such as the nuclear norm. The experiment is not designed to identify a universal closed-form variational principle. The point is more basic and more robust: once multiple quotient-feasible solutions exist, the meaningful comparison among learned solutions is naturally expressed in quotient coordinates. Implicit bias, insofar as it is visible here, acts at the level of predictor classes rather than gauge-dependent parameter representatives.

\begin{figure}[H]
    \centering
    \includegraphics[width=\linewidth]{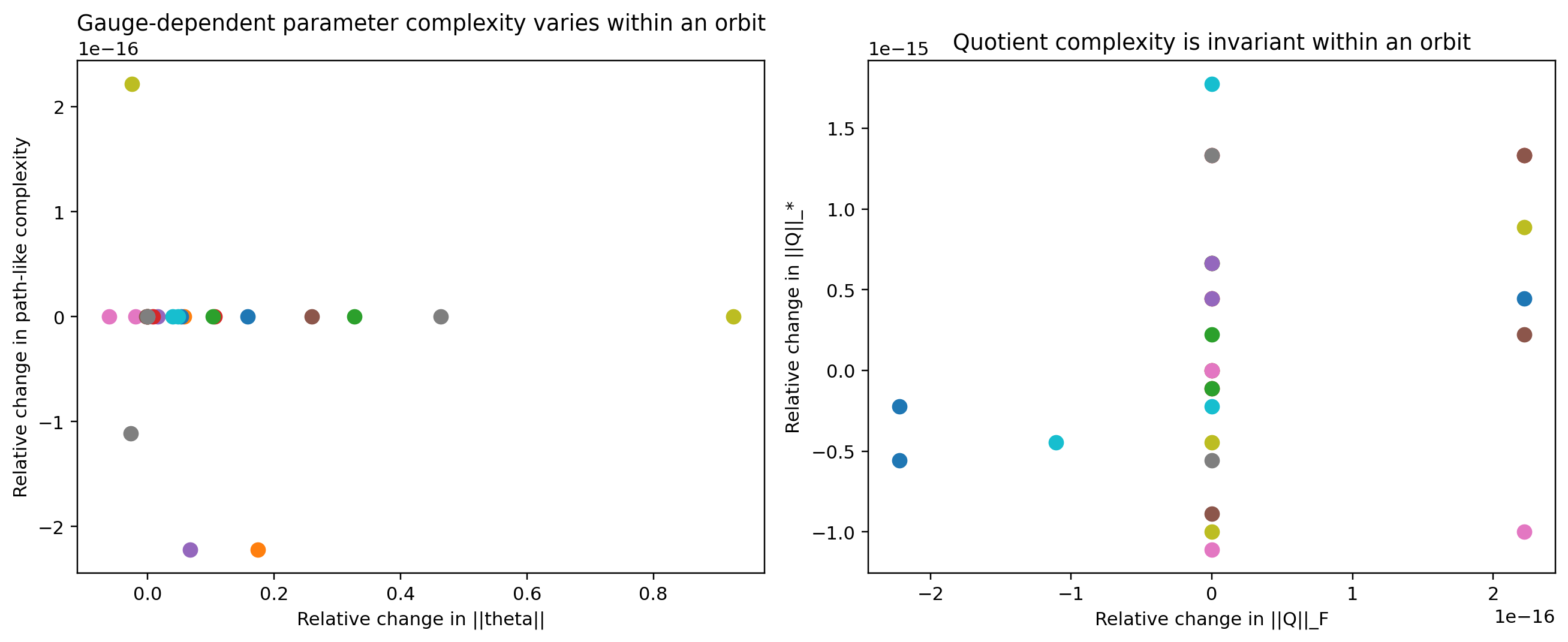}
    \caption{Gauge-dependent parameter complexity varies within a single symmetry orbit (left panel), while quotient-level complexity remains invariant (right panel), illustrating that meaningful complexity resides in the quotient object $Q$.}
    \label{fig:complexity-invariance-orbit}
\end{figure}

\begin{figure}[H]
    \centering
    \includegraphics[width=\linewidth]{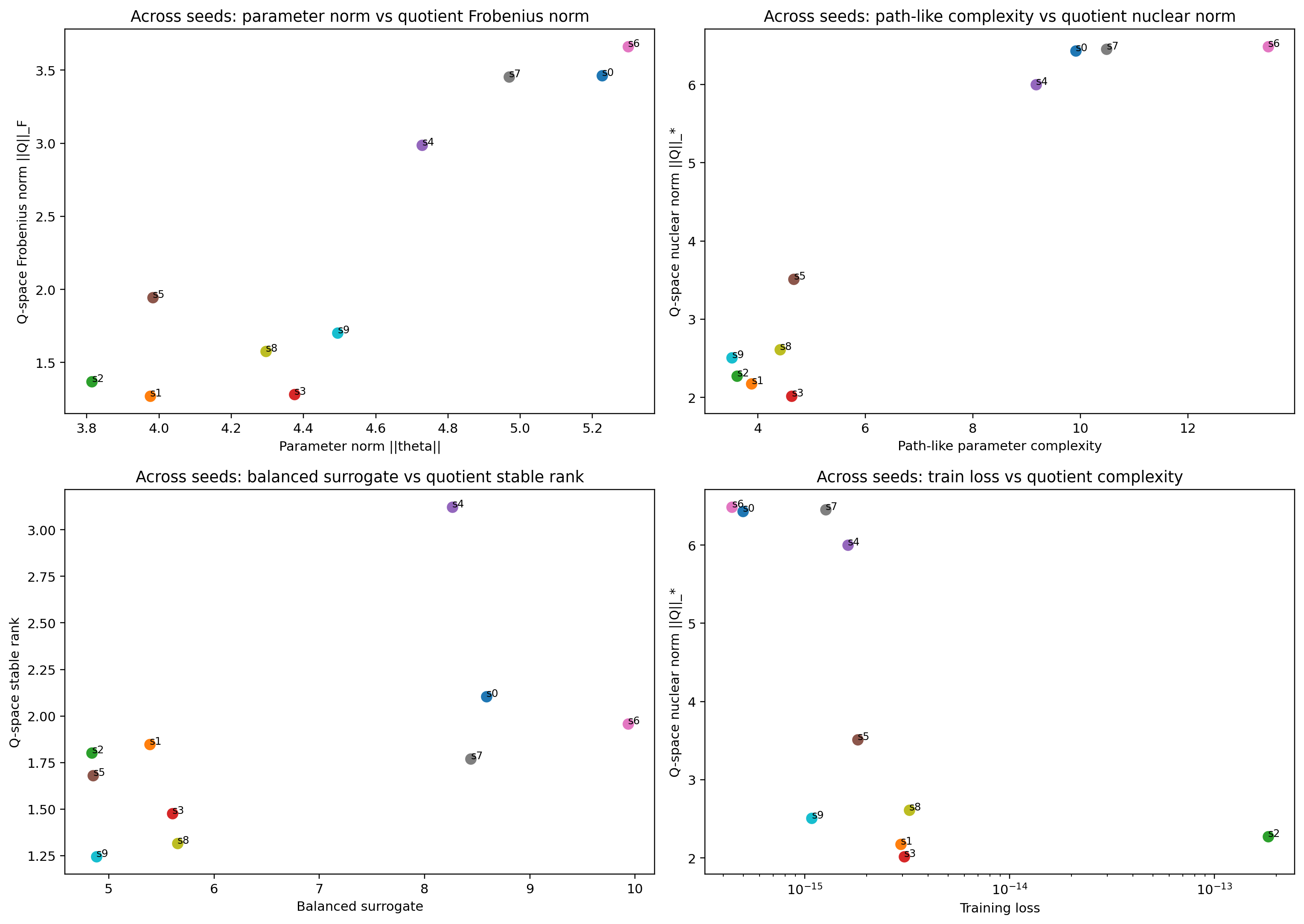}
    \caption{Across random seeds, raw parameter norms and path-like complexities show no coherent structure (left panels), whereas quotient-level quantities such as $\|Q\|_F$, $\|Q\|_*$, stable rank, and training loss reveal organized patterns, supporting a quotient-level view of implicit bias.}
    \label{fig:across-seeds-comparison}
\end{figure}

\begin{figure}[H]
    \centering
    \includegraphics[width=\linewidth]{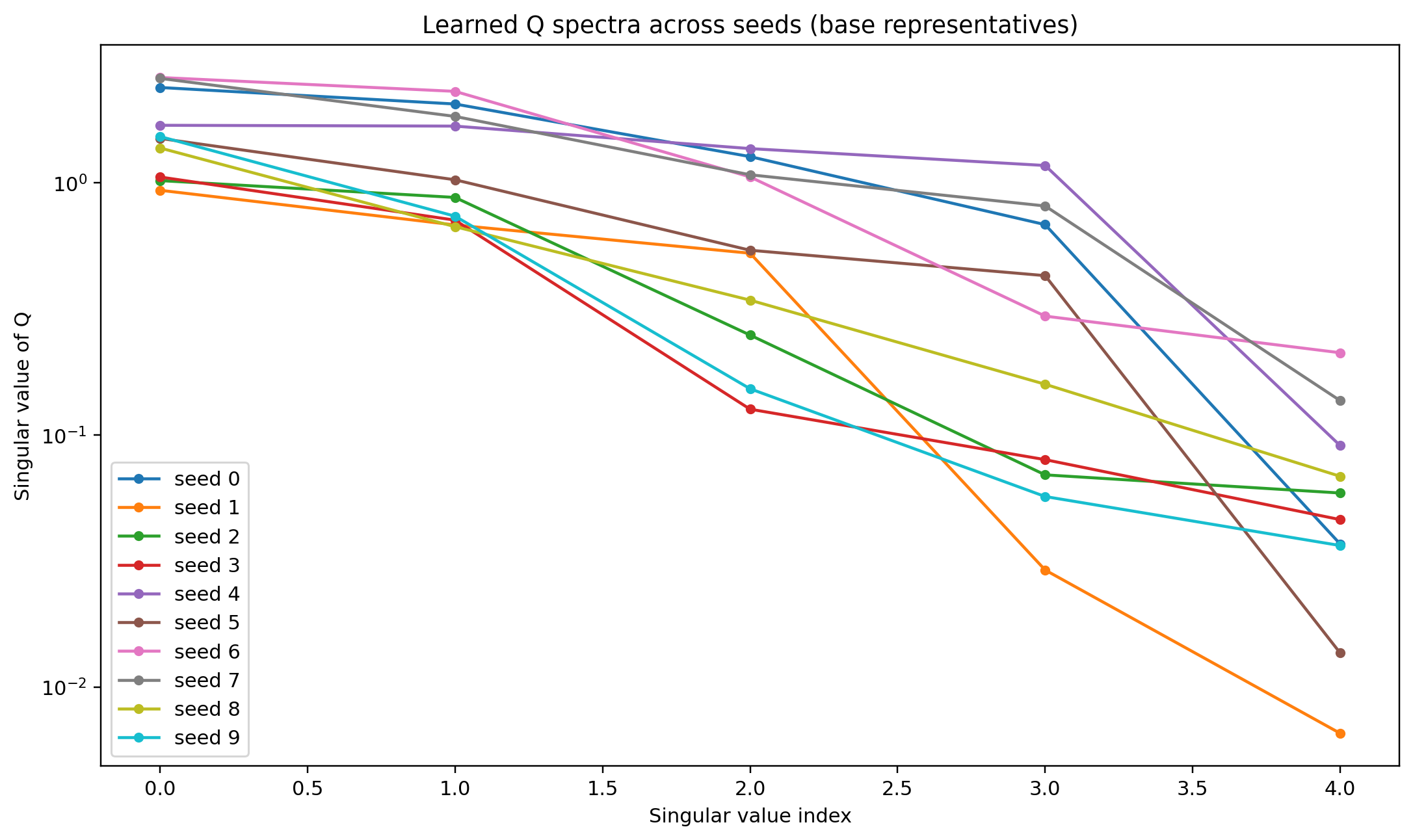}
    \caption{Learned singular spectra of $Q$ across different random initializations. Substantial variation in spectral decay and effective rank demonstrates that implicit bias operates at the level of the quotient object.}
    \label{fig:q-spectra-across-seeds}
\end{figure}

\subsection{Discussion of the numerical evidence}

Taken together, the experiments in this section support the geometric picture developed in the previous sections. First, ambient parameter-space curvature is representation-dependent, whereas quotient-coordinate curvature is intrinsic. Second, local optimization dynamics are better organized by quotient-level curvature magnitudes than by raw ambient condition numbers. Third, in underdetermined regimes the meaningful notion of complexity is attached to the quotient object $Q$, and implicit-bias questions are therefore naturally posed on the quotient manifold rather than in the redundant ambient parameterization. These three observations are exactly the experimentally robust consequences of the theory that the present quadratic model makes fully visible.

\section{Conclusion}
This paper develops a quotient-geometric framework for understanding symmetry, curvature, dynamics, and implicit bias in simple shallow neural networks. The main starting point is that overparameterized networks are not naturally described by their raw parameter space: hidden-unit permutations, rescalings, and related symmetries produce large equivalence classes of parameter vectors that realize the same predictor. Once this redundancy is taken seriously, several familiar phenomena in neural-network theory can be reinterpreted as geometric consequences of working in an excessively large ambient space.

Our first contribution is structural. On a regular set, we characterize the symmetry action and the resulting quotient space of predictor classes. This identifies the correct reduced state space on which locally identifiable predictors live. Our second contribution is geometric. Using the finite-sample realization map, we define a function-induced metric on the quotient and derive an effective curvature notion that removes orbit directions and isolates the intrinsic local geometry of the predictor class. This clarifies why Euclidean Hessians in parameter space can be highly degenerate even when the underlying predictor is not intrinsically flat. Our third contribution is dynamical. We show that gradient flow admits a vertical-horizontal decomposition, where vertical motion is pure gauge and horizontal motion governs function-level evolution. This provides a clean interpretation of optimization after symmetry reduction. Our fourth contribution is conceptual. We formulate implicit bias at the quotient level, arguing that meaningful complexity should be defined on predictor classes rather than on individual parameter representatives.

The quadratic-activation model provides a particularly transparent realization of this program. In that setting, the quotient object can be represented explicitly by a symmetric matrix ($Q$), making the symmetry-reduced geometry directly observable. The numerical experiments confirm the core theoretical predictions that are robustly visible in this model: ambient flatness is representation-dependent, quotient-coordinate curvature is intrinsic, local dynamics correlate more naturally with quotient-level curvature summaries than with raw ambient quantities, and in underdetermined regimes different feasible solutions are most meaningfully compared through matrix-level complexity rather than through gauge-dependent parameter norms.

Several directions remain open. First, it would be valuable to extend the quotient-metric and effective-curvature construction beyond the simplest shallow models to broader classes of homogeneous and partially homogeneous architectures. Second, the quotient-level implicit-bias principle developed here is qualitative; a sharper asymptotic characterization, analogous to max-margin or minimum-complexity results in other settings, would strengthen the theory considerably. Third, while our experiments make the quotient structure fully visible in quadratic models, developing numerically stable quotient-level curvature estimators for more general nonlinear shallow networks remains an important challenge. More broadly, our results suggest that many questions about optimization and generalization in overparameterized learning may be better posed after symmetry reduction. From this viewpoint, the quotient space of predictor classes is not merely a technical convenience, but the natural geometric arena for the theory of symmetric neural networks.

% Acknowledgements and Disclosure of Funding should go at the end, before appendices and references

\acks{This research received no external funding.}

% Manual newpage inserted to improve layout of sample file - not
% needed in general before appendices/bibliography.

\newpage

{\noindent \em Remainder omitted in this sample. See http://www.jmlr.org/papers/ for full paper.}

\vskip 0.2in
\bibliography{sample}

\end{document}